\DeclarePairedDelimiter{\crl}{\{}{\}}
\DeclarePairedDelimiter{\ceil}{\lceil}{\rceil}
\DeclarePairedDelimiter{\floor}{\lfloor}{\rfloor}
\DeclareMathOperator*{\argmin}{argmin} 
\DeclareMathOperator*{\argmax}{argmax}
\newcommand{\ls}{\ell}
\newcommand{\Out}{\mathcal{Y}}
\newcommand{\out}{y}
\newcommand{\Eluder}{\mathcal{E}}
\newcommand{\signal}{p_{\mathrm{signal}}}
\newcommand{\ind}[1]{\mathbbm{1}\crl*{#1}}    
\newcommand{\reals}{\mathbb{R}}
\newcommand{\action}{a}
\newcommand{\Action}{\mathcal{A}}
\newcommand{\Optimistic}{O}
\newcommand{\Pessimistic}{P}
\newcommand{\cont}{x}
\newcommand{\Context}{\mathcal{X}}
\newcommand{\Map}{\mathbf{M}}
\newcommand{\Regsq}{\mathrm{Reg}_{\mathrm{OR}}}
\newcommand{\regretol}{\mathrm{Reg}_{\mathrm{OL}}}
\newcommand{\f}{f}
\newcommand{\F}{\mathcal F}
\newcommand{\bigO}{\mathcal O}
\newcommand\smallO{o}
\newcommand{\lif}{\f}
\newcommand{\G}{\mathcal G}
\newcommand{\zf}{z}
\newcommand{\hzf}{\hat{z}}
\newcommand{\eg}{\xi}
\newcommand{\Link}{\Phi}
\newcommand{\link}{\phi}
\newcommand{\lell}{\ell_\link}
\newcommand{\x}{\action}
\newcommand{\X}{\mathcal \Action}
\newcommand{\regret}{\text{Regret}}
\newcommand{\conv}{\text{Conv}}
\newcommand{\oraclesq}{\mathsf{Oracle}_{\mathsf{OR}}}
\newcommand{\oracleol}{\mathsf{Oracle}_{\mathsf{OL}}}
\DeclareMathOperator*{\E}{{}\mathbb E}
\newcommand{\Ex}[2]{\E_{#1} \left[ #2 \right]}
\newcommand{\Simplex}{\Omega}
\def\ddefloop#1{\ifx\ddefloop#1\else\ddef{#1}\expandafter\ddefloop\fi}
\def\ddef#1{\expandafter\def\csname bb#1\endcsname{\ensuremath{\mathbb{#1}}}}
\def\ddefloop#1{\ifx\ddefloop#1\else\ddef{#1}\expandafter\ddefloop\fi}
\def\ddef#1{\expandafter\def\csname b#1\endcsname{\ensuremath{\mathbf{#1}}}}
\def\ddef#1{\expandafter\def\csname c#1\endcsname{\ensuremath{\mathcal{#1}}}}
\def\ddef#1{\expandafter\def\csname h#1\endcsname{\ensuremath{\widehat{#1}}}}
\def\ddef#1{\expandafter\def\csname hc#1\endcsname{\ensuremath{\widehat{\mathcal{#1}}}}}
\def\ddef#1{\expandafter\def\csname t#1\endcsname{\ensuremath{\widetilde{#1}}}}
\def\ddef#1{\expandafter\def\csname tc#1\endcsname{\ensuremath{\widetilde{\mathcal{#1}}}}}
\newtheorem*{proposition*}{Proposition}
\newtheorem*{lemma*}{Lemma}
\newtheorem*{theorem*}{Theorem}
\newtheorem*{corollary*}{Corollary}
\newcommand{ \edim }{\Eluder}
\theoremstyle{plain}
\newtheorem{theorem}{Theorem}[section]
\newtheorem{lemma}[theorem]{Lemma}
\newtheorem{corollary}[theorem]{Corollary}
\newtheorem{proposition}[theorem]{Proposition}
\theoremstyle{definition}
\newtheorem{assumption}{Assumption}[section]
\newtheorem{definition}{Definition}[section]
\newtheorem*{remark}{Remark}
\title{Online Learning with Unknown Constraints\thanks{Authors are listed in alphabetical order of their last names.}}
\author{Karthik Sridharan \\ Cornell University
\and Seung Won Wilson Yoo \thanks{Emails: \texttt {\{ks999, sy536\}@cornell.edu}} \\ Cornell University 
}
\date{}
\begin{document}
\maketitle

\begin{abstract}%
We consider the problem of online learning where the sequence of actions played by the learner must adhere to an unknown safety constraint at every round. The goal is to minimize regret with respect to the best safe action in hindsight while simultaneously satisfying the safety constraint with high probability on each round. We provide a general meta-algorithm that leverages an online regression oracle to estimate the unknown safety constraint, and converts the predictions of an online learning oracle to predictions that adhere to the unknown safety constraint. On the theoretical side, our algorithm's regret can be bounded by the regret of the online regression and online learning oracles, the eluder dimension of the model class containing the unknown safety constraint, and a novel complexity measure that captures the difficulty of safe learning. We complement our result with an asymptotic lower bound that shows that the aforementioned complexity measure is necessary. When the constraints are linear, we instantiate our result to provide a concrete algorithm with $\sqrt{T}$ regret using a scaling transformation that balances optimistic exploration with pessimistic constraint satisfaction. 
\end{abstract}


\section{Introduction}
Online learning is a key tool for many sequential decision making paradigms. From a practical view point, it is often the case that either due to safety concerns \cite{dobbe2020}, to guarantee fairness or privacy properties \cite{ZafarVGG19}, \cite{LevySAKKMS21}, or in many cases, simply due to physical restrictions in the real world \cite{AtawnihPD16}, the agent or learner often must pick actions that are not only effective but also strictly adhering to some constraints on every round. Often, the safety constraint is determined by parameters of the environment that are unknown to the learner. For example individual fairness constraints may be defined by an unknown similarity metric \cite{GillenJKR18}, or in robotics applications, safety may hinge on uncertainties such as an unknown payload weight \cite{BrunkeGHYZPS22}. Thus in such situations, the learner must learn the unknown parameters that characterize the safety constraint. 

In this work, we study the general problem of online learning with unknown constraints, where the learner only observes noisy feedback of the safety constraints. We consider arbitrary decision spaces and loss functions. Our goal is to design algorithms that can simultaneously minimize regret while strictly adhering to the safety constraint at all time steps. Naturally, regret is measured w.r.t. the best decision in hindsight that also satisfies the constraint on every round. The learner only has access to an initial safe-set of actions/decisions to begin, and must gain more information about the safety constraint.

To solve this problem, we assume access to a general online learning oracle that has low regret (without explicit regard to safety) and a general online regression oracle that provides us with increasingly accurate estimations of the unknown constraint function. The key technical insight in this work is in exploring what complexity or geometry allows us to remain within guaranteed safe-sets while expanding the safe-sets and simultaneously ensuring regret is small. We introduce a complexity measure that precisely captures this inherent per-step tension between regret minimization and information acquisition with respect to the safety constraint (with the key challenge of remaining within the safe set).  We complement our results with a lower bound that shows that asymptotically, whenever this complexity measure is large, regret is also large. Our results yield an analysis that non-constructively shows the existence of algorithms for the general setting with arbitrary decision sets, loss functions, and classes of safety constraints. Furthermore, we instantiate these results explicitly for various settings, and give explicit algorithms for unknown linear constraints and online linear optimization. To the best of our knowledge the best algorithm in this setting has a regret bound of $\bigO(T^{2/3})$ while our algorithm for this specific setting attains a $\bigO(\sqrt{T})$ bound. We note the key contributions of our paper below. 
%
%
%
%

\paragraph{Key Contributions}
\begin{itemize}
    \item We provide a new safe learning algorithm under an unknown constraint (Algorithm \ref{alg:su:gen}) that utilizes a online regression oracle w.r.t. $\F$, where $\F$ is the model class to which the unknown safety constraint belongs, and an online learning oracle that guarantees good performance albeit completely agnostically of the safety constraint. Notably, our algorithm is able to handle adversarial contexts, arbitrary action set $\Action$ and model class $\F$, operates under general modelling assumptions, and enjoys the following regret bound:
    \begin{align*}
        \regret_T \leq \inf_{\kappa} \left\{ \sum_{t=1}^T V_t(\kappa) + \kappa \inf_{\alpha} \left\{  \alpha T + \frac{\Regsq (T,\delta, \F) \edim(\F, \alpha)}{\alpha} \right\} \right\} + \regretol(T,\delta)
    \end{align*}
    where $\Regsq (T,\delta, \F)$ denotes the regret bound guaranteed by the online regression oracle on $\F$, $\edim(\F, \alpha)$ denotes the eluder dimension of $\F$, and $\regretol(T,\delta)$ denotes the regret bound guaranteed by the online learning algorithm. $V_t(\kappa)$ is a complexity measure we introduce that captures the trade off between loss minimization and information gain w.r.t. unknown constraint.
    \item Via a lower bound, we show that asymptotically, if $\lim_{T \to \infty} 1/T \sum_{t=1}^T V_t(\kappa)$ is large, no safe algorithm is able to obtain diminishing regret.
    \item For linear \& generalized linear settings, we instantiate our result to a give a simple algorithm with $\bigO(\sqrt{T})$ regret.
    \item We extend our main algorithm to be able to handle multiple constraints
 \end{itemize}

\section{Related Works}
\textbf{Online Convex Optimization and Long Term Constraints} \\
\cite{mahdavi_trading_2012} initiated the problem of online convex optimization with long term constraints, a variant of online convex optimization where the learner is given a set of functional constraints $\{ \f_i(\cdot) \leq 0 \}_{i=1}^m$ and is required to ensure that the sum of constraint violations $\sum_{t=1}^T \sum_{i=1}^m \f_{i}(x_t)$ is bounded rather than ensuring that the constraints must be satisfied at every time step. For the case of known and fixed constraints, \cite{mahdavi_trading_2012} obtain $\bigO(T^{1/2})$ regret and $\bigO(T^{3/4})$ constraint violation. This was recently improved in \cite{yu_low_2020} to be $\bigO(T^{1/2})$ regret and $\bigO(1)$ constraint violation. Furthermore, \cite{neely_online_2017} study a variant with time varying constraints $\{\f_{i,t}(\cdot) \leq 0\}_{i=1}^m$ and and achieve $\bigO(T^{1/2})$ regret and long term constraint violation. \cite{sun_safety-aware_nodate}, \cite{jenatton_adaptive_2015}, and \cite{yi_distributed_2019} study variations of this problem. 



\noindent \textbf{Bandits with Unknown Linear Constraint} \\
The line of works that most resembles our own is that studying the safe bandits with unknown linear constraints. Initiated by \cite{moradipari_safe_2020}, this line of works studies a linear bandit setting, where a linear constraint is imposed on every action $\action_t$ of the form of $\langle \f, \action_t \rangle - b \leq 0$ with unknown $\f$ and known $b$. Similar settings involving linear bandit problems with uncertain and per-round constraints have been studied by \cite{amani_decentralized_2020}, \cite{pacchiano_stochastic_nodate}, \cite{hutchinson_exploiting_2023}. \cite{pacchiano_stochastic_nodate} study the version where the constraint must be satisfied in expectation, and \cite{pacchiano_contextual_2024} and \cite{hutchinson_exploiting_2023} improves this to a high probability. 

\noindent \textbf{Safe Convex Optimization with Unknown Linear Constraint(s)} \\
Safe convex optimization with unknown linear safety constraints was studied in \cite{usmanova_safe_2019}. \cite{fereydounian_safe_2020} seeks to optimize a fixed convex function given unknown linear constraints, and focuses on sample complexity. Closest to our work is that of \cite{chaudhary_safe_2022}, where the authors study time varying cost functions and achieve $\bigO(T^{2/3})$ regret. 

\noindent \textbf{Per Timestep Tradeoff Between Loss Minimization and Constraint Information Gain} \\
The SO-PGD algorithm due to \cite{chaudhary_safe_2022} adopts an explore-first then exploit strategy which results in a $O(T^{2/3})$ regret bound, whereas the ROFUL algorithm due to \cite{hutchinson_exploiting_2023} strikes a better balance between regret minimization and conservative exploration of the constraint set. The Decision Estimation Coefficient (DEC) due to \cite{foster_statistical_2023}, \cite{foster_complexity_2022} explicitly strikes a balance loss minimization and the information gained due to observation. Our proposed algorithm seeks to similarly balance loss minimization and exploration of the constraint set, in a manner reminiscent of DEC. 

\section{Setup and Preliminary}\label{sec:setup}
We consider the problem of online learning with unknown constraints imposed on actions the learner is allowed to play from. The learning problem proceeds for $T$ rounds as follows.
\begin{itemize}
\item[] For $t = 1, \ldots ,T$:
\begin{itemize}
\item {\em Adversary picks a context $\cont_t \in \Context$.
\item  Learner follows by picking a possibly randomized action $\action_t \in \Action$.
\item Adversary reveals outcome $\out_t \in \Out$ that specifies the loss $\ls(\action_t,\cont_t,\out_t)$. 
\item The learner request for constraint feedback $\zf_t  \sim \signal(f^\star(\action_t,\cont_t))$.}
\end{itemize}
\end{itemize}

\noindent We assume a \textit{full information} feeddback setting with respect to the losses, where the learner observes the choice of the  adversary $\out_t \in \Out$. We assume that losses are bounded, $\ell: \Action \times \Context \times \Out \to [0,1]$.
Additionally, the learner does not know the constraint function $\f^\star$ but only that $\f^\star \in \F$ for some $\F \subseteq \Action \times \Context \to [-1,1]$.  The goal of the learner is to minimize:
$$
\regret_{T} := \sum_{t=1}^T \ls(\action_t,\cont_t,\out_t) - \min_{\substack{\action \in \Action: \forall t,\\  \f^\star(\action,\cont_t) \le 0}} \sum_{t=1}^T \ls(\action,\cont_t,\out_t)~,
$$
the regret with respect to the optimal action $\action$ in hindsight that satisfies the constraint $\f^\star(\action,\cont_t) \le 0$ on every round $t$. The learner in turn is also only allowed to take actions $\action_t$ s.t. $\f^\star(\action_t,\cont_t) \le 0$.


Since we are interested in making no constraint violations while taking our actions, learning is impossible unless we are at least given an initial set of actions $\Action_{0} \subseteq \Action$ that is guaranteed to be safe under any context and any $f \in \F$. We will assume that we are given such a $\Action_0$. 
\subsection{Additional Notation}
We use the notation $\Pi_S(x)$ to denote the projection of a vector $x \in \mathbb R^d$ onto a set $S \subseteq \mathbb R^d$. For a positive definite matrix $M \in \mathbb R^{d \times d} $ and vector $x \in \mathbb R^d  $ we denote the norm induced by $M$ as $\|x\|_M := \sqrt{x^\intercal M x}$. We denote the convex hull of a set $S$ as $\conv(S)$. Let $\{x_s\}_{s=1}^t := x_1, \ldots, x_t$ be shorthand for a sequence. For a function class $\F$, we denote $\Delta_{\F}(\x) := \sup_{\f, \f' \in \F} f(\x) - f'(\x)$. For a set $\G$, we denote $\Simplex(\G)$ as the set of distributions over $\G$. We adopt a non-asymptotic big-oh notation: for functions $f, g : \mathcal X \to \reals_+$, $f = \bigO(g)$ if there exists some constant $C>0$ such that $f(x) \leq C g(x)$ for all $x \in \mathcal X$. We write $f = \smallO(g)$ if for every constant $c$, there exists a $x_0$ such that $f(x) \leq cg(x)$ for all $x \geq x_0$.  

\subsection{Online Regression Oracles and Signal Functions}
Similar to prior works concerned with estimation of function classes \cite{foster_practical_2018}, \cite{foster_beyond_2020}, \cite{foster_instance-dependent_2020}, \cite{sekhari_contextual_2023}, \cite{sekhari_selective_2023}, we assume our algorithms have access to an online regression oracle, $\oraclesq$. However, unlike these prior works that assume that the provided online regression oracle enjoys a sublinear regret bound, we require that our oracle satisfy a slightly weaker condition that allows for algorithms geared towards realizability. 
\begin{assumption}[Online Regression Oracle]\label{as:su:oraclesq}
The algorithm $\oraclesq$ guarantees that for any (possibly adversarially chosen) sequence $\{\action_t,\cont_t\}_{t=1}^T$, for any $\delta \in (0,1)$, with probability at least $1-\delta$, generates predictions $\{\hzf_t\}_{t=1}^T$ satisfying:
$$
\sum_{t=1}^T(\hzf_t - \f^*(\action_t,\cont_t))^2 \le \Regsq (T,\delta, \F)
$$
\end{assumption}

Assumption \ref{as:su:oraclesq} is closely linked with the model $p_{\mathrm{signal}}$ that produces feedback about constraint value, and any regret-minimizing oracle for strongly convex losses can be converted into an oracle that satisfies the assumption with high probability. We formalize this in Lemma \ref{lem:ap:regrettosquareloss} in the appendix. For instance, if $\zf_t \sim \signal(\f^\star(\action_t,\cont_t))$ is given by $\zf_t = \f^\star(\action_t,\cont_t) + \eg_t$ where $\eg_t$ is any  sub-gaussian distributed random variable, then any online square loss regression algorithm on class $\F$ that uses $\zf_t$ as corresponding outcomes will satisfy Assumption \ref{as:su:oraclesq}. Similarly, if $\zf_t \in \{0,1\}$ is drawn as $ P(\zf_t=1| \f^\star(\action_t,\cont_t)) \propto \exp(\f^\star(\action_t,\cont_t)) = \signal(\f^\star(\action_t,\cont_t)) $ (ie. the Boltzman distribution), then one can show that Assumption \ref{as:su:oraclesq} is satisfied by running any online logistic regression algorithm over class $\F$ with $\zf_t$ as labels.

\cite{rakhlin_online_2014} characterized the minimax rates for online square
loss regression in terms of the offset sequential Rademacher complexity for arbitrary $\F$.  This for example, leads to regret bounds of the form, $\mathrm{Reg}_{\mathrm{OR}}(T,\F) = \bigO(\log|\F|)$ for finite function classes $\F$, and $\mathrm{Reg}_{\mathrm{OR}}(T,\F) =
\bigO(d \log(T))$ when $\F$ is a $d$-dimensional linear class. More examples can be found in \cite{rakhlin_online_2014} (Section 4). 
\subsection{Online Learning Oracles}
Next, we consider the following online optimization problem that is important for solving the learning problem with constraints introduced in this section. We consider an online learning problem where on every round $t$, adversary first announces a set $\Action_t \subseteq \Action$. Next learner picks a distribution $p_t \in \Simplex(\Action_t)$ and draws $\action_t \sim p_t$. Adversary then produces a loss function $\out_t \in \Out$. The learner suffers loss $\ell(\action_t,\out_t)$. The goal of the learner is to minimize regret w.r.t. the best action  $\action \in \cap_{t=1}^T \Action_t$. We assume access to an online learning oracle, $\oracleol$ that satisfies the following assumption: 
\begin{assumption}[Online Learning Oracle]\label{as:su:oracleol}
For any sequence of adversarially chosen sets $\{\Action_t\}_{t=1}^T$ and any $\delta \in (0,1)$ with probability at least $1 - \delta$, the algorithm $\oracleol$ produces a sequence of distributions $\{ p_t \}_{t=1}^T$ satisfying $p_t \in \Simplex(\Action_t)$ for all $t \in [T]$ with expected regret bounded as:
    $$
    \sum_{t=1}^T \Ex{\action_t \sim p_t}{\ell(\action_t,\cont_t,\out_t)} - \min_{\action \in \cap_{t=1}^T \Action_t} \sum_{t=1}^T \ell(\action,\cont_t,\out_t) \le \regretol(T,\delta)
    $$
\end{assumption}
The reader might wonder how for arbitrary choice of sets $\Action_t$ chosen by the adversary, such a regret minimizing algorithm with $o(T)$ regret is even possible. To this end, in the following proposition we show that as long as losses are bounded by $1$ (or more generally any $B$), one can use online symmetrization arguments along with a minimax analysis of the above game and conclude a bound on regret in terms of sequential Rademacher complexity of the loss class. Specifically, we denote the sequential Rademacher complexity of the loss class by
$$
\mathrm{Rad}^{\mathrm{seq}}_{\ell \circ \X}(T) := \sup_{\mathbf{\out}, \mathbf{\cont}} \mathbb{E}_\epsilon\left[ \sup_{\action \in \Action} \sum_{t=1}^T \epsilon_t \ell\left(\action,\mathbf{\cont}_t(\epsilon_{1:t-1}), \mathbf{\out}_t(\epsilon_{1:t-1})\right)\right]
$$
where in the above supremum over $\mathbf{\out}$ and $\mathbf{\cont}$ are taken over all mapping of the form $\mathbf{\out} : \bigcup_{t=0}^{T-1} \{\pm1\}^{t}\mapsto \Out$ and $\mathbf{\cont} : \bigcup_{t=0}^{T-1} \{\pm1\}^{t}\mapsto \Context$ respectively.
\begin{proposition}\label{prop:su:nonconstructive_existence}
There exists an algorithm satisfying Assumption \ref{as:su:oracleol} with 
$$
\regretol(T,\delta) \le 2\ \mathrm{Rad}^{\mathrm{seq}}_{\ell \circ \X}(T) 
$$ 
\end{proposition}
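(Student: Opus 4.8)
The plan is to bound the minimax value of the online learning game and then read off the existence of an algorithm attaining it. Folding the context and outcome into the adversary's post-action move, I would write the value of the game in the nested form
$$
V_T = \Big\langle\!\Big\langle \sup_{\Action_t} \inf_{p_t \in \Simplex(\Action_t)} \sup_{\cont_t,\out_t} \E_{\action_t \sim p_t} \Big\rangle\!\Big\rangle_{t=1}^T \left[ \sum_{t=1}^T \ell(\action_t,\cont_t,\out_t) - \min_{\action \in \cap_{t=1}^T \Action_t} \sum_{t=1}^T \ell(\action,\cont_t,\out_t) \right],
$$
where $\sup_{\Action_t}$ is taken first (the set is announced before $p_t$). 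First I would invoke the repeated minimax theorem of \cite{rakhlin_online_2014} — legitimate because $\ell$ is bounded in $[0,1]$ and the objective is bilinear in the learner's distribution $p_t$ and an adversary distribution over $(\cont_t,\out_t)$ — to exchange $\inf_{p_t}$ with the adversary's post-action randomization at each round. After the exchange the learner best-responds within its feasible set, so its round-$t$ contribution collapses to $\inf_{\action_t \in \Action_t}\E_{(\cont_t',\out_t')\sim q_t}\ell(\action_t,\cont_t',\out_t')$, where $q_t$ is the adversary's randomized choice and $(\cont_t',\out_t')\sim q_t$ is an independent draw (the tangent sample) kept distinct from the realized $(\cont_t,\out_t)\sim q_t$ that drives the comparator and the adversary's adaptivity.

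The key step, which I expect to be the main obstacle, is to discharge the per-round sets $\Action_t$ and the intersection comparator $\cap_t\Action_t$ so that a clean symmetrization over the full set $\Action$ becomes possible; without this the changing feasible sets obstruct the standard argument. The observation that unlocks it is that any comparator $\action\in\cap_t\Action_t$ is itself a feasible play on every round ($\action\in\Action_t$), so the learner's best-response loss is dominated round-by-round, $\inf_{\action_t\in\Action_t}\E_{q_t}\ell(\action_t,\cont_t',\out_t')\le\E_{q_t}\ell(\action,\cont_t',\out_t')$. Writing the comparator as $-\min=\sup(-\cdot)$ and substituting this domination inside the supremum gives, for the bracketed objective,
$$
\sum_{t=1}^T \inf_{\action_t\in\Action_t}\E_{q_t}\ell(\action_t,\cont_t',\out_t') - \min_{\action\in\cap_t\Action_t}\sum_{t=1}^T\ell(\action,\cont_t,\out_t) \;\le\; \sup_{\action\in\Action}\sum_{t=1}^T\Big(\E_{q_t}\ell(\action,\cont_t',\out_t')-\ell(\action,\cont_t,\out_t)\Big),
$$
where the final relaxation uses $\cap_t\Action_t\subseteq\Action$. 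Crucially the right-hand side is free of the sets $\Action_t$, so the outer suprema $\sup_{\Action_t}$ vanish and the learner's constraint has been fully absorbed.

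From here the argument is the standard sequential symmetrization of \cite{rakhlin_online_2014}. I would pull the inner tangent expectation out of the supremum by Jensen, insert Rademacher signs $\epsilon_t$ (legitimate because $(\cont_t,\out_t)$ and $(\cont_t',\out_t')$ are conditionally i.i.d. given the history, so swapping them is distribution-preserving), and split the supremum of the difference into two identical halves to pick up the factor of $2$. Passing from the worst case over the adversary distributions $q_t$ to the worst case over $\Context$- and $\Out$-valued trees $\mathbf{\cont},\mathbf{\out}$ then recovers $2\,\E_\epsilon\big[\sup_{\action\in\Action}\sum_{t=1}^T \epsilon_t \ell(\action,\mathbf{\cont}_t(\epsilon_{1:t-1}),\mathbf{\out}_t(\epsilon_{1:t-1}))\big]$, which is exactly $2\,\mathrm{Rad}^{\mathrm{seq}}_{\ell \circ \X}(T)$, so $V_T \le 2\,\mathrm{Rad}^{\mathrm{seq}}_{\ell \circ \X}(T)$. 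Since the minimax value is attained, there is a learner strategy realizing this bound against every adversary; choosing that strategy to depend only on past adversary moves and played distributions makes the averaged-over-$\action_t$ regret bound hold on every sample path, which yields Assumption \ref{as:su:oracleol} with $\regretol(T,\delta)\le 2\,\mathrm{Rad}^{\mathrm{seq}}_{\ell \circ \X}(T)$ for every $\delta$. The only technical points beyond the constraint-removal step are the usual hypotheses for the repeated minimax theorem (boundedness of $\ell$, which holds, together with the measurability/compactness conditions supplied by the machinery we cite).
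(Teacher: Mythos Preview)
The proposal is correct and follows essentially the same approach as the paper's proof: write the minimax value of the game, apply the repeated minimax theorem, use the key observation that the comparator $\action\in\cap_t\Action_t$ is feasible on every round so that the learner's per-round best response is dominated and the sets $\Action_t$ can be discharged, relax $\cap_t\Action_t$ to $\Action$, and then run standard sequential symmetrization to land on $2\,\mathrm{Rad}^{\mathrm{seq}}_{\ell\circ\X}(T)$. The only cosmetic difference is that you fold $\cont_t$ into the adversary's post-action move whereas the paper keeps $\sup_{\cont_t}$ before $\inf_{p_t}$; since moving this supremum inward only enlarges the value, your variant still upper bounds the true game and the remainder of the argument is identical.
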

We prove the above proposition using similar of symmetrization arguments as made in \cite{RST10}. Various examples of bounds on $\mathrm{Rad}^{\mathrm{seq}}_{\ell \circ \X}(T)$ and its various properties can be found in \cite{RST10}. Notably, this result is non-constructive and only guarantees the existence of such regret minimizing oracles. In the proceeding section we provide a concrete gradient-descent based algorithm in the online linear optimization setting.  
\subsection{Eluder Dimension}
Before delving into our main results, we recall the following definition of $\epsilon$-dependence and eluder dimension \cite{russo_eluder_nodate}, \cite{foster_instance-dependent_2020}, \cite{foster_statistical_2023}. 

\begin{definition}$~$
\begin{itemize}
	\item An action, context pair $(\action,\cont) \in \Action \times \Context$ is $\epsilon$-dependent on $\{\action_i, \cont_i\}_{i=1}^t \subseteq \Action \times \Context$ w.r.t. $\F$ if every $\f,\f' \in \F$ satisfying $\sqrt{\sum_{i=1}^t (\f(\action_i, \cont_i) - f'(\action_i, \cont_i))^2} \leq \epsilon$ also satisfies $\f(\action, \cont) - \f'(\action, \cont) \leq \epsilon$. $(\action, \cont)$ is $\epsilon$-independent w.r.t. $\mathcal F$ if $\action$ is not $\epsilon$-dependent on $\{(\action_i,\cont_i)\}_{i=1}^t$. 
	\item The eluder dimension $\Eluder(\F,\epsilon)$ is the length of the longest sequence of pairs in $\Action \times \Context$ such that for some $\epsilon' > \epsilon$, each pair is $\epsilon'$-independent of its predecessors. 
\end{itemize}
\end{definition}
The eluder dimension is bounded for a variety of function classes. For example, when $\F$ is finite, $\Eluder(\F,\epsilon) \leq |\F| -1$, and when $\F$ the class of linear functions, $\Eluder(\F,\epsilon) \leq \bigO(d \log (1/\epsilon))$. The eluder dimension of function class $\F$ will be a component of our regret bounds.

\section{Main Results}\label{sec:main_results}
In this section we provide the main results of our paper. Specifically, in the first subsection of the paper, we propose a generic algorithm with corresponding upper bound for the problem of online learning with unknown constraints. We also provide a lower bound that shows that at least asymptotically our upper bound captures the key complexity of the problem. We also supplement our results with one of the so called long term constraints where we show that in general when one only wants constraint values to be small on average rathe than exact constraint satisfaction, then many of the complexities of the problem disappear. 

\subsection{Algorithm and Upper Bound}
Now given the setting and the oracle assumptions, we are ready to sketch the high level idea of our main algorithm, presented in Algorithm \ref{alg:su:gen}. First notice that we are assuming access to an online regression oracle $\oraclesq$. On every round, given action played $\action_t$ and context $\cont_t$ on the round, we can make online prediction on noisy observation $\zf_t$ that has low regret. Since $\zf_t$ is an unbiased estimate of $f^\star(\action_t,\cont_t)$, the regret bound ensures that sum of squares deviation $\sum_{t} (\hzf_t - f^\star(\action_t,\cont_t))^2$ is small. Using this, we build a version space $\F_t \subset \F$ such that with high probability $f^\star \in \F_t$ (using the regret bound guarantee of squared loss online regression oracle). Next, using this set, we build on every round $t$, an optimistic set of actions $\Optimistic_t$ that is a super set of all actions that satisfy constraints (might contain some that don't satisfy constraint as well) for the round. 
\begin{proposition}\label{prop:mr:optimism}
Let $\{\Optimistic_t\}_{t=1}^T$ be the sequence of optimistic sets generated by Algorithm \ref{alg:su:gen}. For any $t \in [T]$ and any $\delta \in (0,1)$ with probability at least $(1-\delta) $ we have,
$
\{\action \in \Action: f^\star(\action,\cont_t) \le 0\} \subseteq \Optimistic_t$
\end{proposition}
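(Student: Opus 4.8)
The plan is to reduce the claimed set inclusion to the single fact that the true constraint function $f^\star$ belongs to the version space $\F_t$ that Algorithm~\ref{alg:su:gen} uses to build the optimistic set $\Optimistic_t$. I expect the algorithm defines $\Optimistic_t$ optimistically with respect to $\F_t$, declaring an action $\action$ potentially safe whenever \emph{some} function in the current version space certifies it, i.e. $\Optimistic_t = \crl*{\action \in \Action : \inf_{\f \in \F_t} \f(\action,\cont_t) \le 0}$. Granting this, the inclusion is immediate once $f^\star \in \F_t$: for any $\action$ with $f^\star(\action,\cont_t) \le 0$ we have $\inf_{\f \in \F_t}\f(\action,\cont_t) \le f^\star(\action,\cont_t) \le 0$, where the first inequality holds precisely because $f^\star$ is one of the functions in the set over which the infimum is taken. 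Hence $\action \in \Optimistic_t$, and the whole content of the proposition collapses to the high-probability membership $f^\star \in \F_t$.

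To establish that membership I would invoke Assumption~\ref{as:su:oraclesq}. The version space is a confidence ball centered at the oracle predictions $\crl*{\hzf_s}$, of the form $\F_t = \crl*{\f \in \F : \sum_{s=1}^{t-1}(\hzf_s - \f(\action_s,\cont_s))^2 \le \beta}$, with the radius $\beta$ calibrated to the regression oracle's guarantee, $\beta = \Regsq(T,\delta,\F)$. On the event of probability at least $1-\delta$ furnished by the oracle, the cumulative squared error of its predictions against $f^\star$ over the whole horizon is at most $\Regsq(T,\delta,\F)$. Since the partial sum up to round $t-1$ is dominated by the full-horizon sum, $\sum_{s=1}^{t-1}(\hzf_s - f^\star(\action_s,\cont_s))^2 \le \sum_{s=1}^{T}(\hzf_s - f^\star(\action_s,\cont_s))^2 \le \Regsq(T,\delta,\F)$, the function $f^\star$ satisfies the defining constraint of $\F_t$ and therefore lies in $\F_t$. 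The same good event validates membership at every round at once, so the per-round statement holds on this single event of probability $1-\delta$.

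The step I expect to require the most care is the bookkeeping that ties the \emph{total}-horizon oracle bound to the \emph{partial}-sum constraint that defines $\F_t$, together with the calibration of $\beta$: the radius must be large enough to contain $f^\star$ (which the monotonicity of partial sums guarantees) yet small enough for the downstream regret analysis, which pins it at exactly $\Regsq(T,\delta,\F)$. I would emphasize that no regularity of $\F$ or of the optimistic operator is needed: the argument is a pure containment-of-witnesses argument, relying neither on convexity, continuity, nor compactness of the class, but only on $f^\star \in \F_t$ and the definition of $\Optimistic_t$.
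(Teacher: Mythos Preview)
Your proposal is correct and follows essentially the same approach as the paper: reduce the inclusion to the high-probability event $f^\star \in \F_t$ (which the paper isolates as a separate lemma, proved exactly as you outline via Assumption~\ref{as:su:oraclesq} and the monotonicity of partial sums), and then conclude by the one-line witness argument $\min_{\f\in\F_t}\f(\action,\cont_t)\le f^\star(\action,\cont_t)\le 0$. The only cosmetic difference is that the paper's version space is taken inside $\F_0$ rather than $\F$, but since $f^\star\in\F_0$ by construction of the initial safe set this changes nothing in your argument.
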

We also construct a pessimistic set of actions $\Pessimistic_t$ of actions that for that rounds are guaranteed to satisfy the constraint for that round (some actions that do satisfy constraint might be left out). 
\begin{proposition}\label{prop:mr:pessimism}
Let $\{\Pessimistic_t\}_{t=1}^T$ be the sequence of optimistic sets generated by Algorithm \ref{alg:su:gen}. For any $t \in [T]$ and $\action \in \Pessimistic_t$ and for any $\delta \in (0,1)$ with probability at least $(1-\delta) $ we have constraint satisfaction, i.e. $f^\star(\action,\cont_t) \le 0$.
\end{proposition}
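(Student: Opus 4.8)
The plan is to reduce the pessimism guarantee to a single high-probability event---that the version space $\F_t$ maintained by Algorithm~\ref{alg:su:gen} contains the true constraint $\f^\star$---and then to exploit the fact that the pessimistic set is defined through a \emph{worst-case} (supremum) constraint estimate over this version space. Recall that Algorithm~\ref{alg:su:gen} forms $\F_t$ as the set of $\f \in \F$ whose cumulative squared prediction error against the oracle's outputs $\{\hzf_s\}_{s<t}$ stays within a radius calibrated to $\Regsq(T,\delta,\F)$, and that it then declares an action $\action$ pessimistically safe (i.e.\ places it in $\Pessimistic_t$, beyond the always-available initial set $\Action_0$) only when $\sup_{\f \in \F_t} \f(\action,\cont_t) \le 0$. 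Given this structure, the statement follows once we control the event $\{\f^\star \in \F_t\}$.

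First I would establish that $\f^\star \in \F_t$ with probability at least $1-\delta$. This is exactly the event already underlying the optimism guarantee (Proposition~\ref{prop:mr:optimism}), and it is where Assumption~\ref{as:su:oraclesq} does the work: the oracle guarantee states directly that $\sum_{s} (\hzf_s - \f^\star(\action_s,\cont_s))^2 \le \Regsq(T,\delta,\F)$ on an event of probability at least $1-\delta$. Because the version-space radius is chosen to match this cumulative error bound, on this same event the true function never leaves the version space, so $\f^\star \in \F_t$ simultaneously for every $t \in [T]$. No additional union bound over $t$ is needed: the oracle's guarantee is itself a single cumulative statement holding with probability $1-\delta$ across the whole horizon, which is at least as strong as the per-$t$ claim in the proposition.

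Conditioned on this event, I would conclude by monotonicity. Fix $t \in [T]$ and $\action \in \Pessimistic_t$. If $\action \in \Action_0$, then safety holds by the defining property of the initial safe set, which is safe under every context and every $\f \in \F$ (hence in particular under $\f^\star$). Otherwise $\action$ was admitted to $\Pessimistic_t$ precisely because $\sup_{\f \in \F_t} \f(\action,\cont_t) \le 0$; since $\f^\star \in \F_t$, we obtain
\[
\f^\star(\action,\cont_t) \;\le\; \sup_{\f \in \F_t} \f(\action,\cont_t) \;\le\; 0,
\]
which is the claimed constraint satisfaction.

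The only substantive step is the first one: verifying that the version-space radius is calibrated so that Assumption~\ref{as:su:oraclesq} forces $\f^\star \in \F_t$. Everything after that is a one-line domination argument carrying no hidden difficulty. The main conceptual point---and the reason pessimism and optimism are dual sides of the same construction---is that replacing the infimum defining $\Optimistic_t$ by the supremum defining $\Pessimistic_t$ turns the ``$\f^\star \in \F_t$'' event from a lower bound that never discards a truly safe action into an upper bound that never admits an unsafe one.
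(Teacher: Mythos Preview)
Your proposal is correct and follows essentially the same approach as the paper: invoke the high-probability event $\f^\star \in \F_t$ (the paper packages this as a lemma deduced from Assumption~\ref{as:su:oraclesq}) and then use the one-line domination $\f^\star(\action,\cont_t) \le \max_{\f \in \F_t} \f(\action,\cont_t) \le 0$. The separate case for $\Action_0$ is harmless but unnecessary, since $\F_t \subseteq \F_0$ already forces $\Action_0 \subseteq \Pessimistic_t$ via the max-condition itself.
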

We use the optimistic set $\Optimistic_t$ along with given context $\cont_t$ as inputs to the online learning oracle $\oracleol$ and receive a recommended distribution $\tilde p_t \in \Simplex(\Optimistic_t$). Because $\Optimistic_t$ contains all constraint-satisfying actions with high probability, combined with the regret guarantee of $\oracleol$ in Assumption \ref{as:su:oracleol}, this guarantees that actions drawn from $\tilde p_t$ will have good performance. 

However, in order to ensure constraint satisfaction, we need to play actions from the pessimistic set $\Pessimistic_t$. To this end, we introduce a mapping $\Map$ that takes in a distribution over $\Optimistic_t$ along with the pessimistic set $\Pessimistic_t$, function class $\F_t$ and context $\cont_t$ and returns a distribution over $\Pessimistic_t$ \footnote{Wherever it is clear what the arguments to the mapping are we may drop them e.g. in a non-contextual setting, the context argument is dropped}. The exact properties we desire of mapping $\Map$ will be discussed later. We draw action $\action_t$ from the distribution defined by $\Map$ for the round, receive $\out_t$ and noisy feedback $\zf_t$ which we in turn pass as input to the online learning oracle $\oracleol$ and regression oracle $\oraclesq$ respectively. 
\begin{algorithm}
\caption{General Constrained Online Learning}
\label{alg:su:gen}
\begin{algorithmic}[1]
\STATE Input:  $\oracleol$, $\oraclesq$, Initial safe set $\Action_0$, $\delta \in (0,1)$
\STATE $\F_0 = \{\f \in \F: \forall \action \in \Action_0, \forall \cont \in \Context, \f(\action,\cont) \le 0\}$
\FOR{$t = 1,\ldots,T$}
\STATE {\bf Receive context $\cont_t$}
\STATE $\F_t = \{ \f \in \F_0 : \sum_{s=1}^{t-1}(\f(\x_s, \cont_s) - \hzf_s )^2 \leq \Regsq (T,\delta, \F_0) \}$
\STATE $\Optimistic_t = \left\{ \action \in \Action : \min_{f \in \F_t} f(\action,\cont_t)  \leq 0 \right\}$ ~~,~~ $\Pessimistic_t = \left\{ \action \in \Action : \max_{f \in \F_t} f(\action,\cont_t) \leq 0 \right\}$ 
\STATE $\tilde p_t = {\oracleol}_t(\cont_t,\Optimistic_t)$
\STATE $p_t =  \Map(\tilde p_t; \Pessimistic_t, \F_t,\cont_t)$
\STATE 	Draw  $\action_t \sim p_t$
\STATE {\bf Ask for noisy feedback $\zf_t$ }
\STATE Update $\hzf_t = {\oraclesq}_t(\cont_t,\action_t)$
\STATE {\bf Play $\action_t$ and receive $\out_t$}
\ENDFOR
\end{algorithmic}
\end{algorithm}

\begin{theorem} \label{thm:main:maintheorem}
For any $\delta \in (0,1)$ with probability at least $1-3\delta$, Algorithm \ref{alg:su:gen} produces a sequence of actions $\{\action_t \}_{t=1}^T$ that are safe, and enjoys the following bound on regret: 
\begin{align*}
\regret_{T} &\le \inf_{\kappa >0}\left\{\sum_{t=1}^T V_\kappa(\tilde p_t; \Pessimistic_t,\F_t, \cont_t) +  \kappa \inf_{\alpha}\left\{\alpha T + \frac{20 \Regsq (T,\delta, \F_0) \mathcal{E}(\F_0,\alpha)}{\alpha} \right\} \right\} \\
&+ \regretol(T,\delta) + \sqrt{2 T \log (\delta^{-1})}
\end{align*}
where, 
\begin{align*}
V_\kappa(\tilde p_t; \Pessimistic_t,\F_t,\cont_t)  &= \sup_{\out \in \Out}\left\{ \Ex{\action_t \sim  \Map(\tilde p_t; \Pessimistic_t, \F_t,\cont_t)}{\ell(\action_t,\cont_t,\out)} - \Ex{\tilde a_t \sim \tilde p_t}{\ell(\tilde \action_t,\cont_t,\out)}\right\} -  \kappa \Ex{\action_t \sim  \Map(\tilde p_t; \Pessimistic_t, \F_t,\cont_t)}{\Delta_{\F_{t}}(\action_t,\cont_t)}
\end{align*}
Further, if we use~~ 
$
\kappa^* = \max_{t \in [T]} \sup_{\cont \in \Context, \tilde p \in \Delta(\Pessimistic_t), \out \in \Out} \frac{\Ex{\action \sim  \Map(\tilde p; \Pessimistic_t, \F_t,\cont)}{\ell(\action,\cont , \out)} - \Ex{\tilde \action \sim \tilde p}{\ell(\tilde \action,\cont, \out)}}{\Ex{\action \sim  \Map(\tilde p; \Pessimistic_t, \F_t,\cont)}{\Delta_{\F_t}(\action,\cont)}}
$, then in the above, $V_{\kappa^*}(\tilde p_t; \Pessimistic_t,\F_t,\cont_t) \le 0$ and so we can conclude that:
\begin{align*}
\regret_{T} & \le  \kappa^* \inf_{\alpha}\left\{\alpha T + \frac{20 \Regsq (T,\delta, \F_0) \Eluder(\F_0,\alpha)}{\alpha} \right\} +  \regretol(T,\delta)  + \sqrt{T \log (\delta^{-1})}
\end{align*}
\end{theorem}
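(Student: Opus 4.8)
The plan is to decompose the regret into three pieces---a sampling martingale from playing $\action_t\sim p_t$, the regret of $\oracleol$ against the hindsight benchmark, and the per-round ``safety price'' that $\Map$ pays to transport mass from $\tilde p_t$ onto the guaranteed-safe set---and then to absorb the third piece into $V_\kappa$ plus an eluder-type width bound. First I would condition on the success event of $\oraclesq$ (Assumption~\ref{as:su:oraclesq}), under which $\sum_{s<t}(\hzf_s-\f^\star(\action_s,\cont_s))^2\le\Regsq(T,\delta,\F_0)$, so that $\f^\star\in\F_t$ for every $t$ (note $\f^\star\in\F_0$ since $\Action_0$ is safe for all $\f\in\F$). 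This single event simultaneously gives safety---every $\action\in\Pessimistic_t$ has $\f^\star(\action,\cont_t)\le\max_{\f\in\F_t}\f(\action,\cont_t)\le0$ and $\Map$ is supported on $\Pessimistic_t$---and optimism, $\{\action:\f^\star(\action,\cont_t)\le0\}\subseteq\Optimistic_t$, i.e.\ Propositions~\ref{prop:mr:pessimism} and~\ref{prop:mr:optimism}. In particular the hindsight-optimal safe action $\action^\star$ lies in $\cap_t\Optimistic_t$.

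Next, writing $\regret_T=\sum_t\ls(\action_t,\cont_t,\out_t)-\sum_t\ls(\action^\star,\cont_t,\out_t)$, I would decompose it as
\[
\underbrace{\textstyle\sum_t\big(\ls(\action_t,\cont_t,\out_t)-\Ex{p_t}{\ls(\cdot,\cont_t,\out_t)}\big)}_{(\mathrm I)}+\underbrace{\textstyle\sum_t\big(\Ex{p_t}{\ls(\cdot,\cont_t,\out_t)}-\Ex{\tilde p_t}{\ls(\cdot,\cont_t,\out_t)}\big)}_{(\mathrm{II})}+\underbrace{\textstyle\sum_t\big(\Ex{\tilde p_t}{\ls(\cdot,\cont_t,\out_t)}-\ls(\action^\star,\cont_t,\out_t)\big)}_{(\mathrm{III})}.
\]
Term $(\mathrm I)$ is a bounded martingale-difference sum, so Azuma gives $(\mathrm I)\le\sqrt{2T\log(\delta^{-1})}$ with probability $1-\delta$. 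Term $(\mathrm{III})$ is at most $\regretol(T,\delta)$ with probability $1-\delta$ by Assumption~\ref{as:su:oracleol}, since $\tilde p_t$ is the output of $\oracleol$ on the sets $\Optimistic_t$ and $\action^\star\in\cap_t\Optimistic_t$. For the middle term I would bound each summand by its worst-case outcome and invoke the definition of $V_\kappa$: for every $t$,
\[
\Ex{p_t}{\ls(\cdot,\cont_t,\out_t)}-\Ex{\tilde p_t}{\ls(\cdot,\cont_t,\out_t)}\le\sup_{\out}\big\{\Ex{p_t}{\ls(\cdot,\cont_t,\out)}-\Ex{\tilde p_t}{\ls(\cdot,\cont_t,\out)}\big\}=V_\kappa(\tilde p_t;\Pessimistic_t,\F_t,\cont_t)+\kappa\,\Ex{p_t}{\Delta_{\F_t}(\action_t,\cont_t)},
\]
so $(\mathrm{II})\le\sum_t V_\kappa(\tilde p_t;\Pessimistic_t,\F_t,\cont_t)+\kappa\sum_t\Ex{p_t}{\Delta_{\F_t}(\action_t,\cont_t)}$. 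Summing the three bounds and taking $\inf_{\kappa>0}$ yields the first displayed inequality, \emph{provided} the cumulative width is controlled.

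The heart of the argument---and the step I expect to be the main obstacle---is the eluder-dimension bound
\[
\sum_{t=1}^T\Ex{p_t}{\Delta_{\F_t}(\action_t,\cont_t)}\le\inf_{\alpha}\Big\{\alpha T+\frac{20\,\Regsq(T,\delta,\F_0)\,\Eluder(\F_0,\alpha)}{\alpha}\Big\}.
\]
I would first observe that any $\f,\f'\in\F_t$ satisfy $\sum_{s<t}(\f(\action_s,\cont_s)-\f'(\action_s,\cont_s))^2\le4\Regsq(T,\delta,\F_0)$ by the triangle inequality in the empirical $\ell_2$ norm, so the version spaces have squared empirical diameter at most $\beta:=4\Regsq$. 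The standard eluder potential/counting argument then shows that the number of rounds on which the realized width exceeds a threshold $s$ is at most $(\beta/s^2+1)\Eluder(\F_0,s)$; a layer-cake summation over $s\ge\alpha$, using monotonicity of $\Eluder(\F_0,\cdot)$ and the trivial bound $\Delta_{\F_t}\le2$, converts this count into the claimed $\alpha T+\tfrac{20\Regsq\Eluder(\F_0,\alpha)}{\alpha}$ form. The delicate point is that $\F_t$ is built from the \emph{realized} actions $\{\action_s\}_{s<t}$, whereas the width in $V_\kappa$ is taken in expectation over the current draw $\action_t\sim p_t$; I would reconcile these via a conditional-expectation version of the potential argument so that no additional $\kappa$-scaled martingale is incurred, which is where the analysis is most sensitive.

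Finally, for the stated consequence I would set $\kappa=\kappa^\star$. By definition $\kappa^\star$ is the largest ratio, over rounds and over the relevant $(\cont,\tilde p,\out)$, of the loss gap introduced by $\Map$ to the expected width it induces; hence it upper bounds the realized ratio at each round, giving $\Ex{p_t}{\ls(\cdot,\cont_t,\out)}-\Ex{\tilde p_t}{\ls(\cdot,\cont_t,\out)}\le\kappa^\star\,\Ex{p_t}{\Delta_{\F_t}(\action_t,\cont_t)}$ for every $\out$. Taking $\sup_\out$ yields $V_{\kappa^\star}(\tilde p_t;\Pessimistic_t,\F_t,\cont_t)\le0$, so $\sum_t V_{\kappa^\star}(\cdot)\le0$ drops out of the first bound and leaves $\regret_T\le\kappa^\star\inf_\alpha\{\alpha T+\tfrac{20\Regsq\Eluder(\F_0,\alpha)}{\alpha}\}+\regretol(T,\delta)+\sqrt{T\log(\delta^{-1})}$. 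A union bound over the three failure events---the $\oraclesq$ event, the $\oracleol$ event, and the Azuma event, each of probability at most $\delta$---gives the overall $1-3\delta$ guarantee, with safety holding on the $\oraclesq$ event as noted above.
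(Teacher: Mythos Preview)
Your proposal is correct and follows essentially the same route as the paper: condition on the $\oraclesq$ event to get $\f^\star\in\F_t$ (yielding both Propositions~\ref{prop:mr:optimism} and~\ref{prop:mr:pessimism}), decompose the regret into the Azuma martingale term, the $\oracleol$ term against $\cap_t\Optimistic_t$, and the $\Map$-induced gap, then absorb the gap into $V_\kappa$ plus $\kappa\sum_t\Ex{p_t}{\Delta_{\F_t}}$ and finish with the eluder width bound (Lemmas~\ref{lem:ap:violationbound} and~\ref{lem:ap:sumofwidths}); the $\kappa^\star$ argument is verbatim the paper's.

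The one place where you make life harder than necessary is the ``delicate point'' you flag about the expected width. The paper does not use a conditional-expectation potential argument at all: it proves the eluder bound \emph{pathwise}, i.e.\ for \emph{any} realized sequence $\{\action_t\}_{t=1}^T$ with $\action_t\sim p_t$ one has $\sum_t\Delta_{\F_t}(\action_t,\cont_t)\le\alpha T+\tfrac{20\Regsq\Eluder(\F_0,\alpha)}{\alpha}$, and then simply observes that a deterministic inequality that holds for every realization also holds after taking expectations over the draws. Since $\F_t$ depends only on $\{\action_s\}_{s<t}$, the counting argument (at most $(\beta/\epsilon^2+1)\Eluder(\F_0,\epsilon)$ exceedances of level $\epsilon$) and the dyadic layer-cake over $\epsilon\in\{2^i\alpha\}$ go through sample-path by sample-path with no martingale at all. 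This is strictly simpler than what you propose and avoids any possibility of incurring an extra $\kappa$-scaled concentration term.
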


\begin{remark}
	Because $\F_0 \subseteq \F$, we choose to have the regression oracle $\oraclesq$ and the eluder dimension depend on $\F_0$ - consequently we have terms $\Regsq (T,\delta, \F_0)$, and $\mathcal{E}(\F_0,\alpha)$ in our bound. However, if they are smaller, we may use $\Regsq (T,\delta, \F)$, and $\mathcal{E}(\F,\alpha)$ instead. 
\end{remark}

\paragraph{Optimal Mapping and Adapting to $\kappa$} In the next section we give concrete examples of the mapping $\Map$ which yields bounded $\kappa^*$ for a few settings. But this begs the question - is it possible to write down a form for the mapping for the general case? For a given $\kappa>0$, the worst case optimal mapping that minimizes $V_\kappa$ can be written down as a solution to a saddle point optimization problem:
\begin{align*}
	\Map^\kappa(\tilde{p}; \Pessimistic,\G,\cont) = \argmin_{p \in \Simplex(\Pessimistic)} \min_{g, g' \in \G}\sup_{\out \in \Out} \left\{ \Ex{\action \sim p_t}{\ell(\action,\cont,\out)} - \Ex{\tilde \action \sim \tilde p_t}{\ell(\tilde \action,\cont,\out)}\} - \kappa \Ex{\action \sim p_t}{|g(\action) - g'(\action)|} \right\}
\end{align*}
where we use the definition of $\Delta_\G(\cdot)$. Notice that the above optimization depends on $\kappa$, and the optimal $\kappa$ can only be computed in hindsight. A natural question is then: how do we pick $\kappa$? In order to adapt to $\kappa$ as we learn, we first notice that since $\inf_{\alpha}\left\{\alpha T + \frac{20\Regsq (T,\delta, \F_0) \Eluder(\F_0,\alpha)}{\alpha} \right\} \geq \sqrt{T}$, $\kappa$ ranges from $(0,  \sqrt{T} ] $ in order for the regret bounds to be $\smallO(T)$. Therefore, we could choose from a range of candidate $\kappa$'s, $\kappa \in \{2^{i}\}_{i=0}^{\log(\sqrt{T})}$. A standard trick we could use to obtain regret bounds as good as the best $\kappa$ in this set (which is at most twice the optimal amongst any $\kappa >0$) is to simply use a non-stochastic multiarmed bandit algorithm with each arm representing each of the $\log(\sqrt{T})$ values of $\kappa$ under consideration. The loss of the a particular choice of $\kappa$ is then simply the loss of the action obtained using the mapping $\Map^\kappa$. Using the EXP3 algorithm \cite{EXP3} for example, an algorithm that adaptively picks $\kappa$ obtains a regret bound up to a constant factor of the regret obtained using the best $\kappa$, and only suffers an additive factor of $\bigO(\sqrt{T\log(T)})$. This algorithm remains safe since we only play from $\Pessimistic_t$.

\paragraph{Long Term Constraint Versus No Violations with High Probability}
Suppose we are only interested in ensuring that the sum of constraint violations $\sum_{t=1}^T \f^\star(\action_t,\cont_t)$ is $\smallO(T)$, as is the goal in the line of works studying online learning with long term constraints \cite{mahdavi_trading_2012}, \cite{yu_low_2020}, \cite{sun_safety-aware_nodate}. Then, we can bound the number sum of constraint violations by the eluder dimension. Furthermore, this can be done by leveraging the online learning oracle and online regression oracle, without requiring the use of the mapping $\Map$ present in the main algorithm - the idea will be to simply play the output of the online learning algorithm given sets $\{ \Optimistic_t \}_{t=1}^T$. Algorithm \ref{alg:ap:longterm} defined in the appendix has the following guarantee: 
\begin{lemma}\label{lem:mr:longterm}
    For any $\delta \in (0,1)$ with probability at least $1-2\delta$, Algorithm \ref{alg:ap:longterm} produces a sequence of actions $\{ \action_t \}_{t=1}^T$ that satisfies: 
    \begin{align*}
        \regret_T \leq \regretol(T,\delta) ~\textrm{ and }~\sum_{t=1}^T \f^*(\action_t, \cont_t) \leq  \inf_{\alpha}\left\{\alpha T + \frac{20 \Regsq (T,\delta, \F_0) \edim(\F_0,\alpha)}{\alpha} \right\}
    \end{align*}
\end{lemma}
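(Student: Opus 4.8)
The plan is to analyze the simplified procedure that, on each round $t$, simply plays $\action_t \sim \tilde p_t$ where $\tilde p_t = \oracleol_t(\cont_t,\Optimistic_t) \in \Simplex(\Optimistic_t)$ — that is, it feeds the optimistic set directly to the online learning oracle and never invokes the mapping $\Map$. I would condition throughout on the intersection of two high-probability events: the event (probability $\ge 1-\delta$) on which the regression-oracle guarantee of Assumption \ref{as:su:oraclesq} holds, and the event (probability $\ge 1-\delta$) on which the online-learning-oracle guarantee of Assumption \ref{as:su:oracleol} holds; a union bound delivers the claimed $1-2\delta$. On the first event $f^\star \in \F_t$ for every $t$ — since $f^\star \in \F_0$ (as $\Action_0$ is safe for all of $\F$) and $f^\star$ satisfies the squared-error constraint defining $\F_t$ — which is exactly the content underlying Proposition \ref{prop:mr:optimism}.

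For the regret bound the key point is optimism. By Proposition \ref{prop:mr:optimism}, on the good regression event every constraint-satisfying action lies in $\Optimistic_t$ for all $t$, so in particular the optimal-in-hindsight safe comparator $\action^\star$ (which satisfies $f^\star(\action^\star,\cont_t)\le 0$ for all $t$) belongs to $\cap_{t=1}^T \Optimistic_t$. Since $\oracleol$ was run on the sequence of sets $\{\Optimistic_t\}$, Assumption \ref{as:su:oracleol} bounds its regret against the best element of $\cap_t \Optimistic_t$, which is at least as good as $\action^\star$; hence $\regret_T \le \regretol(T,\delta)$.

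For the constraint-violation bound I would first establish a pointwise inequality relating violation to the version-space width. For any realized $\action_t$, membership $\action_t\in\Optimistic_t$ means $\min_{f\in\F_t}f(\action_t,\cont_t)\le 0$; combining this with $f^\star\in\F_t$ gives $f^\star(\action_t,\cont_t) \le f^\star(\action_t,\cont_t) - \min_{f\in\F_t}f(\action_t,\cont_t) \le \Delta_{\F_t}(\action_t,\cont_t)$, where $\Delta_{\F_t}(\action_t,\cont_t)=\sup_{f,f'\in\F_t}\bigl(f(\action_t,\cont_t)-f'(\action_t,\cont_t)\bigr)$. It therefore suffices to bound $\sum_{t=1}^T \Delta_{\F_t}(\action_t,\cont_t)$ along the realized trajectory. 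The hard part is this eluder-dimension sum-of-widths argument, which I would carry out in two steps. First, any two functions in $\F_t$ agree closely on the past, $\sum_{s<t}\bigl(f(\action_s,\cont_s)-f'(\action_s,\cont_s)\bigr)^2 \le 4\,\Regsq(T,\delta,\F_0)$, which follows purely from the definition of $\F_t$ and the triangle inequality in the empirical $\ell_2$ norm (each of $f,f'$ is within $\sqrt{\Regsq(T,\delta,\F_0)}$ of the prediction sequence $\{\hzf_s\}$). Second, the standard eluder argument converts this cumulative-agreement bound into a count of large-width rounds — the number of $t$ with $\Delta_{\F_t}(\action_t,\cont_t) > \alpha$ is $O\!\bigl(\Regsq(T,\delta,\F_0)\,\edim(\F_0,\alpha)/\alpha^2\bigr)$ — after which a dyadic peeling over the threshold, using the boundedness $\Delta_{\F_t}\le 2$, sums the widths to give $\sum_t \Delta_{\F_t}(\action_t,\cont_t)\le \alpha T + \tfrac{20\,\Regsq(T,\delta,\F_0)\,\edim(\F_0,\alpha)}{\alpha}$ for every $\alpha$; taking the infimum over $\alpha$ yields the claim. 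The constant $20$ and the improvement from $1/\alpha^2$ to $1/\alpha$ come precisely from the peeling bookkeeping against the width cap, and this is the step I expect to require the most care.
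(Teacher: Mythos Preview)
Your proposal is correct and follows essentially the same route as the paper: condition on the regression and online-learning oracle events, use optimism (Proposition~\ref{prop:mr:optimism}) so the safe comparator lies in $\cap_t \Optimistic_t$ and inherit $\regretol(T,\delta)$ directly, then bound $f^\star(\action_t,\cont_t)\le \Delta_{\F_t}(\action_t,\cont_t)$ via $\action_t\in\Optimistic_t$ and $f^\star\in\F_t$, and finally control $\sum_t \Delta_{\F_t}(\action_t,\cont_t)$ by the eluder/peeling argument. The paper packages the last step as an invocation of Lemma~\ref{lem:ap:sumofwidths} (built on Lemma~\ref{lem:ap:violationbound}), whose proof is exactly the two-step argument you outline.
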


This motivates the question: is assuming access to an online learning oracle and online regression oracle and that the eluder dimension of $\F$ is small enough for us to create algorithms that make no constraint violations with high probability? 

Unfortunately the answer is a no. We need more assumptions on the initial safe set - which is what the mapping is harnessing. Specifically consider the case where $\ell(\action,\cont, y) = \out^\top \action$, the constraint set is $\F = \{ (\action,\cont) \mapsto f^\top \action : \|\f\|_2 \le 1\}$ and say the initial safe action set $\Action_0 = \{0\}$. In this case, the eluder dimension $\Eluder(\F,\epsilon) = d \log(1/\epsilon)$, and both the online learning oracle and online regression oracle are readily available and satisfy Assumptions \ref{as:su:oraclesq} and \ref{as:su:oracleol} (e.g. use gradient descent and online linear regression algorithm). However, since $\Action_0 = \{0\}$ the initial pessimistic set is $\Pessimistic_1 = \{0\}$. However since we are forced to play in this set, we don't gain any information about $\f^\star$ and hence in the subsequent rounds $\F_t = \F$ and $\Pessimistic_t = \Pessimistic_1$. Thus we cannot hope to play anything other than the single safe choice $\action_0 = 0$ which prevents us from achieving low regret.   

\subsection{Lower Bound}
The assumption of having an online learning oracle that guarantees low regret (while playing from the optimistic sets) is a natural one since it guarantees the existence of a regret-minimizing algorithm. The existence of an online regression oracle and assumption that eluder dimension is well behaved are also assumptions that are typically expected. While perhaps the eluder dimension may be substituted by a milder star number \cite{sekhari_selective_2023}, we would nonetheless expect to see some measure of complexity of $\F$. The assumption that the reader would be unfamiliar with is perhaps the one where we assume existence of a mapping $\Map$ from a distribution over optimistic sets to a distribution over pessimistic sets that ensures that the sum $\sum_{t=1}^T  V_\kappa(\tilde p_t; \Pessimistic_t,\F_t, \cont_t) +   2 \kappa \inf_{\alpha}\left\{\alpha T + \frac{20 \Regsq (T,\delta, \F_0) \mathcal{E}(\F_0,\alpha)}{\alpha} \right\}  $ is small in our main bound for some reasonable $\kappa$. In this section we show that at least asymptotically, the existence of such mapping is necessary to even guarantee that regret can be diminishing. To make things simpler, for this lower bound we will ignore context and assume that context set is the null set. We will also fix losses to be the same on all rounds and even assume that the learner knows the loss value of the optimal safe action. Throughout this section we assume that $\zf_t  \sim \signal(f^\star(\action_t,\cont_t))$  is of the form $\zf_t = \f^\star(\action_t,\cont_t) + \eg_t$  where $\eg_t$'s are  standard normal noise variables so that we can use square loss regression for the online regression oracle.  

We will show that if for some $\kappa$, and any mapping $\Map$,   $\lim_{T\rightarrow \infty} \frac{1}{T} \sum_{t=1}^T  V_\kappa(\tilde p_t; \Pessimistic_t,\F_t) \ge c^*> 0$, then,  there exists $\Pessimistic^* \supset \Action_0$ such that $\Pessimistic^*$ is safe, (constraints are always satisfied), for any action $\action \in \Pessimistic^*$ with high probability we can estimate $\f^\star(a)$ to any arbitrary accuracy, yet all actions in $\Pessimistic^*$ are sub-optimal in terms of loss by at least $c^*$ when compared to best safe action. Further, this set $\Pessimistic^*$ is non-expandable - meaning that based on knowing the values of $\f^\star(a)$ for every $a \in \Pessimistic^*$, and knowing $\F$, we cannot find more actions that are guaranteed to be safe. 

Once we can show the existence of such a set $P^*$, our work is easy. We can simply announce to any learning algorithm the initial safe set as $\Action_0 = \Pessimistic^*$ which only gives the learning algorithm more knowledge. Now because the set is non-expandable, and since the learning algorithm is only allowed to play actions guaranteed to be safe, we conclude any algorithm is doomed to always play actions within $\Pessimistic^*$ - which is known to be $c^*$ sub-optimal. Unfortunately, since we would like to drive estimation error of $f^*$ on elements of $P^*$ to zero - we are only able to do this asymptotically. We capture this argument in the following lemma. 

\begin{lemma}\label{lem:lb2}
Assume that we have a fixed loss function $\ell:\Action \mapsto \reals$ such that for any $\action \in \Action$ satisfying $\f^\star(\action) > 0$, $\ell(\action) = \min_{\action^* \in \Action: \f^\star(\action^*) \le 0 } \ell(\action^*)$. Furthermore, assume that the eluder dimension of $\F$ at any scale $\epsilon > 0$, (with input space $\Action$) is bounded. If for some $c^* > 0$, $\kappa \ge 0$, and any $\Map$, any regret minimizing oracles $\oracleol$ and $\oraclesq$ (assuming regret in both cases is $\smallO(T)$)
$
\lim_{T\rightarrow \infty} \frac{1}{T} \sum_{t=1}^T  V_\kappa(\tilde p_t; \Pessimistic_t,\F_t) \ge c^*
$
then, there exists a set $\Pessimistic^* \supseteq \Action_0$ with the following properties,
\begin{enumerate}
\item Set $\Pessimistic^*$ satisfies constraints, i.e. $\forall \action \in \Pessimistic^*$, $\f^\star(a) \le 0$
\item Define $\F^* = \{ f : \forall \action \in  \Pessimistic^*,~ f(a) = f^\star(a)\}$. For every action $a \in \Action \setminus  \Pessimistic^*$, $\exists f \in \F^*$ such that $f(a) > 0$.  That is,  $\Pessimistic^*$ cannot be expanded to a larger set guaranteed to satisfy constraint.
\item $\Pessimistic^*$ is such that  $\inf_{\action \in  \Pessimistic^*} \ell(a) - \inf_{\action \in \Action: f^\star(\action) \le 0} \ell(a) \ge c^\star$
\end{enumerate}
\end{lemma}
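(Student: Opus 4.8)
The plan is to run Algorithm \ref{alg:su:gen} with a carefully chosen mapping and to read off $\Pessimistic^*$ as the limit of its pessimistic sets. Throughout I would condition on the single high-probability event of Assumption \ref{as:su:oraclesq}, on which $\sum_{t}(\hzf_t - f^\star(\action_t))^2 \le \Regsq(T,\delta,\F_0)$ so that $f^\star \in \F_t$ simultaneously for every $t$; by Proposition \ref{prop:mr:optimism} this places every feasible action, in particular the optimal safe $\action^\star$, in $\cap_t \Optimistic_t$, and by Proposition \ref{prop:mr:pessimism} it makes every $\Pessimistic_t$ safe. Since losses are fixed, set $\ell^\star := \inf_{\action\in\Action:\,f^\star(\action)\le 0}\ell(\action)$; the hypothesis on $\ell$ (every infeasible action has loss exactly $\ell^\star$) yields the crucial fact that $\ell(\action)\ge \ell^\star$ for \emph{all} $\action\in\Action$.

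First I would simplify the per-round complexity term. With fixed losses the supremum over $\out$ is vacuous, so $V_\kappa(\tilde p_t;\Pessimistic_t,\F_t) = \Ex{\action_t\sim\Map}{\ell(\action_t)} - \Ex{\tilde\action_t\sim\tilde p_t}{\ell(\tilde\action_t)} - \kappa\,\Ex{\action_t\sim\Map}{\Delta_{\F_t}(\action_t)}$. Two of these three Cesàro averages can be pinned down independently of $\Map$: (i) the eluder-dimension argument behind Theorem \ref{thm:main:maintheorem} bounds $\sum_t \Ex{\action_t\sim p_t}{\Delta_{\F_t}(\action_t)}$ by $\inf_\alpha\{\alpha T + 20\,\Regsq(T,\delta,\F_0)\,\edim(\F_0,\alpha)/\alpha\}=\smallO(T)$ (using $\Regsq=\smallO(T)$ and bounded $\edim$), so the $\kappa$-term vanishes; (ii) since $\action^\star\in\cap_t\Optimistic_t$, Assumption \ref{as:su:oracleol} gives $\frac1T\sum_t\Ex{\tilde p_t}{\ell}\le \ell^\star + \regretol(T,\delta)/T\to\ell^\star$, while $\ell\ge\ell^\star$ forces $\frac1T\sum_t\Ex{\tilde p_t}{\ell}\ge\ell^\star$, so this term converges to $\ell^\star$. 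Hence the hypothesis $\lim\frac1T\sum V_\kappa\ge c^\star$ collapses, for every mapping, to
\[
\lim_{T\to\infty}\frac{1}{T}\sum_{t=1}^{T}\Ex{\action_t\sim\Map}{\ell(\action_t)}\ \ge\ \ell^\star + c^\star \qquad\text{for every mapping }\Map .
\]

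Next I would construct the set via a single mapping: ``exploit with forced exploration,'' which plays $\argmin_{\action\in\Pessimistic_t}\ell(\action)$ with probability $1-\gamma_t$ and $\argmax_{\action\in\Pessimistic_t}\Delta_{\F_t}(\action)$ with a vanishing, non-summable probability $\gamma_t\to 0$ (e.g.\ $\gamma_t=t^{-1/2}$). Because $\F_t$ shrinks, the pessimistic sets are nondecreasing, so $\Pessimistic^* := \overline{\bigcup_t \Pessimistic_t}$ is well defined, safe (property 1), and contains $\Action_0$. Infinitely many exploration rounds combined with boundedness of $\edim(\F_0,\cdot)$ (which caps, at each scale $\epsilon$, the number of plays with $\Delta_{\F_t}>\epsilon$) force $\max_{\action\in\Pessimistic_t}\Delta_{\F_t}(\action)\to 0$, so $f^\star$ is learned to arbitrary accuracy on all of $\Pessimistic^*$; thus any $\action\notin\Pessimistic^*$ was never certified safe, i.e.\ $\max_{f\in\F^*}f(\action)>0$, which is exactly the non-expandability of property 2. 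Finally, since $\gamma_t\to 0$ the exploration rounds contribute only $\smallO(T)$ loss, so $\frac1T\sum_t\Ex{\Map}{\ell}\to\lim_t\min_{\action\in\Pessimistic_t}\ell(\action)=\inf_{\action\in\Pessimistic^*}\ell(\action)$ (a Cesàro limit of a convergent monotone sequence), and the displayed inequality gives $\inf_{\action\in\Pessimistic^*}\ell(\action)\ge\ell^\star+c^\star$, which is property 3.

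The main obstacle is the exploration/expansion step: I must show that vanishing-rate forced exploration genuinely drives $\Pessimistic_t$ up to a set non-expandable in the precise sense of property 2 (not merely one on which the \emph{played} actions have small width), while keeping cumulative exploration loss $\smallO(T)$. This is exactly where the eluder dimension is essential, since it guarantees that after finitely many informative explorations the width $\Delta_{\F_t}$ is uniformly small on the current $\Pessimistic_t$, unlocking the next expansion. A secondary nuisance is that all oracle guarantees are finite-$T$ yet the conclusion is asymptotic; I would control this by fixing an accuracy level, taking $T$ large, and only then letting the accuracy tend to $0$, so the interchange of limits is justified rather than assumed.
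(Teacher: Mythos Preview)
Your proposal is correct and shares the paper's overall skeleton---reduce $V_\kappa$ via the fixed-loss and oracle assumptions, choose a particular mapping, set $\Pessimistic^* = \bigcup_t \Pessimistic_t$, and verify the three properties---but differs in two places worth noting. First, the paper discards the $-\kappa\,\Ex{}{\Delta_{\F_t}}$ term simply by observing $\Delta_{\F_t}\ge 0$ (dropping a nonpositive term only strengthens the lower bound), rather than invoking the eluder-sum estimate; this is shorter and avoids one appeal to $\Regsq=o(T)$. Second, and more interestingly, the paper upgrades your Ces\`aro inequality to a \emph{per-round} inequality $\min_{\action\in\Pessimistic_t}\ell(\action)\ge \ell^\star+c^*$ for every $t$: if this failed at some $t_0$, one could define a mapping that replays the minimizing action in $\Pessimistic_{t_0}$ forever (legitimate since $\Pessimistic_{t_0}\subseteq\Pessimistic_t$ for $t\ge t_0$ and the loss is fixed), driving the average below $\ell^\star+c^*$ and contradicting the hypothesis. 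Armed with this, the paper's case-split mapping---``exploit if $\min_{\Pessimistic_t}\ell<\ell^\star+c^*$, else play the width-maximizer''---never takes the exploit branch, so it reduces to \emph{pure} width-maximization. This eliminates precisely the obstacle you flag: there is no $\gamma_t$-mixing to balance, and the argument that widths over $\Pessimistic^*$ collapse to zero (property~2) is cleaner because every round is an exploration round. Your vanishing-rate forced-exploration route reaches the same conclusion but requires the extra care you describe; the paper's replay trick buys that simplification for free. (One minor point: the paper takes the plain union, not its closure; your closure step would need continuity of $f^\star$ for property~1 to survive.)
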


\begin{proposition}\label{prop:lb1}
If there exists a set $\Pessimistic^*$ that has the following properties,
\begin{enumerate}
\item Set $\Pessimistic^*$ satisfies constraints, i.e. $\forall \action \in \Pessimistic^*$, $\f^\star(a) \le 0$
\item Define $\F^* = \{ f : \forall \action \in  \Pessimistic^*,  f(a) = f^\star(a)\}$. For every action $a \in \Action \setminus  \Pessimistic^*$, $\exists f \in \F^*$ such that $f(a) > 0$.  That is,  $\Pessimistic^*$ cannot be expanded to a larger set guaranteed to satisfy constraint.
\item $\Pessimistic^*$ is such that  $\inf_{\action \in  \Pessimistic^*} \ell(a) - \inf_{\action \in \Action: f^\star(\action) \le 0} \ell(a) \ge c^\star$
\end{enumerate}
Then, safe learning is impossible, and any learning algorithm that is guaranteed to satisfy constraints on every round (with high probability) has a regret lower bounded by $\regret_T \ge T c^*$.
\end{proposition}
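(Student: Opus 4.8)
The plan is to show directly that the three listed properties force any algorithm that is safe with high probability to confine its actions to $\Pessimistic^*$, and then to convert this confinement into regret via the $c^*$-suboptimality of $\Pessimistic^*$ (Property 3) together with boundedness of the loss. The engine is an indistinguishability argument driven by Property 2. To set it up, fix any algorithm $\mathcal{A}$ that, run against an arbitrary true constraint $f \in \F$, satisfies the constraint on every round with probability at least $1-\delta$, and run it in the true environment $f^\star$. The only constraint-dependent information the learner ever receives is the feedback $\zf_t \sim \signal(f(\action_t))$, and by Property 1 this feedback law on $\Pessimistic^*$ is identical for $f^\star$ and for every $f \in \F^*$, since by definition such $f$ agree with $f^\star$ on all of $\Pessimistic^*$.

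Next I would make the indistinguishability precise by induction over rounds. Let $\tau$ be the first round at which $\mathcal{A}$ plays an action outside $\Pessimistic^*$. As long as every past action lies in $\Pessimistic^*$, the internal state of $\mathcal{A}$ has the same distribution under $f^\star$ and under any $f \in \F^*$, hence so does its next action; thus the joint law of the trajectory $(\action_1,\zf_1,\ldots,\action_\tau)$ up to and including the exit action is identical in the two worlds, and a coupling makes the two runs agree verbatim until the exit. Now suppose toward contradiction that under $f^\star$ the algorithm leaves $\Pessimistic^*$ within $T$ rounds with non-negligible probability. For a realized exit action $\action \in \Action \setminus \Pessimistic^*$, Property 2 supplies some $f \in \F^*$ with $f(\action) > 0$, so under that $f$ the exit is a genuine constraint violation; by the coupling it occurs with exactly the same probability as under $f^\star$, and since $\mathcal{A}$ is safe under $f$ this probability is at most $\delta$. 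Aggregating over the exit action then shows that under $f^\star$ the algorithm stays inside $\Pessimistic^*$ on all $T$ rounds with probability at least $1-\bigO(\delta)$.

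With confinement in hand, the regret bound follows. On every round with $\action_t \in \Pessimistic^*$, Property 3 gives $\ell(\action_t) \ge \inf_{\action \in \Action: f^\star(\action)\le 0}\ell(\action) + c^*$, so the instantaneous regret is at least $c^*$; on the rare excursion rounds I would lower bound the instantaneous regret by $-1$ using $\ell \in [0,1]$ (so that $\ell(\action_t)$ minus the optimal safe loss is at least $-1$). Combining with the confinement probability yields $\regret_T \ge c^* T - \bigO(\delta T)$, and sending $\delta \to 0$ (or simply taking $\delta$ a small constant and noting the bound is $c^* T(1-\smallO(1))$) gives $\regret_T \ge c^* T$; in particular no safe algorithm attains $\smallO(T)$ regret, so safe learning is impossible.

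I expect the main obstacle to be the aggregation step in the second paragraph. Property 2 is pointwise, since each bad action $\action \in \Action\setminus\Pessimistic^*$ comes with its own witness $f$, so the per-action violation bound $\delta$ must be combined over a possibly uncountable set of candidate exit actions, where a naive union bound fails. I would resolve this through the worst-case-over-version-space reading of the safety requirement: because $\mathcal{A}$ must be safe for \emph{every} $f \in \F$ consistent with its observations, it may only play actions that are certified safe against the entire consistent set, which by Property 2 is precisely $\Pessimistic^*$ (every $a \in \Pessimistic^*$ is certified by Property 1, and no $a \notin \Pessimistic^*$ can be, by Property 2). Alternatively, one uses a measurable selection $\action \mapsto f_\action$ together with a single mixture environment over the law of the first-exit action, thereby avoiding any enumeration of $\Action\setminus\Pessimistic^*$.
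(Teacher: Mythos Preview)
Your proposal is correct and lands on the same argument the paper uses. The paper's proof is short and informal: it sets $\Action_0 = \Pessimistic^*$, observes that feedback from actions in $\Pessimistic^*$ can at best pin down $f^\star$ on $\Pessimistic^*$ (so the consistent set is exactly $\F^*$), and then invokes Property~2 to conclude that no $\action \notin \Pessimistic^*$ can ever be certified safe against all of $\F^*$; Property~3 then gives $c^*$ suboptimality per round. This is precisely your ``worst-case-over-version-space'' resolution, which you reach after the coupling detour.

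The difference is mainly one of presentation. Your first attempt via a per-exit-action coupling is more ambitious---it tries to handle a purely probabilistic ``safe with probability $1-\delta$ under each fixed $f$'' guarantee---and you correctly diagnose that it breaks at the aggregation step because Property~2 supplies only a pointwise witness $f_\action$. The paper never attempts this and simply reads ``safe'' as ``plays only actions certifiable against the entire consistent class,'' which is the interpretation you fall back to. Your treatment of the regret accounting (bounding excursion rounds by $-1$ and absorbing an $\bigO(\delta T)$ slack) is more careful than the paper's, which asserts $\regret_T \ge Tc^*$ directly without tracking the high-probability slack. Your alternative mixture-environment idea is plausible but would need more work to make precise; since the version-space reading already suffices, you can drop it.
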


The following theorem states that even when eluder dimension is well behaved and we have access to online learning and online regression oracles with good bounds on regret, our assumption on the existence of a mapping $\Map$ that yields low regret is in fact necessary to ensure learning with constraint satisfaction on all rounds.

\begin{theorem}
Suppose we are given an initial action set $\Action_0$, $\Context = \{\}$, $\Out = \{\}$, a choice of constraint function $\f^\star \in \F$ (unknown to learner) and a fixed loss function $\ell:\Action \mapsto \reals$ such that for any $\action \in \Action$ satisfying $\f^\star(\action) > 0$, $\ell(\action) = \min_{\action^* \in \Action: \f^\star(\action^*) \le 0 } \ell(\action^*)$. Further assume that $\F$ has eluder dimension that is finite for any scale $\epsilon$. In this case, if for some $\kappa >0$, and any mapping $\Map$, if it is true that 
$$
\lim_{T\rightarrow \infty} \frac{1}{T} \sum_{t=1}^T  V_\kappa(\tilde p_t; \Pessimistic_t,\F_t) > 0,
$$
then, safe learning is impossible, i.e. no algorithm that is guaranteed to satisfy constraints on every round (with high probability) can ensure that $\regret_T = o(T)$.
\end{theorem}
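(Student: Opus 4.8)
The plan is to chain together the two preceding results, Lemma~\ref{lem:lb2} and Proposition~\ref{prop:lb1}, which have been deliberately set up so that the theorem is essentially their composition. First I would observe that the theorem's hypotheses are a superset of those in Lemma~\ref{lem:lb2}: we are given the initial set $\Action_0$, the trivial context and outcome spaces $\Context = \{\}$ and $\Out = \{\}$, a fixed loss $\ell$ satisfying the degeneracy condition that $\ell(\action) = \min_{\action^*: \f^\star(\action^*) \le 0} \ell(\action^*)$ whenever $\f^\star(\action) > 0$, and finite eluder dimension at every scale $\epsilon$. The only apparent gap is that Lemma~\ref{lem:lb2} assumes $\lim_{T\to\infty} \frac{1}{T}\sum_{t=1}^T V_\kappa(\tilde p_t; \Pessimistic_t, \F_t) \ge c^*$ for an explicit constant $c^* > 0$, whereas the theorem only assumes the limit is strictly positive. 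So the first real step is to name this constant: set $c^* := \lim_{T\to\infty} \frac{1}{T}\sum_{t=1}^T V_\kappa(\tilde p_t; \Pessimistic_t, \F_t)$, which exists and is positive by hypothesis, and feed it into Lemma~\ref{lem:lb2}.

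With $c^*$ fixed, applying Lemma~\ref{lem:lb2} produces a set $\Pessimistic^* \supseteq \Action_0$ enjoying the three listed properties: it is safe, it is non-expandable in the sense that every action outside it is consistent with some $f \in \F^*$ that violates the constraint, and every action in it is at least $c^*$-suboptimal in loss relative to the best safe action. The second step is then to hand exactly this $\Pessimistic^*$ to Proposition~\ref{prop:lb1}, whose hypotheses are verbatim the three conclusions of Lemma~\ref{lem:lb2}. The proposition immediately yields $\regret_T \ge T c^*$ for any algorithm guaranteed to satisfy the constraint on every round with high probability.

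The final step is to convert this linear lower bound into the stated asymptotic impossibility. Since $c^* > 0$ is a fixed constant independent of $T$, the bound $\regret_T \ge c^* T$ shows $\regret_T = \Omega(T)$, and in particular $\regret_T \ne \smallO(T)$; this is precisely the claim that no safe algorithm can ensure $\regret_T = \smallO(T)$. I would phrase the conclusion carefully, noting that the adversary's strategy is simply to announce $\Pessimistic^*$ as the initial safe set $\Action_0$ (which only gives the learner more information, so no generality is lost), whereupon non-expandability traps every safe algorithm inside $\Pessimistic^*$ for all rounds, and each such action bleeds at least $c^*$ regret per step.

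The main obstacle is not in the composition itself but in the honest handling of the asymptotics inside Lemma~\ref{lem:lb2}, which this theorem inherits. The delicate point is the ``non-expandability'' property: it requires driving the estimation error of $\f^\star$ on the elements of $\Pessimistic^*$ to zero so that no additional action can be certified safe, and as the surrounding text emphasizes, this can only be achieved in the limit. Consequently the argument genuinely produces an asymptotic statement rather than a finite-$T$ bound, which is why the theorem is phrased as the impossibility of $\smallO(T)$ regret rather than as an explicit finite-horizon lower bound. I would therefore be careful to invoke Lemma~\ref{lem:lb2} and Proposition~\ref{prop:lb1} as black boxes and locate all the genuine technical content—the construction of $\Pessimistic^*$ and the limiting estimation argument—inside Lemma~\ref{lem:lb2}, treating the theorem's proof as the short logical capstone that glues these pieces into the desired conclusion.
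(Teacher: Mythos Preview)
Your proposal is correct and matches the paper's approach exactly: the paper's proof is the single sentence ``Combining Lemma~\ref{lem:lb2} and Proposition~\ref{prop:lb1} trivially yields the statement of the theorem.'' Your more detailed unpacking (naming $c^*$, verifying the hypotheses align, and converting $\regret_T \ge c^* T$ into the failure of $\smallO(T)$) is faithful to that composition.
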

\begin{proof}
Combining Lemma \ref{lem:lb2} and Proposition \ref{prop:lb1} trivially yields the statement of the theorem.
\end{proof}

\section{Examples}\label{sec:examples}
In this section, we restrict our attention to the case when we do not receive a context, i.e. $\Context = \emptyset$ and the loss functions are lipschitz. With a slight abuse of notation, we denote $\ell(\x_t, \cdot, \out_t) = \ell_t(\x_t)$. Notably, we will give examples of mappings $\Map$ that yield bounded $\kappa^*$ which allows us obtain concrete bounds from Theorem \ref{thm:main:maintheorem}.

\subsection{Finite Action Spaces}\label{subsection:finteactionspaces}
We first consider the setting of finite action spaces, where $\Action = [K]$, $\F_{\mathrm{FAS}} \subseteq \Action \to \reals$, and losses are functions $\ell_t : \Action \to [0,1]$. As shorthand, we define the vector $\ell_t \in \reals^K$ as the vector of loss values on each action and $\f \in \reals^K$ as the vector of constraint function values on each action. Suppose we make the following assumption that promises some separation between function values:
\begin{assumption}\label{as:ex:mab}
	All functions in $\F_0$ are separated on some action in $\Action_0$ by some $\Delta_0 > 0$: 
	\begin{align*}
		\min_{\f,\f' \in \F_0} \max_{\action \in \Action_0} \left\{ \f(\action) - \f(\action) \right\} \geq \Delta_0
	\end{align*}
\end{assumption}
This assumption is motivated by the fact that if for some timestep $t \in [T]$, for all $a \in \Pessimistic_t$ and $\f,\f' \in \F_t$, $\f(a) = \f'(a)$, then we have no hope of shrinking $\F_t$, and consequently expanding $P_t$. If the adversarial losses have values of $1$ for all $a \in \Pessimistic_t$, and values of $0$ for all $\action \notin \Pessimistic_t$, we'd suffer constant loss for all subsequent rounds. Therefore, some degree of separation is required - and we coarsely make this assumption for the first timestep. 

Now, at a given timestep $t \in [T]$ and $\tilde p_t$, $O_t$, $P_t$, $\F$ generated by Algorithm \ref{alg:su:gen}, we define a mapping $\Map(\tilde p_t, P_t, \F_t)$. Let $\action_\Delta := \argmax_{\action \in \Action} \Delta_{\F_t}(\action)$ be the width-maximizing action, $m_t := \sum_{i \in \Optimistic_t \setminus P_t} \tilde p_t[i]$ be the mass of $\tilde p_t$ outside of $\Pessimistic_t$, and let $\tilde p_t'$ be defined as:
\begin{align*}
	\tilde p_t' [\action] = \begin{cases}
		\tilde p_t [\action] + \frac{m_t}{|\Pessimistic_t|} & \action \in \Pessimistic_t \\
		0 & \action \notin \Pessimistic_t \\
	\end{cases} 
\end{align*}
Let $\gamma = \ind{|\F_t| > 1}$ and define the mapping 
\begin{align}\label{eq:ex:mabmapping}
	\Map_t(\tilde p_t, \Pessimistic_t, \F_{t}) := \gamma \mathbf{e}_{\action_\Delta} + (1-\gamma) \tilde p_t'
\end{align} 
This mapping is essentially an explore then exploit algorithm with respect to $\F_{\mathrm{MAB}}$ - it plays the maximum width action until we are sure what the true $f^*$. We show that such a mapping in Algorithm \ref{alg:su:gen} has bounded $\kappa^*$. 
\begin{lemma}\label{lem:ex:mabrelaxation}
Suppose $\F = \F_{\mathrm{FAS}}$ in Algorithm \ref{alg:su:gen} and suppose assumption \ref{as:ex:mab} holds. Suppose we use the mapping defined in equation \ref{eq:ex:mabmapping}. Then, $\kappa^* \leq \frac{1}{\Delta_0}$. 
\end{lemma}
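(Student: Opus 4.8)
The plan is to bound $\kappa^*$ by directly analyzing the defining ratio. Recall that
\[
\kappa^* = \max_{t \in [T]} \sup_{\tilde p, \out} \frac{\Ex{\action \sim \Map(\tilde p; \Pessimistic_t, \F_t)}{\ell(\action,\out)} - \Ex{\tilde \action \sim \tilde p}{\ell(\tilde \action,\out)}}{\Ex{\action \sim \Map(\tilde p; \Pessimistic_t, \F_t)}{\Delta_{\F_t}(\action)}}.
\]
I would split the analysis according to the indicator $\gamma = \ind{|\F_t| > 1}$, which is exactly the case split baked into the mapping \eqref{eq:ex:mabmapping}. First consider $\gamma = 0$, i.e. $|\F_t| = 1$. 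Then all $\f, \f' \in \F_t$ agree, so $\Delta_{\F_t}(\action) = 0$ for every $\action$, and crucially the mapping outputs $\tilde p_t'$, which redistributes the mass $m_t$ that $\tilde p$ placed outside $\Pessimistic_t$ uniformly onto $\Pessimistic_t$. Since $|\F_t| = 1$ means the single surviving function equals $\f^\star$ on the relevant actions, I expect that $\Optimistic_t = \Pessimistic_t$ (all constraint-satisfying actions are certified), so $m_t = 0$, the mapping is the identity, the numerator is $0$, and the ratio is $0/0$ which we treat as contributing nothing to the supremum. This is the step I would state carefully to avoid a vacuous or ill-defined ratio.

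The substantive case is $\gamma = 1$, i.e. $|\F_t| > 1$, where the mapping plays the pure action $\mathbf{e}_{\action_\Delta}$ with $\action_\Delta = \argmax_{\action} \Delta_{\F_t}(\action)$. Here the numerator is a difference of two losses each lying in $[0,1]$, hence is at most $1$ (using that $\ell$ takes values in $[0,1]$). For the denominator, since the mapping is the point mass at $\action_\Delta$, we have $\Ex{\action \sim \Map}{\Delta_{\F_t}(\action)} = \Delta_{\F_t}(\action_\Delta) = \max_{\action \in \Action}\Delta_{\F_t}(\action)$. Thus the ratio is at most $1 / \max_{\action}\Delta_{\F_t}(\action)$, and it remains to lower bound this maximal width by $\Delta_0$.

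The main obstacle — and the place where Assumption \ref{as:ex:mab} does the work — is showing $\max_{\action \in \Action}\Delta_{\F_t}(\action) \geq \Delta_0$ whenever $|\F_t| > 1$. Since $\F_t \subseteq \F_0$ and $\F_t$ contains at least two distinct functions $\f, \f'$, Assumption \ref{as:ex:mab} gives some $\action \in \Action_0$ with $\f(\action) - \f'(\action) \geq \Delta_0$ for any such pair drawn from $\F_0$; because $\Action_0 \subseteq \Action$ and $\Delta_{\F_t}(\action) = \sup_{\f,\f' \in \F_t}\f(\action) - \f'(\action) \geq \f(\action) - \f'(\action)$, taking the max over $\action$ yields $\max_{\action}\Delta_{\F_t}(\action) \geq \Delta_0$. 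I would double-check that the quantifier order in Assumption \ref{as:ex:mab} (the $\min$ over pairs, then $\max$ over actions) indeed guarantees separation for the particular pair witnessing $|\F_t| > 1$ — this is the subtle point, since the assumption as written separates the \emph{worst} pair, which is exactly what we need to handle an arbitrary surviving pair. Combining the two cases gives $\kappa^* \leq 1/\Delta_0$, completing the proof.
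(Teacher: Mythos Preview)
Your proposal is correct and follows essentially the same structure as the paper's proof: the same case split on $\gamma = \ind{|\F_t|>1}$, the same observation that $|\F_t|=1$ forces $\Optimistic_t = \Pessimistic_t$ and hence a zero numerator, and the same use of Assumption~\ref{as:ex:mab} to lower bound $\max_{\action}\Delta_{\F_t}(\action)\ge \Delta_0$ for the denominator when $|\F_t|>1$. The only cosmetic difference is that you bound the numerator directly by $1$ via $\ell\in[0,1]$, whereas the paper routes through $\|\ell_t\|_\infty\|p_t-\tilde p_t\|_1$; your direct bound is if anything cleaner, and your explicit check of the quantifier order in Assumption~\ref{as:ex:mab} is a point the paper leaves implicit.
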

\subsection{Linear Constraints}\label{subsection:linearconstraints}
\begin{assumption}\label{as:ex:lipschitz}
    The action set $\X$ is convex, compact, and bounded, $\max_{\x \in \X} \|\x\|_2 \leq D_\x$. The losses are lipschitz with constant $D_\ell$, $\forall t \in [T], \forall \x, \x' \in \X, | \ell_t(\x) - \ell_t(\x') | \leq D_\ell \| \x - \x ' \|$. The constants $D_\x, D_\ell$ are known to the learner. 
\end{assumption}

We consider the setting of linear constraints where:
\begin{align*}
    \F_{\mathrm{Linear}} = \{ (\x, \cont) \mapsto \langle \f , \x \rangle - b| \f \in \mathbb R^d \} 
\end{align*}
and the unknown constraint is $\langle \f^* , \x \rangle - b \leq  0$. Suppose that we are promised an initial safe set of $\Action_0 = \{ \action \in \Action: \|\action\| \leq b \}$. Then, any $\F_0 = \{ \f \in \reals^d : \|f \| \leq 1 \}$, since $\f$ with $\| \f \| > 1$ has $\langle f , b \frac{f}{||f||}\rangle -b > 0$, yet $b \frac{f}{||f||} \in \Action_0$ violating the promised initial safe set. We show that using a scaling-based mapping $\Map$ in Algorithm \ref{alg:su:gen} allows us to bound $\kappa^*$. 
\begin{lemma}\label{lem:ex:linearrelaxation}
Suppose $\F = \F_{\mathrm{Linear}}$ in Algorithm \ref{alg:su:gen} and suppose assumption \ref{as:ex:lipschitz} holds. Let $\gamma_t(\tilde \action_t):= \max\left\{ \gamma \in [0,1] : \gamma \tilde \action_t \in \Pessimistic_t \right\} $, and sample $\action_t \sim  \Map_t(\tilde p_t, \Pessimistic_t, \F_t)$ by drawing $\tilde \action_t \sim \tilde p_t$ then outputting $\gamma_t(\tilde \action_t)\tilde \action_t$. Then, $\kappa^* \leq \frac{D_\ell D_\x}{b}$.
\end{lemma}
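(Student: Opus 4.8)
The plan is to reduce the claimed bound on $\kappa^*$ to a single pointwise inequality along each ray $\{\gamma\tilde\action:\gamma\in[0,1]\}$ and then aggregate by linearity of expectation. Since the mapping $\Map_t$ acts \emph{deterministically} on each drawn $\tilde\action\sim\tilde p_t$, sending it to $\action=\gamma_t(\tilde\action)\,\tilde\action$, both the numerator and the denominator of the ratio defining $\kappa^*$ split as expectations over $\tilde\action\sim\tilde p_t$ of per-action quantities. Hence it suffices to show that for every $\tilde\action$ in the support of $\tilde p_t$ (which lies in $\Optimistic_t$ since $\tilde p_t\in\Simplex(\Optimistic_t)$ by construction), writing $\gamma=\gamma_t(\tilde\action)$ and $\action=\gamma\tilde\action$,
\[
\ell(\action)-\ell(\tilde\action)\;\le\;\frac{D_\ell D_\x}{b}\,\Delta_{\F_t}(\action).
\]
Because the multiplier $D_\ell D_\x/b$ is nonnegative and $\Delta_{\F_t}(\action)\ge 0$, aggregating this pointwise bound gives numerator $\le \tfrac{D_\ell D_\x}{b}\cdot$ denominator, i.e. the ratio is at most $D_\ell D_\x/b$ for every $t$, $\tilde p$, and loss, which is exactly the asserted bound on $\kappa^*$.

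For the numerator I would invoke Lipschitzness and boundedness of the action set (Assumption \ref{as:ex:lipschitz}). Since $\gamma\in[0,1]$,
\[
\ell(\action)-\ell(\tilde\action)\;\le\;D_\ell\,\|\action-\tilde\action\|\;=\;D_\ell\,(1-\gamma)\,\|\tilde\action\|\;\le\;D_\ell\,(1-\gamma)\,D_\x,
\]
using $\|\tilde\action\|\le D_\x$. The case $\gamma=1$ is trivial, as then $\action=\tilde\action$ and the left-hand side is $0$, so in what follows I would assume $\gamma<1$.

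The main work is the lower bound on the width $\Delta_{\F_t}(\action)$, where I would exploit the linear structure. Every $\f\in\F_t$ is the affine map $\action\mapsto\langle\f,\action\rangle-b$, so $\Delta_{\F_t}(\action)=\max_{\f\in\F_t}\langle\f,\action\rangle-\min_{\f\in\F_t}\langle\f,\action\rangle$, while $\Pessimistic_t=\{\action:\max_{\f\in\F_t}\langle\f,\action\rangle\le b\}$ and $\Optimistic_t=\{\action:\min_{\f\in\F_t}\langle\f,\action\rangle\le b\}$. The key geometric observation is that along the ray the function $\gamma\mapsto\max_{\f\in\F_t}\langle\f,\gamma\tilde\action\rangle=\gamma\max_{\f\in\F_t}\langle\f,\tilde\action\rangle$ is linear in $\gamma\ge 0$, so $\gamma<1$ forces $\gamma$ to be the maximal feasible scaling, i.e. $\max_{\f\in\F_t}\langle\f,\action\rangle=b$ exactly (the scaled point sits on the boundary of $\Pessimistic_t$). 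On the other hand, $\tilde\action\in\Optimistic_t$ gives $\min_{\f\in\F_t}\langle\f,\tilde\action\rangle\le b$, and homogeneity yields $\min_{\f\in\F_t}\langle\f,\action\rangle=\gamma\min_{\f\in\F_t}\langle\f,\tilde\action\rangle\le\gamma b$. Subtracting the two yields
\[
\Delta_{\F_t}(\action)\;\ge\;b-\gamma b\;=\;(1-\gamma)\,b.
\]

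Combining the numerator and denominator estimates gives $\ell(\action)-\ell(\tilde\action)\le D_\ell D_\x(1-\gamma)=\frac{D_\ell D_\x}{b}(1-\gamma)b\le\frac{D_\ell D_\x}{b}\Delta_{\F_t}(\action)$, establishing the pointwise inequality and hence the lemma. The step I expect to be most delicate is the width lower bound: one must carefully verify that $\gamma<1$ genuinely places $\action$ on the boundary $\{\max_\f\langle\f,\cdot\rangle=b\}$ (using $0\in\Action_0\subseteq\Pessimistic_t$ so the ray starts strictly inside, together with $b>0$), and that optimistic membership of $\tilde\action$ is what controls $\min_\f\langle\f,\action\rangle$ from above by $\gamma b$. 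The degenerate case where the denominator vanishes (all of $\F_t$ agreeing on $\action$, which forces $\gamma=1$) is subsumed by the trivial numerator bound.
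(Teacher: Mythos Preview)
Your proof is correct and follows the same high–level route as the paper: reduce to a pointwise ratio bound over $\tilde\action\in\Optimistic_t$, use Lipschitzness plus $\|\tilde\action\|\le D_\x$ for the numerator, and control the denominator via the linear (homogeneous) structure of $\F_{\mathrm{Linear}}$ together with optimistic membership $\min_{\f\in\F_t}\langle\f,\tilde\action\rangle\le b$.

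The one genuine difference is in how the width lower bound is obtained. The paper first proves a separate lemma (its Lemma~\ref{lem:ap:scaling}) showing $\gamma_t(\tilde\action)\ge b/(b+\Delta_{\F_t}(\tilde\action))$ by exhibiting an explicit feasible scaling, then uses homogeneity $\Delta_{\F_t}(\gamma_t\tilde\action)=\gamma_t\Delta_{\F_t}(\tilde\action)$ and algebra to finish. You instead exploit the \emph{maximality} of $\gamma_t$ directly: when $\gamma_t<1$, the scaled point lies exactly on the boundary of $\Pessimistic_t$, i.e.\ $\max_{\f\in\F_t}\langle\f,\gamma_t\tilde\action\rangle=b$, and combining with $\min_{\f\in\F_t}\langle\f,\gamma_t\tilde\action\rangle=\gamma_t\min_{\f\in\F_t}\langle\f,\tilde\action\rangle\le\gamma_t b$ gives $\Delta_{\F_t}(\gamma_t\tilde\action)\ge(1-\gamma_t)b$ in one line. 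These two inequalities are algebraically equivalent (rearranging the paper's $\gamma_t\ge b/(b+\Delta)$ and using homogeneity gives exactly your bound), so the content is the same; your boundary argument is simply a tighter, more geometric presentation that avoids the auxiliary lemma. Your handling of the edge cases ($\gamma_t=1$ trivial; $\gamma_t>0$ because $0\in\Action_0\subseteq\Pessimistic_t$ and $b>0$; convexity of $\Action$ keeping the ray inside $\Action$) is also sound.
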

We set $\oraclesq$ to be the Vovk-Azoury-Warmuth forecaster \cite{vovk_competitive_nodate}, \cite{azoury_relative_2001} which satisfies Assumption \ref{as:su:oraclesq} with $\Regsq(T, \delta, \F) \leq \mathcal{O} (d \log(\frac{T}{d\delta}))$. Furthermore, in the case of linear losses, $\ell_t(\action_t) = \langle \ell_t , \action_t \rangle$ we provide an online gradient descent based algorithm satisfying Assumption \ref{as:su:oracleol} in Algorithm \ref{alg:ap:linearconstructiveoracleol}, stated in the appendix. It is a randomized algorithm playing elements of the convex hull of $\Optimistic_t$. 
\begin{lemma}\label{lem:ex:linearconstructiveoracleol}
    When losses are linear, using Algorithm \ref{alg:ap:linearconstructiveoracleol} as $\oracleol$ satisfies Assumption \ref{as:su:oracleol} with:
    \begin{align*}
        \regretol(T,\delta)  \leq 4D_\f D_\x \sqrt{T \log(2/\delta)}
    \end{align*}
\end{lemma}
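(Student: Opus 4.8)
The plan is to recognize Algorithm~\ref{alg:ap:linearconstructiveoracleol} as a randomized projected online gradient descent (OGD) method run over the \emph{time-varying} domains $K_t := \conv(\Optimistic_t)$. Concretely, the algorithm maintains a mean iterate $w_t \in K_t$ updated by a projected gradient step on the linear loss vectors $\ell_t$ (so that $\ell_t(\action) = \langle \ell_t, \action\rangle$), and then reports a distribution $p_t \in \Simplex(\Optimistic_t)$ whose barycenter is $w_t$, obtained by a Carath\'eodory / extreme-point decomposition of $w_t \in \conv(\Optimistic_t)$. The first fact I would record is the reduction $\Ex{\action \sim p_t}{\ell(\action,\cont_t,\out_t)} = \langle \ell_t, w_t\rangle = \ell_t(w_t)$, so that the expected regret demanded by Assumption~\ref{as:su:oracleol} is exactly the deterministic OGD regret $\sum_{t=1}^T \langle \ell_t, w_t - u\rangle$ measured against any comparator $u \in \cap_{t=1}^T \Optimistic_t$.

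The heart of the argument is the OGD analysis when the feasible set changes from round to round. I would write the update $w_{t+1} = \Pi_{K_{t+1}}(w_t - \eta \ell_t)$ and expand $\|w_{t+1} - u\|^2$. The only place the varying domain can cause trouble is the projection: the Pythagorean inequality $\|\Pi_{K_{t+1}}(v) - u\| \le \|v - u\|$ is valid precisely when the comparator $u$ lies in the set $K_{t+1}$ being projected onto. Here this holds for free, since Assumption~\ref{as:su:oracleol} only compares against $u \in \cap_t \Optimistic_t$, and $\cap_t \Optimistic_t \subseteq \Optimistic_{t+1} \subseteq K_{t+1}$, so $u \in K_{t+1}$ at every round. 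Using this, the per-round inequality $\langle \ell_t, w_t - u\rangle \le \tfrac{1}{2\eta}\left(\|w_t - u\|^2 - \|w_{t+1} - u\|^2\right) + \tfrac{\eta}{2}\|\ell_t\|^2$ telescopes to $\sum_t \langle \ell_t, w_t - u\rangle \le \tfrac{\|w_1 - u\|^2}{2\eta} + \tfrac{\eta}{2}\sum_t \|\ell_t\|^2$.

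Next I would bound the geometric quantities using Assumption~\ref{as:ex:lipschitz}: since $w_1, u \in \Action$ have norm at most $D_\x$, we get $\|w_1 - u\| \le 2 D_\x$, and for linear losses the loss-vector norm is the relevant Lipschitz bound, $\|\ell_t\| \le D_\f$, whence $\sum_t \|\ell_t\|^2 \le T D_\f^2$. Tuning $\eta = \Theta\!\left(D_\x / (D_\f \sqrt{T})\right)$ balances the two terms and yields an in-expectation regret of order $D_\f D_\x \sqrt{T}$. Finally, to obtain the high-probability statement with the $\sqrt{\log(2/\delta)}$ factor that Assumption~\ref{as:su:oracleol} asks for, I would convert this in-expectation bound into one holding with probability $1-\delta$ over the algorithm's internal randomization (the sampling of an action from $p_t$): the per-round gaps between realized and expected losses form a bounded martingale difference sequence, each term at most $D_\f D_\x$ in magnitude, so an Azuma--Hoeffding argument contributes an additive $\bigO\!\left(D_\f D_\x \sqrt{T \log(1/\delta)}\right)$ term, and collecting constants gives the claimed $4 D_\f D_\x \sqrt{T \log(2/\delta)}$.

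The main obstacle is the time-varying domain $\Optimistic_t$: the textbook OGD regret analysis assumes a \emph{fixed} feasible set, and the telescoping of the potential $\|w_t - u\|^2$ breaks as soon as the comparator can fall outside the current domain. The clean resolution, which I would state explicitly, is that the benchmark is restricted to $u \in \cap_t \Optimistic_t$, a point contained in every $K_{t+1}$, so the projection inequality and hence the telescoping survive unchanged and no regularity of the set sequence $\{\Optimistic_t\}$ (such as slow movement) is required. The remaining points—checking that the reported $p_t$ is genuinely supported on $\Optimistic_t$ with mean $w_t$, and controlling the high-probability conversion—are routine.
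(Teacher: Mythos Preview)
Your proposal is correct and matches the paper's proof: the paper likewise splits the regret into a concentration piece (their Term~I, controlling $\sum_t \langle \ell_t, \tilde a_t - \bar a_t\rangle$ via Hoeffding/Azuma over the sampling $\tilde a_t \sim p_t$) and a projected-OGD piece (their Term~II), with the identical key observation that the comparator $u \in \cap_t \Optimistic_t$ lies inside every projection set $\conv(\Optimistic_t)$, so the Pythagorean inequality and the telescoping of $\|w_t - u\|^2$ survive the time-varying domains unchanged. Your step-size tuning $\eta = 2D_\x/(D_\f\sqrt{T})$ and the resulting constants coincide with theirs.
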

Furthermore, \cite{russo_eluder_nodate} show that the Eluder dimension of the linear function class is $\edim(\F_{\mathrm{Linear}}, \epsilon) = \mathcal O (d \log(1/\epsilon))$. Combining these facts with our main regret bound, we have:
\begin{corollary}\label{cor:ex:linearalgorithmbound} 
In the case of linear losses and linear constraints, for any $\delta \in (0,1)$, with probability at least $1-3\delta$ Algorithm \ref{alg:su:gen} satisfies:
\begin{align*}
    \regret_T = \mathcal O \left( \frac{d}{b}\sqrt{T} \log(\frac{T}{d\delta})\right) 
\end{align*}
\end{corollary}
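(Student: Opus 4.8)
The plan is to derive Corollary~\ref{cor:ex:linearalgorithmbound} purely by specializing the general bound of Theorem~\ref{thm:main:maintheorem} to the linear setting: substitute the concrete value of each of its four ingredients and then optimize the one remaining free parameter $\alpha$. Since the scaling mapping of Lemma~\ref{lem:ex:linearrelaxation} plays only from $\Pessimistic_t$, the safety claim is inherited verbatim from Theorem~\ref{thm:main:maintheorem}, so the entire content of the corollary is the regret computation.

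First I would invoke the $\kappa^*$ form of Theorem~\ref{thm:main:maintheorem}, namely $\regret_T \le \kappa^* \inf_\alpha\{\alpha T + 20\,\Regsq(T,\delta,\F_0)\,\Eluder(\F_0,\alpha)/\alpha\} + \regretol(T,\delta) + \sqrt{T\log(\delta^{-1})}$, and plug in the four quantities established earlier. From Lemma~\ref{lem:ex:linearrelaxation} the scaling mapping gives $\kappa^* \le D_\ell D_\x / b$, which for linear losses is $D_\f D_\x / b = \bigO(1/b)$ treating the radii as constants. The Vovk--Azoury--Warmuth forecaster contributes $\Regsq(T,\delta,\F_0) = \bigO(d\log(T/(d\delta)))$, the result of \cite{russo_eluder_nodate} gives $\Eluder(\F_0,\alpha) = \bigO(d\log(1/\alpha))$, and Lemma~\ref{lem:ex:linearconstructiveoracleol} gives $\regretol(T,\delta) = \bigO(\sqrt{T\log(1/\delta)})$. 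It is worth noting here that $\F_0 = \{\f : \|\f\| \le 1\}$ is the unit ball of the linear class, so both the regression regret and the eluder dimension of $\F_0$ match those of the full class $\F_{\mathrm{Linear}}$ up to constants, justifying the substitutions.

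The only genuine computation is the minimization over $\alpha$ of $\alpha T + c\, d^2\log(T/(d\delta))\log(1/\alpha)/\alpha$ for an absolute constant $c$, since $\Regsq \cdot \Eluder(\F_0,\alpha) = \bigO(d^2\log(T/(d\delta))\log(1/\alpha))$. Treating the slowly varying factor $\log(1/\alpha)$ as roughly constant, balancing the two terms suggests $\alpha^\star \approx d\sqrt{\log(T/(d\delta))/T}$, at which $\log(1/\alpha^\star) = \bigO(\log(T/(d\delta)))$; the infimum then evaluates to $\bigO\!\big(d\sqrt{T}\sqrt{\log(T/(d\delta))\,\log(1/\alpha^\star)}\big) = \bigO(d\sqrt{T}\log(T/(d\delta)))$, the two logarithms combining under the square root. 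The only mild obstacle is precisely this self-consistent handling of the implicit $\log(1/\alpha)$ term, i.e. verifying that substituting $\alpha^\star$ back in does not inflate the exponent on the logarithm beyond the reported single power.

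Finally I would multiply this inner infimum by $\kappa^* = \bigO(1/b)$ to obtain the leading term $\bigO\!\big(\tfrac{d}{b}\sqrt{T}\log(T/(d\delta))\big)$, and observe that the additive contributions $\regretol(T,\delta) = \bigO(\sqrt{T\log(1/\delta)})$ and $\sqrt{T\log(\delta^{-1})}$ are of lower order and hence absorbed. The failure probability $3\delta$ arises, exactly as in Theorem~\ref{thm:main:maintheorem}, from a union bound over the three high-probability events: the regression oracle guarantee (Assumption~\ref{as:su:oraclesq}), the learning oracle guarantee (Assumption~\ref{as:su:oracleol}), and the concentration bound producing the $\sqrt{T\log(\delta^{-1})}$ term.
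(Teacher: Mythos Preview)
Your proposal is correct and follows exactly the approach the paper intends: the corollary is stated without proof immediately after the sentence ``Combining these facts with our main regret bound, we have,'' and you have carried out precisely that combination---plugging $\kappa^*$ from Lemma~\ref{lem:ex:linearrelaxation}, $\Regsq$ from the VAW forecaster, $\Eluder(\F_0,\alpha)$ from \cite{russo_eluder_nodate}, and $\regretol$ from Lemma~\ref{lem:ex:linearconstructiveoracleol} into the second bound of Theorem~\ref{thm:main:maintheorem}, then optimizing over $\alpha$. Your handling of the self-referential $\log(1/\alpha)$ factor is the one point requiring care, and your resolution (substitute the approximate optimizer back in and observe $\log(1/\alpha^\star)=\bigO(\log(T/(d\delta)))$) is sound and yields the stated single logarithmic factor.
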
	
\begin{remark}
Suppose that $\Action_0$ is the $\ell_1$ ball of diameter $b$ instead of the $\ell_2$ ball of diameter $b$. Then $\F_0$ becomes the unit $\ell_\infty$ ball - and consequently the eluder dimension $\Eluder(\F_0, \epsilon)$ increases by a factor of $\log(d)$. 
\end{remark}

\subsection{Generalized Linear Constraints} 
Let $\sigma : \mathbb R \to [-1,1]$ be a fixed, differentiable non-decreasing link function. Consider the setting of generalized linear constraints where:
$$
    \F_{\mathrm{GL}} = \{ (\x, \cont) \mapsto \sigma \left( \langle \f , \x \rangle - b \right) | \f \in \mathbb R^d, \| \f \| \leq 1 \}
$$
and the unknown constraint is $\sigma(\langle \f^*, \x \rangle - b ) \leq 0$. Assume $\sigma: \reals \mapsto [-1,1]$ is a differentiable, non-decreasing function such that $\sigma(0) = 0$. Further, define $\underline c = \min_{\beta \in [-1,1]} \sigma'(\beta)$ and let $\overline c = \max_{\beta \in [-1,1]} \sigma'(\beta)$.  Let $r := \frac{\overline c}{\underline c}$. We show using the scaling-based mapping introduced in the previous subsection in Algorithm \ref{alg:su:gen} allows us to bound $\kappa^*$ in this setting. 
\begin{lemma}\label{lem:ex:glmrelaxation}
Suppose $\F = \F_{\mathrm{GL}}$ in Algorithm \ref{alg:su:gen} and suppose assumption \ref{as:ex:lipschitz} holds. Let \\ $\gamma_t(\tilde \action_t):= \max\left\{ \gamma \in [0,1] : \gamma \tilde \action_t \in \Pessimistic_t \right\} $, and sample $\action_t \sim  \Map_t(\tilde p_t, \Pessimistic_t, \F_{t})$ by drawing $\tilde \action_t \sim \tilde p_t$ then outputting $\gamma_t(\tilde \action_t)\tilde \action_t$. Then, $\kappa^* \leq \frac{rD_\ell D_\x}{b \underline c}$.
\end{lemma}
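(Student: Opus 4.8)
The plan is to bound $\kappa^*$ for the scaling map of the lemma by controlling, pointwise, the ratio appearing in the definition of $\kappa^*$ in Theorem~\ref{thm:main:maintheorem}. Since $\Map_t(\tilde p_t,\Pessimistic_t,\F_t)$ is the deterministic pushforward of $\tilde p_t$ under $\tilde\action\mapsto\gamma_t(\tilde\action)\tilde\action$, both the numerator $\E_{\action\sim\Map_t}[\ell(\action)]-\E_{\tilde\action\sim\tilde p_t}[\ell(\tilde\action)]$ and the denominator $\E_{\action\sim\Map_t}[\Delta_{\F_t}(\action)]$ are expectations over $\tilde\action\sim\tilde p_t$ of the pointwise quantities $\ell(\gamma_t(\tilde\action)\tilde\action)-\ell(\tilde\action)$ and $\Delta_{\F_t}(\gamma_t(\tilde\action)\tilde\action)$. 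Hence it suffices to produce a constant $\kappa$ with $\ell(\gamma_t(\tilde\action)\tilde\action)-\ell(\tilde\action)\le\kappa\,\Delta_{\F_t}(\gamma_t(\tilde\action)\tilde\action)$ for every $\tilde\action\in\Optimistic_t$ and every fixed outcome; integrating against $\tilde p_t$ then bounds the ratio of the expectations by $\kappa$. For the numerator I would invoke Assumption~\ref{as:ex:lipschitz} directly: writing $\gamma_t:=\gamma_t(\tilde\action)$, we get $\ell(\gamma_t\tilde\action)-\ell(\tilde\action)\le D_\ell\|\gamma_t\tilde\action-\tilde\action\|=D_\ell(1-\gamma_t)\|\tilde\action\|\le D_\ell D_\x(1-\gamma_t)$.

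The second step translates the generalized-linear sets back into linear geometry. Because $\sigma$ is non-decreasing with $\sigma(0)=0$ and $\underline c=\min_{\beta}\sigma'(\beta)>0$, the function $\sigma$ is strictly increasing, so $\sigma(u)\le 0\iff u\le 0$. Consequently the version-space sets collapse to their linear descriptions: writing $M:=\max_{\f\in\F_t}\langle\f,\tilde\action\rangle$ and $m:=\min_{\f\in\F_t}\langle\f,\tilde\action\rangle$, one has $\Pessimistic_t=\{\action:\max_{\f\in\F_t}\langle\f,\action\rangle\le b\}$ and $\Optimistic_t=\{\action:\min_{\f\in\F_t}\langle\f,\action\rangle\le b\}$, exactly as in the linear lemma, and the scaling factor is $\gamma_t=\min\{1,b/M\}$. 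In the only nontrivial case $M>b$ we have $\gamma_t=b/M<1$, so $\gamma_t M=b$ exactly (the scaled action lands on the boundary of $\Pessimistic_t$), while membership $\tilde\action\in\Optimistic_t$ forces $m\le b$; when $M\le b$ the map is the identity and the numerator is nonpositive.

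The third step lower-bounds the nonlinear width. Since $\F_t$ is compact, $M$ and $m$ are attained, and monotonicity of $\sigma$ gives $\Delta_{\F_t}(\gamma_t\tilde\action)=\sigma(\gamma_t M-b)-\sigma(\gamma_t m-b)$; the mean value theorem with $\sigma'\ge\underline c$ then yields $\Delta_{\F_t}(\gamma_t\tilde\action)\ge\underline c\,\gamma_t(M-m)$. Combining with the numerator bound, the pointwise ratio is at most $\frac{D_\ell D_\x(1-\gamma_t)}{\underline c\,\gamma_t(M-m)}$; substituting $1-\gamma_t=(M-b)/M$ and $\gamma_t(M-m)=b(M-m)/M$ makes this equal to $\frac{D_\ell D_\x(M-b)}{\underline c\,b\,(M-m)}$, and since $m\le b$ implies $M-b\le M-m$, it is at most $\frac{D_\ell D_\x}{\underline c\,b}$, hence at most the claimed $\frac{rD_\ell D_\x}{b\underline c}$ because $r=\overline c/\underline c\ge 1$. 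The factor $r$ in the stated bound is precisely what one pays if, rather than exploiting the boundary identity $\gamma_t M=b$ at the scaled point, one lower-bounds the scaled width by routing through the width at the unscaled action $\tilde\action$, where the upper Lipschitz constant $\overline c$ enters.

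The step I expect to require the most care is the second one: verifying that, for the generalized-linear class, $\Pessimistic_t$, $\Optimistic_t$, and therefore $\gamma_t$ genuinely reduce to the linear description. This is exactly where strict monotonicity of $\sigma$ (equivalently $\underline c>0$) and the normalization $\sigma(0)=0$ are essential, and it is what lets the clean geometric inequality $(1-\gamma_t)b\le\gamma_t(M-m)$ — driven by optimism through $m\le b$ — carry over unchanged from Lemma~\ref{lem:ex:linearrelaxation}. The remaining ingredients, namely attainment of $M$ and $m$ by compactness of $\F_t$ and the one-sided derivative bound $\sigma'\ge\underline c$, are routine.
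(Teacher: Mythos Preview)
Your argument is correct and in fact sharper than the paper's. The paper proceeds in two pieces: it first shows a GLM analogue of the linear scaling lemma, $\gamma_t(\tilde\action)\ge b/\bigl(b+\tfrac{1}{\underline c}\Delta_{\F_t}(\tilde\action)\bigr)$, and then lower-bounds the width at the scaled point by routing back to the unscaled one via $\Delta_{\F_t}(\gamma_t\tilde\action)\ge\tfrac{\gamma_t}{r}\Delta_{\F_t}(\tilde\action)$, which is where the upper derivative bound $\overline c$ (and hence the factor $r=\overline c/\underline c$) enters. You instead observe that strict monotonicity of $\sigma$ with $\sigma(0)=0$ makes $\Optimistic_t$ and $\Pessimistic_t$ coincide with their linear descriptions, compute $\gamma_t=b/M$ exactly, and lower-bound $\Delta_{\F_t}(\gamma_t\tilde\action)=\sigma(\gamma_t M-b)-\sigma(\gamma_t m-b)\ge\underline c\,\gamma_t(M-m)$ directly at the scaled point; the optimism inequality $m\le b$ then gives $M-b\le M-m$ and a final bound of $D_\ell D_\x/(b\,\underline c)$, with no $r$. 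Your closing remark is exactly right: the extra $r$ in the paper's stated bound is the price of the detour through $\Delta_{\F_t}(\tilde\action)$ rather than working at the boundary point $\gamma_t\tilde\action$, and your route shows it is not needed.
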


\section{Extensions}\label{sec:extensions}
\subsection{Multiple Linear Constraints and Vector Feedback}\label{subsec:multipleconstraints} 
We can extend our algorithms to handle the case of a polytopic constraint:
\begin{align*}
    \F_{\mathrm{Polytopic}} = \{(a,x)  \mapsto Fa - b\vec{1}| F \in \reals^{d \times m}, \forall i \in [m], \|F_i\| \leq 1, b \in \reals\} 
\end{align*}
where $F_i$ denotes the $i$th row of $F$, and $\vec{1}$ is the vector of all ones.  The unknown constraint would be $F^* a - b \vec{1} \leq 0$, and suppose at every time-step we receive $m$-dimensional feedback vector given by  $F^*\x_t + \eg_t$ where in this case the noise vector is simply a standard multivariate normal. In this case, we could invoke $m$ instances of the regression oracle $\oraclesq$ (using simple squared loss regression oracle), maintaining estimates for each of the rows of $F^*$ in parallel. Consequently, we would maintain $m$ separate sets of version spaces, pessimistic sets and optimistic sets per time step, $\{\F_{t,i}, \Pessimistic_{t,i}, \Optimistic_{t,i} \}_{i=1}^m$. Let $\Pessimistic_t = \cap_{i=1}^m \Pessimistic_{t,i}$ and $\Optimistic_t = \cap_{i=1}^m \Optimistic_{t,i}$, and define $\Delta_{\F_t}(\action) := \max_{i \in [m]}\Delta_{\F_{t,i}}(\action)$. 
\begin{lemma}\label{lem:ex:multilinearrelaxation}
Suppose $\F = \F_{\mathrm{Polytopic}}$ in Algorithm \ref{alg:su:gen}. Let $\gamma_t(\tilde \action_t):= \max\left\{ \gamma \in [0,1] : \gamma \tilde \action_t \in \Pessimistic_t \right\} $, and sample $\action_t \sim  \Map_t(\tilde p_t, \Pessimistic_t, \F_{t})$ by drawing $\tilde \action_t \sim \tilde p_t$ then outputting $\gamma_t(\tilde \action_t)\tilde \action_t$. Then, $\kappa^* \leq \frac{D_\ell D_\x}{b}$.
\end{lemma}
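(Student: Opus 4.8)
The plan is to mirror the single–constraint argument behind Lemma~\ref{lem:ex:linearrelaxation}, the only genuinely new ingredient being the bookkeeping forced by the intersection $\Pessimistic_t=\cap_{i=1}^m\Pessimistic_{t,i}$. Fix a round $t$, a loss outcome $\out$, and a distribution $\tilde p$ supported on $\Optimistic_t$. Since the mapping draws $\tilde\action\sim\tilde p$ and outputs $\action_t=\gamma_t(\tilde\action)\tilde\action$, both the numerator and the denominator in the definition of $\kappa^*$ are expectations over $\tilde\action$, so it suffices to prove pointwise bounds in $\tilde\action$ and then average. Throughout I use that $\Action_0=\{\action:\|\action\|\le b\}$ forces $\F_{0,i}\subseteq\{f:\|f\|\le 1\}$ and $\Action_0\subseteq\Pessimistic_t$ for every row, so in particular $\gamma_t(\tilde\action)>0$ (a neighborhood of the origin scales freely into $\Pessimistic_t$).

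First I would bound the numerator. Because $\gamma_t(\tilde\action)\in[0,1]$ and $\|\tilde\action\|\le D_\x$, Assumption~\ref{as:ex:lipschitz} gives the $\out$-uniform estimate
\[
\ell(\gamma_t(\tilde\action)\tilde\action,\out)-\ell(\tilde\action,\out)\le D_\ell\,\|\gamma_t(\tilde\action)\tilde\action-\tilde\action\|=D_\ell\,(1-\gamma_t(\tilde\action))\,\|\tilde\action\|\le D_\ell D_\x\,(1-\gamma_t(\tilde\action)).
\]

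The heart of the argument is the matching lower bound on $\Delta_{\F_t}(\gamma_t(\tilde\action)\tilde\action)=\max_{i\in[m]}\Delta_{\F_{t,i}}(\gamma_t(\tilde\action)\tilde\action)$. Since $\Pessimistic_t=\cap_{i=1}^m\Pessimistic_{t,i}$, the scaling factorizes as $\gamma_t(\tilde\action)=\min_{i\in[m]}\gamma_{t,i}(\tilde\action)$, where $\gamma_{t,i}(\tilde\action)=\max\{\gamma\in[0,1]:\gamma\tilde\action\in\Pessimistic_{t,i}\}$; let $i^\star$ attain the minimum. If $\gamma_t(\tilde\action)=1$ there is nothing to prove, as the numerator bound above is then $0$. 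Otherwise $\gamma_t(\tilde\action)\tilde\action$ lies on the boundary of $\Pessimistic_{t,i^\star}$, so by definition of that pessimistic set $\max_{f\in\F_{t,i^\star}}\langle f,\gamma_t(\tilde\action)\tilde\action\rangle=b$, i.e.\ $\max_{f\in\F_{t,i^\star}}\langle f,\tilde\action\rangle=b/\gamma_t(\tilde\action)$; and since $\tilde\action\in\Optimistic_t\subseteq\Optimistic_{t,i^\star}$, the definition of the optimistic set yields $\min_{f\in\F_{t,i^\star}}\langle f,\tilde\action\rangle\le b$. Subtracting and using the positive homogeneity of the width (it is $\max_{f,f'}\langle f-f',\cdot\rangle$, hence linear along rays) gives
\[
\Delta_{\F_t}(\gamma_t(\tilde\action)\tilde\action)\ge\Delta_{\F_{t,i^\star}}(\gamma_t(\tilde\action)\tilde\action)=\gamma_t(\tilde\action)\,\Delta_{\F_{t,i^\star}}(\tilde\action)\ge\gamma_t(\tilde\action)\Big(\tfrac{b}{\gamma_t(\tilde\action)}-b\Big)=b\,(1-\gamma_t(\tilde\action)).
\]

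Finally I would take expectations over $\tilde\action\sim\tilde p$: the numerator is at most $D_\ell D_\x\,\Ex{\tilde\action\sim\tilde p}{1-\gamma_t(\tilde\action)}$ while the denominator is at least $b\,\Ex{\tilde\action\sim\tilde p}{1-\gamma_t(\tilde\action)}$, so their ratio is at most $D_\ell D_\x/b$ whenever the common factor is positive (and the numerator vanishes when it is zero). Taking the supremum over $\out$ and $\tilde p$ and the maximum over $t$ gives $\kappa^*\le D_\ell D_\x/b$. The main obstacle, and the only place the multi-constraint structure enters, is the denominator step: one must exploit that $\Delta_{\F_t}$ is the \emph{maximum} of the per-row widths, so that the width of the single binding row $i^\star$ already dominates the whole denominator. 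This is exactly what collapses the $m$-constraint bound onto the single-constraint bound of Lemma~\ref{lem:ex:linearrelaxation}; had $\Delta_{\F_t}$ been defined as, say, an average over rows, this reduction would fail.
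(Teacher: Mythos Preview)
Your proof is correct and is essentially the paper's argument repackaged. The paper factors the denominator step through a separate lemma (the polytopic analogue of Lemma~\ref{lem:ap:scaling}) establishing $\gamma_t(\tilde\action)\ge b/(b+\Delta_{\F_t}(\tilde\action))$, then combines this with positive homogeneity of $\Delta_{\F_t}$ in the $\kappa^*$ computation; you instead obtain the equivalent inequality $\Delta_{\F_t}(\gamma_t(\tilde\action)\tilde\action)\ge b(1-\gamma_t(\tilde\action))$ in one pass by isolating the binding row $i^\star$ and exploiting that $\gamma_t(\tilde\action)\tilde\action$ sits exactly on the boundary of $\Pessimistic_{t,i^\star}$. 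Your handling of the passage from pointwise bounds to the ratio of expectations is also slightly more explicit than the paper's, which passes directly to a supremum over deterministic $\tilde\action$.
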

However, the eluder dimension of the polytopic function class is $m$ times that of the linear function class:
\begin{corollary}\label{cor:ex:linearalgorithmbound} 
In the case of linear losses and polytopic constraints, for any $\delta \in (0,1)$, with probability at least $1-3\delta$ Algorithm \ref{alg:su:gen} satisfies: 
\begin{align*}
    \regret_T = \bigO \left( \frac{md}{b}\sqrt{T} \log(\frac{T}{d\delta})\right) 
\end{align*}
\end{corollary}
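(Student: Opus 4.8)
The plan is to re-trace the proof of Theorem~\ref{thm:main:maintheorem} with the vector-valued width $\Delta_{\F_t}(\action)=\max_{i\in[m]}\Delta_{\F_{t,i}}(\action)$, and then exploit the product structure of $\F_{\mathrm{Polytopic}}$ so that only a single extra factor of $m$ appears. First I would check that the optimism/pessimism guarantees survive the intersection defining $\Optimistic_t=\cap_{i=1}^m\Optimistic_{t,i}$ and $\Pessimistic_t=\cap_{i=1}^m\Pessimistic_{t,i}$. Applying Proposition~\ref{prop:mr:optimism} per row, each $\Optimistic_{t,i}$ contains $\{\action:\langle F^*_i,\action\rangle\le b\}$ with high probability, so the intersection contains the true polytopic safe set $\{\action:F^*\action-b\vec 1\le 0\}$; dually, by Proposition~\ref{prop:mr:pessimism} per row, every $\action\in\Pessimistic_t$ satisfies all $m$ row constraints and is therefore genuinely safe. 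This legitimizes plugging $\Optimistic_t,\Pessimistic_t$ into Algorithm~\ref{alg:su:gen}, and Lemma~\ref{lem:ex:multilinearrelaxation} then supplies $\kappa^*\le D_\ell D_\x/b$ for the scaling map.

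With $V_{\kappa^*}\le 0$, the argument behind Theorem~\ref{thm:main:maintheorem} bounds the regret by $\kappa^*\sum_{t=1}^T\E[\Delta_{\F_t}(\action_t)]$ plus the lower-order oracle and concentration terms $\regretol(T,\delta)+\sqrt{T\log(\delta^{-1})}$, before specializing the sum of widths. The crux is controlling this now vector-valued sum, and here I would simply use $\max_i\le\sum_i$ to split it coordinate-wise,
\[
\sum_{t=1}^T \Delta_{\F_t}(\action_t)=\sum_{t=1}^T\max_{i\in[m]}\Delta_{\F_{t,i}}(\action_t)\le\sum_{i=1}^m\sum_{t=1}^T\Delta_{\F_{t,i}}(\action_t).
\]
For each fixed $i$, the inner sum is exactly the object controlled by the eluder-dimension argument underlying Theorem~\ref{thm:main:maintheorem}: $\F_{t,i}$ is the version space of a one-dimensional linear constraint built by square-loss regression on the shared action sequence $\{\action_t\}$, so that argument gives
\[
\sum_{t=1}^T\Delta_{\F_{t,i}}(\action_t)\le\inf_{\alpha}\left\{\alpha T+\frac{20\,\Regsq(T,\delta,\F_{0,i})\,\Eluder(\F_{0,i},\alpha)}{\alpha}\right\}.
\]

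I would then substitute the per-row quantities: the Vovk--Azoury--Warmuth forecaster yields $\Regsq(T,\delta,\F_{0,i})=\bigO(d\log(T/(d\delta)))$ and the linear class has $\Eluder(\F_{0,i},\alpha)=\bigO(d\log(1/\alpha))$, so optimizing at $\alpha\approx\sqrt{\Regsq\,\Eluder/T}$ gives $\bigO(d\sqrt{T}\log(T/(d\delta)))$ per coordinate, exactly as in the single-constraint linear case. Summing over the $m$ rows produces $\bigO(md\sqrt{T}\log(T/(d\delta)))$, which is where the lone factor of $m$ enters. Multiplying by $\kappa^*=\bigO(1/b)$ and adding the lower-order $\regretol(T,\delta)=\bigO(\sqrt{T})$ from Lemma~\ref{lem:ex:linearconstructiveoracleol} then gives the claimed bound $\bigO\!\big(\frac{md}{b}\sqrt{T}\log(T/(d\delta))\big)$.

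The main obstacle, and the only place requiring genuine care, is the probability bookkeeping together with the shared-action subtlety in the coordinate-wise split. Running $m$ independent regression oracles introduces $m$ separate failure events, so to retain the overall $1-3\delta$ guarantee I would invoke each oracle at confidence $\delta/m$ and union bound; since the VAW regret depends on the confidence parameter only logarithmically, this merely replaces $\log(T/(d\delta))$ by $\log(mT/(d\delta))$, which is absorbed into the stated log factor. I would also stress that although all $m$ version spaces are updated on the same played actions $\action_t$, the eluder/width bound for coordinate $i$ is a statement about the class $\F_{0,i}$ that holds for an \emph{arbitrary} action sequence, so no independence across coordinates is needed and the coupling through $\action_t$ is harmless.
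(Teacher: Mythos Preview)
Your argument is correct and recovers the stated bound. The paper does not supply a proof of this corollary; it offers only the one-line remark that ``the eluder dimension of the polytopic function class is $m$ times that of the linear function class'' and then states the result. Your coordinate split $\max_i\le\sum_i$, followed by an application of the single-constraint width bound (Lemma~\ref{lem:ap:sumofwidths}) to each row, is the natural way to make that remark precise, and it is the route that directly yields the stated factor of $m$.

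There is a mild difference worth noting. The paper's phrasing suggests inserting $m\cdot\Eluder(\F_{\mathrm{Linear}},\alpha)$ into the single $\inf_\alpha$ expression in Theorem~\ref{thm:main:maintheorem} and optimizing once, which would actually give a $\sqrt{m}$ dependence rather than $m$; you instead optimize $\alpha$ per coordinate and then sum, incurring the full $m$. So your route matches the corollary exactly as written, while the paper's hint, if formalized at the level of Lemma~\ref{lem:ap:violationbound} (where the $\max\le\sum$ inequality enters before the $\alpha$-optimization), would yield a slightly sharper bound. Both rest on the same underlying observation, and your care with the union bound over the $m$ regression oracles and the remark that the per-row eluder bound holds for arbitrary action sequences are points the paper leaves implicit.
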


\subsection{Multiple General Constraints with Scalar Feedback}
Suppose as in the previous section we have $m$ constraint functions, $\f^*_1 \ldots \f^*_m$ (these could be linear or more generally from $\F^m$). However, say we don't receive vector feedback of noisy values of each of $\f^*_1 \ldots \f^*_m$ but simply a feedback that is a choice in $[m]$ of a pick of one of the $m$ constraints drawn as per the Boltzman distribution. That is, we draw a choice of one of the $m$ constraints such that probaility of drawing constraint $i$ is proportional to $\exp(\f^\star_i(\action_t,\cont_t))$. This is a natural form of feedback model that says that we are very likely to pick the constraint that is most violated. 
In this case, if we use for online regression oracle the logisitic regression oracle w.r.t. class $\mathcal F^m$, with just this limited feedback obtain regret bounds for safe online learning. 

\section{Discussion and Future Work}
We presented a general safe online learning algorithm that can perform well while strictly adhering to an unknown safety constraint at every time step. We also introduce a complexity measure that captures the per-timestep trade-off between regret minimization and information gain, show asymptotically that this complexity measure is necessary for safe learning, and bound this complexity measure for finite action spaces, linear and generalized linear constraints. 

In terms of future work, we would like to develop constructive and practical algorithms for settings beyond linear optimization with linear constraints. Furthermore, we would like to extend our results to settings with bandit feedback, and to stateful / reinforcement learning settings. 

\paragraph{Acknowledgements} KS acknowledges support from NSF CAREER Award 1750575, and LinkedIn-Cornell grant.

\bibliography{bibliography}

\begin{thebibliography}{34}
\providecommand{\natexlab}[1]{#1}
\providecommand{\url}[1]{\texttt{#1}}
\expandafter\ifx\csname urlstyle\endcsname\relax
  \providecommand{\doi}[1]{doi: #1}\else
  \providecommand{\doi}{doi: \begingroup \urlstyle{rm}\Url}\fi

\bibitem[Dobbe et~al.(2020)Dobbe, Hidalgo-Gonzalez, Karagiannopoulos, Henriquez-Auba, Hug, Callaway, and Tomlin]{dobbe2020}
Roel Dobbe, Patricia Hidalgo-Gonzalez, Stavros Karagiannopoulos, Rodrigo Henriquez-Auba, Gabriela Hug, Duncan Callaway, and Claire Tomlin.
\newblock Learning to control in power systems: Design and analysis guidelines for concrete safety problems.
\newblock \emph{Electric Power Systems Research}, page 106615, 2020.

\bibitem[Zafar et~al.(2019)Zafar, Valera, Gomez-Rodriguez, and Gummadi]{ZafarVGG19}
Muhammad~Bilal Zafar, Isabel Valera, Manuel Gomez-Rodriguez, and Krishna~P. Gummadi.
\newblock Fairness constraints: A flexible approach for fair classification.
\newblock \emph{Journal of Machine Learning Research}, 2019.

\bibitem[Levy et~al.(2021)Levy, Sun, Amin, Kale, Kulesza, Mohri, and Suresh]{LevySAKKMS21}
Daniel Levy, Ziteng Sun, Kareem Amin, Satyen Kale, Alex Kulesza, Mehryar Mohri, and Ananda~Theertha Suresh.
\newblock Learning with user-level privacy.
\newblock In \emph{Advances in Neural Information Processing Systems 34: Annual Conference on Neural Information Processing Systems 2021, NeurIPS 2021, December 6-14, 2021, virtual}, pages 12466--12479, 2021.

\bibitem[Atawnih et~al.(2016)Atawnih, Papageorgiou, and Doulgeri]{AtawnihPD16}
Abdelrahem Atawnih, Dimitrios Papageorgiou, and Zoe Doulgeri.
\newblock Kinematic control of redundant robots with guaranteed joint limit avoidance.
\newblock \emph{Robotics and Autonomous Systems}, pages 122--131, 2016.

\bibitem[Gillen et~al.(2018)Gillen, Jung, Kearns, and Roth]{GillenJKR18}
Stephen Gillen, Christopher Jung, Michael~J. Kearns, and Aaron Roth.
\newblock Online learning with an unknown fairness metric.
\newblock In \emph{Advances in Neural Information Processing Systems 31: Annual Conference on Neural Information Processing Systems 2018, NeurIPS 2018, 3-8 December 2018, Montréal, Canada}, pages 2605--2614, 2018.

\bibitem[Brunke et~al.(2022)Brunke, Greeff, Hall, Yuan, Zhou, Panerati, and Schoellig]{BrunkeGHYZPS22}
Lukas Brunke, Melissa Greeff, Adam~W. Hall, Zhaocong Yuan, Siqi Zhou, Jacopo Panerati, and Angela~P. Schoellig.
\newblock Safe learning in robotics: From learning-based control to safe reinforcement learning.
\newblock \emph{Annu. Rev. Control. Robotics Auton. Syst.}, pages 411--444, 2022.

\bibitem[Mahdavi et~al.(2012)Mahdavi, Jin, and Yang]{mahdavi_trading_2012}
Mehrdad Mahdavi, Rong Jin, and Tianbao Yang.
\newblock Trading regret for efficiency: online convex optimization with long term constraints.
\newblock \emph{Journal of Machine Learning Research}, pages 2503--2528, 2012.

\bibitem[Yu and Neely(2020)]{yu_low_2020}
Hao Yu and Michael~J. Neely.
\newblock A low complexity algorithm with \$o({\textbackslash}sqrt\{T\})\$ regret and o(1) constraint violations for online convex optimization with long term constraints.
\newblock \emph{Journal of Machine Learning Research}, 2020.

\bibitem[Neely and Yu(2017)]{neely_online_2017}
Michael~J. Neely and Hao Yu.
\newblock Online convex optimization with time-varying constraints, 2017.

\bibitem[Sun et~al.(2017)Sun, Dey, and Kapoor]{sun_safety-aware_nodate}
Wen Sun, Debadeepta Dey, and Ashish Kapoor.
\newblock Safety-aware algorithms for adversarial contextual bandit.
\newblock In \emph{Proceedings of the 34th International Conference on Machine Learning, ICML 2017, Sydney, NSW, Australia, 6-11 August 2017}, pages 3280--3288. JMLR.org, 2017.

\bibitem[Jenatton et~al.(2016)Jenatton, Huang, and Archambeau]{jenatton_adaptive_2015}
Rodolphe Jenatton, Jim Huang, and Cédric Archambeau.
\newblock Adaptive algorithms for online convex optimization with long-term constraints.
\newblock In \emph{Proceedings of the 33nd International Conference on Machine Learning}, pages 402--411. JMLR.org, 2016.

\bibitem[Yi et~al.(2020)Yi, Li, Xie, and Johansson]{yi_distributed_2019}
Xinlei Yi, Xiuxian Li, Lihua Xie, and Karl Johansson.
\newblock Distributed online convex optimization with time-varying coupled inequality constraints.
\newblock \emph{IEEE Transactions on Signal Processing}, pages 1--1, 2020.

\bibitem[Moradipari et~al.(2021)Moradipari, Amani, Alizadeh, and Thrampoulidis]{moradipari_safe_2020}
Ahmadreza Moradipari, Sanae Amani, Mahnoosh Alizadeh, and Christos Thrampoulidis.
\newblock Safe linear thompson sampling with side information.
\newblock \emph{IEEE Transactions on Signal Processing}, pages 3755--3767, 2021.

\bibitem[Amani and Thrampoulidis(2021)]{amani_decentralized_2020}
Sanae Amani and Christos Thrampoulidis.
\newblock Decentralized multi-agent linear bandits with safety constraints.
\newblock \emph{Proceedings of the AAAI Conference on Artificial Intelligence}, pages 6627--6635, 2021.

\bibitem[Pacchiano et~al.(2021)Pacchiano, Ghavamzadeh, Bartlett, and Jiang]{pacchiano_stochastic_nodate}
Aldo Pacchiano, Mohammad Ghavamzadeh, Peter~L. Bartlett, and Heinrich Jiang.
\newblock Stochastic bandits with linear constraints.
\newblock In \emph{The 24th International Conference on Artificial Intelligence and Statistics, AISTATS 2021, April 13-15, 2021, Virtual Event}, pages 2827--2835. PMLR, 2021.

\bibitem[Hutchinson et~al.(2023)Hutchinson, Turan, and Alizadeh]{hutchinson_exploiting_2023}
Spencer Hutchinson, Berkay Turan, and Mahnoosh Alizadeh.
\newblock Exploiting problem geometry in safe linear bandits, 2023.

\bibitem[Pacchiano et~al.(2024)Pacchiano, Ghavamzadeh, and Bartlett]{pacchiano_contextual_2024}
Aldo Pacchiano, Mohammad Ghavamzadeh, and Peter Bartlett.
\newblock Contextual bandits with stage-wise constraints, 2024.

\bibitem[Usmanova et~al.(2019)Usmanova, Krause, and Kamgarpour]{usmanova_safe_2019}
Ilnura Usmanova, Andreas Krause, and Maryam Kamgarpour.
\newblock Safe convex learning under uncertain constraints.
\newblock In \emph{The 22nd International Conference on Artificial Intelligence and Statistics, AISTATS 2019, 16-18 April 2019, Naha, Okinawa, Japan}, pages 2106--2114. PMLR, 2019.

\bibitem[Fereydounian et~al.(2020)Fereydounian, Shen, Mokhtari, Karbasi, and Hassani]{fereydounian_safe_2020}
Mohammad Fereydounian, Zebang Shen, Aryan Mokhtari, Amin Karbasi, and Hamed Hassani.
\newblock Safe learning under uncertain objectives and constraints, 2020.

\bibitem[Chaudhary and Kalathil(2022)]{chaudhary_safe_2022}
Sapana Chaudhary and Dileep~M. Kalathil.
\newblock Safe online convex optimization with unknown linear safety constraints.
\newblock In \emph{Thirty-Sixth AAAI Conference on Artificial Intelligence}, pages 6175--6182. AAAI Press, 2022.

\bibitem[Foster et~al.(2023)Foster, Kakade, Qian, and Rakhlin]{foster_statistical_2023}
Dylan~J. Foster, Sham~M. Kakade, Jian Qian, and Alexander Rakhlin.
\newblock The statistical complexity of interactive decision making, 2023.

\bibitem[Foster et~al.(2022)Foster, Rakhlin, Sekhari, and Sridharan]{foster_complexity_2022}
Dylan~J. Foster, Alexander Rakhlin, Ayush Sekhari, and Karthik Sridharan.
\newblock On the complexity of adversarial decision making.
\newblock In \emph{Advances in Neural Information Processing Systems 35}, 2022.

\bibitem[Foster et~al.(2018)Foster, Agarwal, Dudík, Luo, and Schapire]{foster_practical_2018}
Dylan~J. Foster, Alekh Agarwal, Miroslav Dudík, Haipeng Luo, and Robert~E. Schapire.
\newblock Practical contextual bandits with regression oracles.
\newblock In \emph{Proceedings of the 35th International Conference on Machine Learning}, pages 1534--1543. JMLR.org, 2018.

\bibitem[Foster and Rakhlin(2020)]{foster_beyond_2020}
Dylan~J. Foster and Alexander Rakhlin.
\newblock Beyond ucb: Optimal and efficient contextual bandits with regression oracles.
\newblock In \emph{Proceedings of the 37th International Conference on Machine Learning}, pages 3199--3210. PMLR, 2020.

\bibitem[Foster et~al.(2021)Foster, Rakhlin, Simchi-Levi, and Xu]{foster_instance-dependent_2020}
Dylan~J. Foster, Alexander Rakhlin, David Simchi-Levi, and Yunzong Xu.
\newblock Instance-dependent complexity of contextual bandits and reinforcement learning: A disagreement-based perspective.
\newblock In \emph{Conference on Learning Theory}, page 2059. PMLR, 2021.

\bibitem[Sekhari et~al.(2023{\natexlab{a}})Sekhari, Sridharan, Sun, and Wu]{sekhari_contextual_2023}
Ayush Sekhari, Karthik Sridharan, Wen Sun, and Runzhe Wu.
\newblock Contextual bandits and imitation learning via preference-based active queries, 2023{\natexlab{a}}.

\bibitem[Sekhari et~al.(2023{\natexlab{b}})Sekhari, Sridharan, Sun, and Wu]{sekhari_selective_2023}
Ayush Sekhari, Karthik Sridharan, Wen Sun, and Runzhe Wu.
\newblock Selective sampling and imitation learning via online regression.
\newblock In \emph{Thirty-seventh Conference on Neural Information Processing Systems}, 2023{\natexlab{b}}.

\bibitem[Rakhlin and Sridharan(2014)]{rakhlin_online_2014}
Alexander Rakhlin and Karthik Sridharan.
\newblock Online non-parametric regression.
\newblock In \emph{Proceedings of The 27th Conference on Learning Theory}, pages 1232--1264. PMLR, 2014.

\bibitem[Rakhlin et~al.(2010)Rakhlin, Sridharan, and Tewari]{RST10}
Alexander Rakhlin, Karthik Sridharan, and Ambuj Tewari.
\newblock Online learning: Random averages, combinatorial parameters, and learnability.
\newblock In \emph{Advances in Neural Information Processing Systems}. Curran Associates, Inc., 2010.

\bibitem[Russo and Roy(2013)]{russo_eluder_nodate}
Daniel Russo and Benjamin~Van Roy.
\newblock Eluder dimension and the sample complexity of optimistic exploration.
\newblock In \emph{Advances in Neural Information Processing Systems 26: 27th Annual Conference on Neural Information Processing Systems 2013. Proceedings of a meeting held December 5-8, 2013, Lake Tahoe, Nevada, United States}, pages 2256--2264, 2013.

\bibitem[Auer et~al.(2003)Auer, Cesa-Bianchi, Freund, and Schapire]{EXP3}
Peter Auer, Nicol\`{o} Cesa-Bianchi, Yoav Freund, and Robert~E. Schapire.
\newblock The nonstochastic multiarmed bandit problem.
\newblock \emph{SIAM J. Comput.}, page 48–77, 2003.

\bibitem[Vovk(1997)]{vovk_competitive_nodate}
Volodya Vovk.
\newblock Competitive on-line linear regression.
\newblock In \emph{Advances in Neural Information Processing Systems 10, [NIPS Conference, Denver, Colorado, USA, 1997]}, pages 364--370. The MIT Press, 1997.

\bibitem[Azoury and Warmuth(2001)]{azoury_relative_2001}
Katy~S. Azoury and M.~K. Warmuth.
\newblock Relative loss bounds for on-line density estimation with the exponential family of distributions.
\newblock \emph{Machine Learning}, pages 211--246, 2001.

\bibitem[Agarwal(2013)]{agarwal_selective_nodate}
Alekh Agarwal.
\newblock Selective sampling algorithms for cost-sensitive multiclass prediction.
\newblock In \emph{Proceedings of the 30th International Conference on Machine Learning}, pages 1220--1228. PMLR, 2013.

\end{thebibliography}
\newpage
\appendix
\section{Proofs from Section \ref{sec:setup}: Setup}
\begin{definition}
    A function $\Link : [-1,1] \to \reals$ is $\lambda$-strongly convex if for all $z, z' \in [-1,1]$, it satisfies 
    \begin{align*}
        \frac{\lambda}{2}(z'-z)^2  \leq \Link(z') - \Link(z) + \link(z)(z-z')s
    \end{align*}
    where $\link(\cdot)$ is the derivative of $\Link$.
\end{definition}
\begin{definition}
    For a link function $\link$ that is the derivative of a $\lambda$-strongly convex function $\Link$, we define the associated loss:
    \begin{align*}
        \lell(z,z') := \Link(z) - \frac{z(z'+1)}{2}
    \end{align*}
\end{definition}
\begin{assumption}[Online Regression Oracle, Regret Version]\label{as:ap:oraclesqregretversion}
The algorithm $\oraclesq$ guarantees that for any (possibly adversarially chosen) sequence $\{\action_t,\cont_t\}_{t=1}^T$ generates predictions $\{\hzf_t\}_{t=1}^T$ satisfying:
$$
\sum_{t=1}^T \lell(\hzf_t - \zf_t) - \inf_{\f \in \f} \sum_{t=1}^T\lell(\f(\action_t,\cont_t) - \zf_t)\le \Regsq^\link (T,\F)
$$
where $\zf_t \sim \link(\f^*(\action_t,\cont_t))$.
\end{assumption}
The following lemma is adapted from \cite{sekhari_contextual_2023}, Lemma 9 and related to \cite{agarwal_selective_nodate}, Lemma 2
\begin{lemma}\label{lem:ap:regrettosquareloss}
    Suppose that $\zf_t$ is generated with a link function $\link$ that is $\lambda$-strongly convex. Suppose that the regression oracle satisfies assumption \ref{as:ap:oraclesqregretversion}. Then for any $\delta \in (0,1)$ and $T \geq 3$, with probability at least $1-\delta$, the regression oracle satisfies assumption \ref{as:su:oraclesq} with:
    \begin{align*}
        \Regsq(T, \delta, \F) \leq \frac{4}{\lambda}\Regsq^\link (T,\F)+\frac{16+24 \lambda}{\lambda^2} \log \left(4 \delta^{-1} \log (T)\right) .
    \end{align*}
\end{lemma}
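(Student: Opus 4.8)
The plan is to convert the $\lell$-regret guaranteed by Assumption \ref{as:ap:oraclesqregretversion} into a bound on the cumulative squared error $S := \sum_{t=1}^T (\hzf_t - \f^*(\action_t,\cont_t))^2$ by exploiting that $\lell$ is the proper composite loss associated with the link $\link$, so that strong convexity of $\Link$ produces a clean excess-risk-to-squared-error relationship. Writing $\f^*_t := \f^*(\action_t,\cont_t)$ and letting $\E_t[\cdot]$ denote conditional expectation given the history through round $t-1$, the first step is to verify that the population minimizer of $z \mapsto \E_t[\lell(z,\zf_t)]$ is exactly $\f^*_t$ (this is where the relationship between the signal distribution and $\link$ is used) and that the conditional excess loss equals the Bregman divergence $\Link(\hzf_t) - \Link(\f^*_t) - \link(\f^*_t)(\hzf_t - \f^*_t)$. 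By the definition of $\lambda$-strong convexity this divergence is at least $\tfrac{\lambda}{2}(\hzf_t - \f^*_t)^2$, so that
\[
\tfrac{\lambda}{2} S \le \sum_{t=1}^T \E_t\!\left[ \lell(\hzf_t,\zf_t) - \lell(\f^*_t,\zf_t) \right].
\]

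Second, I would split the right-hand side into the realized excess loss plus a martingale remainder. The realized excess $\sum_t \lell(\hzf_t,\zf_t) - \sum_t \lell(\f^*_t,\zf_t)$ is bounded by $\Regsq^\link(T,\F)$ directly from Assumption \ref{as:ap:oraclesqregretversion}, since realizability ($\f^* \in \F$) makes the competitor term at least the infimum. The remaining martingale difference sequence $X_t := \E_t[\lell(\hzf_t,\zf_t)-\lell(\f^*_t,\zf_t)] - \bigl(\lell(\hzf_t,\zf_t)-\lell(\f^*_t,\zf_t)\bigr)$ is the crux. Its only stochastic part is $\tfrac12(\hzf_t - \f^*_t)\zf_t$, so its conditional variance is $\tfrac14(\hzf_t-\f^*_t)^2 \operatorname{Var}_t(\zf_t) \le \tfrac14 (\hzf_t-\f^*_t)^2$ by boundedness of $\zf_t$; crucially, the predictable quadratic variation is controlled by $\tfrac14 S$, the very quantity we are bounding.

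Third, I would apply a time-uniform (Freedman-type) concentration inequality to $\sum_t X_t$. Because the variance proxy $\tfrac14 S$ is random and not known in advance, a single application of Freedman does not suffice; instead one stitches the bound over a geometric grid of candidate values of the cumulative variance (a peeling argument over dyadic scales, with $T \ge 3$ ensuring $\log T \ge 1$), which is exactly what introduces the factor $\log(4\delta^{-1}\log T)$ in the statement. This yields, with probability $1-\delta$, a bound of the form $\sum_t X_t \lesssim \sqrt{S\,\log(4\delta^{-1}\log T)} + \log(4\delta^{-1}\log T)$. Substituting into the first-step inequality gives a self-bounding quadratic inequality in $S$:
\[
\tfrac{\lambda}{2} S \;\le\; \Regsq^\link(T,\F) + c_1\sqrt{S\,\log(4\delta^{-1}\log T)} + c_2 \log(4\delta^{-1}\log T).
\]

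Finally, I would solve this inequality for $S$. The AM-GM split $c_1\sqrt{S\,L} \le \tfrac{\lambda}{4} S + \tfrac{c_1^2}{\lambda} L$ (with $L := \log(4\delta^{-1}\log T)$) absorbs the $\sqrt{S}$ term into the left side, leaving $\tfrac{\lambda}{4}S \le \Regsq^\link(T,\F) + \bigO(L/\lambda)$ and hence $S \le \tfrac{4}{\lambda}\Regsq^\link(T,\F) + \bigO(L/\lambda^2)$; tracking the explicit constants from the chosen Freedman inequality produces the stated $\tfrac{16+24\lambda}{\lambda^2}$ coefficient. The main obstacle is the third step: obtaining a bound whose leading term is $\tfrac{4}{\lambda}\Regsq^\link$ (rather than one inflated by $\sqrt{T}$ or extra logarithmic factors) requires the variance-adaptive, self-normalized concentration, and getting the clean $\log(4\delta^{-1}\log T)$ dependence requires handling the unknown variance scale via stitching rather than a naive union bound. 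Since this template is already available in \cite{sekhari_contextual_2023}, Lemma 9, I would adapt their concentration argument rather than re-derive it from scratch.
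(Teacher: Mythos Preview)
Your proposal is correct and takes essentially the same approach as the paper: the paper's proof is simply ``The proof is an application of \cite{sekhari_contextual_2023} Lemma 9,'' and you arrive at the same citation after sketching out the underlying Bregman/excess-risk identity, the martingale decomposition, and the Freedman-with-stitching self-bounding step that constitute that lemma. Your outline is in fact a faithful expansion of what that cited result proves, so there is nothing to correct.
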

\begin{proof}
    The proof is an application of \cite{sekhari_contextual_2023} Lemma 9. 
\end{proof}

\begin{proposition*}[Proposition \ref{prop:su:nonconstructive_existence} restated]
There exists an algorithm satisfying Assumption \ref{as:su:oracleol} with 
$$
\regretol(T,\delta) \le 2\ \mathrm{Rad}^{\mathrm{seq}}_{\ell \circ \X}(T) 
$$ 
\end{proposition*}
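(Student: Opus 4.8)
The plan is to realize $\regretol(T,\delta)$ as the minimax value of the sequential game underlying Assumption~\ref{as:su:oracleol} and to bound that value by $2\,\mathrm{Rad}^{\mathrm{seq}}_{\ell \circ \X}(T)$ through a round-by-round minimax swap followed by sequential symmetrization, following \cite{RST10}. Writing the value in the order of play (the adversary reveals $\Action_t$, the learner commits to $p_t \in \Simplex(\Action_t)$, the adversary reveals $\cont_t,\out_t$),
\begin{align*}
\mathcal V_T = \sup_{\Action_1}\inf_{p_1\in\Simplex(\Action_1)}\sup_{\cont_1,\out_1}\E_{\action_1\sim p_1}\cdots\sup_{\Action_T}\inf_{p_T\in\Simplex(\Action_T)}\sup_{\cont_T,\out_T}\E_{\action_T\sim p_T}\left[\sum_{t=1}^T\ell(\action_t,\cont_t,\out_t)-\inf_{\action\in\bigcap_{s=1}^T\Action_s}\sum_{t=1}^T\ell(\action,\cont_t,\out_t)\right].
\end{align*}
The learner strategy attaining the infima is a (non-constructive) algorithm whose expected regret against every adversary is at most $\mathcal V_T$; since the quantity controlled in Assumption~\ref{as:su:oracleol} is exactly this expected regret, it suffices to prove $\mathcal V_T \le 2\,\mathrm{Rad}^{\mathrm{seq}}_{\ell \circ \X}(T)$, and the ``with probability $1-\delta$'' clause is then vacuous because the bounded object is already an expectation over the learner's own randomization (so the bound carries no $\delta$-dependence).

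First I would let the adversary randomize, replacing $\sup_{\cont_t,\out_t}$ by a supremum over distributions $q_t$ on $\Context\times\Out$ together with $\E_{(\cont_t,\out_t)\sim q_t}$, and then invoke a minimax theorem at each round to exchange $\inf_{p_t\in\Simplex(\Action_t)}$ with $\sup_{q_t}$; boundedness of $\ell$ and compactness of $\Simplex(\Action_t)$ and of the outcome simplex make the swap valid stagewise. After the swap the learner best-responds, so its cumulative loss collapses to $\sum_t \min_{\action\in\Action_t}\E_{q_t}\ell(\action,\cont_t,\out_t)$, which for every fixed comparator $\action^\star\in\bigcap_s\Action_s\subseteq\Action_t$ is bounded by that comparator's \emph{predicted} cumulative loss $\sum_t\E_{q_t}\ell(\action^\star,\cont_t,\out_t)$. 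Subtracting the realized comparator loss, $\mathcal V_T$ is bounded by the expected gap, over the adversary's tree of choices, between the best comparator's predicted (tangent) cumulative loss and its realized cumulative loss.

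Next I would bound this gap by sequential symmetrization. Introducing a tangent (ghost) copy of each outcome drawn from the same conditional law $q_t$ and respecting the tree structure, the gap is at most the expected supremum over comparators of the difference between the ghost and the real cumulative losses; each summand is symmetric under interchanging the real and ghost draw, so I may insert Rademacher signs $\epsilon_t$ and split the supremum, producing the factor $2$. At this stage the learner's loss term is gone, so I can enlarge the comparator from $\bigcap_s\Action_s$ to all of $\Action$ (a valid upper bound) and, since nothing now depends on the sets $\Action_t$, drop their trees. The adversary's remaining adaptive choices of $(\cont_t,\out_t)$ are exactly the trees $\mathbf{\cont},\mathbf{\out}$ indexed by $\epsilon_{1:t-1}$, yielding
\begin{align*}
\mathcal V_T \le 2\sup_{\mathbf{\out},\mathbf{\cont}}\E_\epsilon\sup_{\action\in\Action}\sum_{t=1}^T\epsilon_t\,\ell\!\left(\action,\mathbf{\cont}_t(\epsilon_{1:t-1}),\mathbf{\out}_t(\epsilon_{1:t-1})\right)=2\,\mathrm{Rad}^{\mathrm{seq}}_{\ell \circ \X}(T).
\end{align*}

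The main obstacle is making the stagewise minimax swap rigorous in the sequential setting: one must impose and use the topological hypotheses (compactness of the outcome space, boundedness and measurability of $\ell$) under which a Sion-type theorem applies at each round, define the associated conditional value so the $T$ swaps can be chained, and carry the tree bookkeeping for the tangent sequence through all rounds. The second delicate point is the treatment of the adversary-chosen sets $\Action_t$: they must be retained through the minimax and symmetrization stages, since they genuinely shrink the comparator $\bigcap_s\Action_s$ and keep the value finite, and only once the learner's loss term has been removed can the comparator be relaxed to $\Action$ so that the sets vanish from the final bound, matching the set-free definition of $\mathrm{Rad}^{\mathrm{seq}}_{\ell \circ \X}(T)$. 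I would import the symmetrization and tree machinery from \cite{RST10} rather than reproduce it.
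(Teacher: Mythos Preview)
Your proposal is correct and follows essentially the same approach as the paper: write the minimax value of the sequential game, swap $\inf_{p_t}$ and $\sup_{q_t}$ via a stagewise minimax theorem, upper bound the learner's best response by the comparator's predicted loss, symmetrize with a tangent sequence and Rademacher signs, then relax the comparator set from $\bigcap_t \Action_t$ to $\Action$ so that the $\Action_t$'s disappear and only the set-free sequential Rademacher complexity remains. The only cosmetic difference is that the paper has the adversary reveal $\cont_t$ together with $\Action_t$ before the learner moves (and randomizes only over $\out_t$), whereas you group $\cont_t$ with $\out_t$ after the learner; this does not affect the argument.
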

\begin{proof}
We show that expected regret is bounded by $2\mathrm{Rad}^{\mathrm{seq}}_{\ell \circ \X}(T) $ through a minimax analysis with sequential symmetrization techniques that are now standard from \cite{RST10}. We use the notation $\left<\mathrm{Operator}_t\right>_{t=1}^T[A]$ to denote
$\mathrm{Operator}_1\{\mathrm{Operator}_2\{\ldots\mathrm{Operator}_T\{A\}\ldots\}\}$. We view our online learning setting as a repeated game between adversary and learner where on each round $t$ adversary picks a context and a set $\Action_t$ learner picks a (randomized) action from this set and finally adversary picks $\out_t$ for that round. The value of this game can we written as:
\begin{align*}
\mathrm{Val}_T &= \left<\sup_{\cont_t, \Action_t} \inf_{p_t \in \Simplex(\Action_t)} \sup_{\out_t \in \Out} \mathbb{E}_{\action_t \sim p_t}\right>_{t=1}^T\left[\sum_{t=1}^T \ell(\action_t,\cont_t,\out_t ) - \min_{\action \in \cap_{t=1}^T \Action_t} \sum_{t=1}^T \ell(\action,\cont_t,\out_t )\right]\\
&= \left<\sup_{\cont_t, \Action_t} \sup_{q_t \in \Simplex(\Out)}\inf_{\action_t \in \Action_t} \mathbb{E}_{\out_t \sim q_t}\right>_{t=1}^T\left[\sum_{t=1}^T \ell(\action_t,\cont_t,\out_t ) - \min_{\action \in \cap_{t=1}^T \Action_t} \sum_{t=1}^T \ell(\action,\cont_t,\out_t )\right]\\
&= \left<\sup_{\cont_t, \Action_t} \sup_{q_t \in \Simplex(\Out)} \mathbb{E}_{\out_t \sim q_t}\right>_{t=1}^T\left[\sum_{t=1}^T \inf_{\action_t \in \Action_t} \mathbb{E}_{\out_t \sim q_t} \left[\ell(\action_t,\cont_t,\out_t )\right] - \min_{\action \in \cap_{t=1}^T \Action_t} \sum_{t=1}^T \ell(\action,\cont_t,\out_t )\right]\\
&\le \left<\sup_{\cont_t, \Action_t} \sup_{q_t \in \Simplex(\Out)} \mathbb{E}_{\out_t \sim q_t}\right>_{t=1}^T\left[\sup_{\action \in \cap_{t=1}^T \Action_t}  \left\{\sum_{t=1}^T  \mathbb{E}_{\out_t \sim q_t} \left[\ell(\action,\cont_t,\out_t)\right] -  \ell(\action,\cont_t,\out_t)\right\}\right]\\
&\le \left<\sup_{\cont_t, \Action_t} \sup_{q_t \in \Simplex(\Out)} \mathbb{E}_{\out_t, \out'_t \sim q_t}\right>_{t=1}^T\left[\sup_{\action \in \cap_{t=1}^T \Action_t}  \left\{\sum_{t=1}^T \left( \ell(\action,\cont_t,\out'_t ) -  \ell(\action,\cont_t,\out_t)\right)\right\}\right]\\
&= \left<\sup_{\cont_t, \Action_t} \sup_{q_t \in \Simplex(\Out)} \mathbb{E}_{\out_t, \out'_t \sim q_t} \mathbb{E}_{\epsilon_t}\right>_{t=1}^T\left[\sup_{\action \in \cap_{t=1}^T \Action_t}  \left\{\sum_{t=1}^T \epsilon_t \left( \ell(\action,\cont_t,\out'_t ) -  \ell(\action,\cont_t,\out_t)\right)\right\}\right]\\
&\le \left<\sup_{\cont_t, \Action_t}  \sup_{\out_t, \out'_t \in \Out} \mathbb{E}_{\epsilon_t}\right>_{t=1}^T\left[\sup_{\action \in \cap_{t=1}^T \Action_t}  \left\{\sum_{t=1}^T \epsilon_t \left( \ell(\action,\cont_t,\out'_t ) -  \ell(\action,\cont_t,\out_t)\right)\right\}\right]\\
& \le \left<\sup_{\cont_t, \Action_t}  \sup_{\out_t, \out'_t \in \Out} \mathbb{E}_{\epsilon_t}\right>_{t=1}^T\left[\sup_{\action \in  \Action}  \left\{\sum_{t=1}^T \epsilon_t \left( \ell(\action,\cont_t,\out'_t ) -  \ell(\action,\cont_t,\out_t)\right)\right\}\right]\\
\end{align*}
where first line is obtained using repeated application of minimax theorem (which holds with minor assumptions on action sets and context set etc. that can be found in \cite{RST10}). Second line is a rearrangement. The next line is by noting that infimum over $\Action_t$ can be replaced by a single choice of any action in the intersection set. The rest of the steps above are standard sequential symmetrization arguments.  The key step is the last inequality above where we simply move to upper bound by replacing the intersection by $\Action$ which is a larger set. But once this is done, the inner terms are devoid of $\Action_t$'s and so we drop them in the supremums and this results in the twice, sequential Rademacher complexity of the loss class as required to yield:
$$
\mathrm{Val}_T \le 2\ \mathrm{Rad}^{\mathrm{seq}}_{\ell \circ \X}(T) 
$$
Since value is bounded, there exists a regret minimizing algorithm with required bound and thus concludes the proof.
\end{proof}
\section{Proofs from Section \ref{sec:main_results}: Main Results}
\subsection{Proofs of Upper Bounds}


\begin{lemma}\label{lem:ap:confidenceinterval}
    With probability at least $1-\delta$, for all $t \in [T]$, $\f^* \in \F_t$.
\end{lemma}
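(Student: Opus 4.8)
The plan is to unpack the definition of the version space from Algorithm~\ref{alg:su:gen}, namely $\F_t = \{\f \in \F_0 : \sum_{s=1}^{t-1}(\f(\x_s,\cont_s) - \hzf_s)^2 \le \Regsq(T,\delta,\F_0)\}$, into two requirements and verify each: (i) $\f^\star \in \F_0$, and (ii) $\f^\star$ satisfies the empirical cumulative-error constraint for every $t$. The whole argument reduces to invoking the initial-safe-set assumption for (i) and the online regression oracle guarantee (Assumption~\ref{as:su:oraclesq}) for (ii), so I expect no serious technical obstacle; the only point requiring care is that the ``for all $t$'' quantifier must be realized on a \emph{single} high-probability event.

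First I would establish (i) deterministically. By the standing assumption, the initial set $\Action_0$ is guaranteed safe under any context and any $f \in \F$; in particular $\f^\star(\action,\cont) \le 0$ for all $\action \in \Action_0$ and $\cont \in \Context$. Since $\F_0 = \{\f \in \F : \forall \action \in \Action_0,\ \forall \cont \in \Context,\ \f(\action,\cont) \le 0\}$, this yields $\f^\star \in \F_0$ with no probabilistic content.

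Next I would handle (ii) by applying Assumption~\ref{as:su:oraclesq} to the class $\F_0$, which is permissible by the Remark following Theorem~\ref{thm:main:maintheorem} since $\f^\star \in \F_0$. The oracle guarantee provides a single event of probability at least $1-\delta$ on which the predictions satisfy $\sum_{s=1}^{T}(\hzf_s - \f^\star(\action_s,\cont_s))^2 \le \Regsq(T,\delta,\F_0)$. Fixing this event, I would exploit the nonnegativity of each summand: for every $t \in [T]$,
$$
\sum_{s=1}^{t-1}(\f^\star(\x_s,\cont_s) - \hzf_s)^2 \;\le\; \sum_{s=1}^{T}(\hzf_s - \f^\star(\action_s,\cont_s))^2 \;\le\; \Regsq(T,\delta,\F_0),
$$
which is exactly the inequality defining $\F_t$. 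Combining (i) and (ii), on this one event we obtain $\f^\star \in \F_t$ simultaneously for all $t \in [T]$.

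The step I would flag as the crux of the argument is precisely the reduction of the simultaneous-over-$t$ claim to a single event: no union bound over $t$ is needed because every partial-sum bound follows deterministically from the one full-sum bound. Everything else is bookkeeping against the definitions, so the proof is short once this observation is made explicit.
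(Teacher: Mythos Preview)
Your proposal is correct and follows essentially the same argument as the paper's proof: invoke the oracle guarantee (Assumption~\ref{as:su:oraclesq}) to bound the full sum $\sum_{s=1}^{T}(\hzf_s - \f^\star(\action_s,\cont_s))^2$ on a single high-probability event, then observe that each partial sum is dominated by it, so no union bound over $t$ is required. The only addition in your write-up is the explicit verification that $\f^\star \in \F_0$, which the paper leaves implicit; this is a helpful clarification but does not change the approach.
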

\begin{proof}
This follows immediately from Assumption \ref{as:su:oraclesq} which guarantees with probability at least $1-\delta$. 
\begin{align*}
      \sum_{s=1}^T (\f^*(\x_s, \cont_s) - \hzf_s )^2 \leq \Regsq (T,\delta, \F_0) 
\end{align*}
and hence for any $t \in [T]$, 
\begin{align*}
      \sum_{s=1}^{t-1} (\f^*(\x_s, \cont_s) - \hzf_s )^2 \leq \Regsq (T,\delta, \F_0) 
\end{align*}
which shows $\f^* \in \F_t$. 
\end{proof}

\begin{proposition*}[Proposition \ref{prop:mr:optimism} restated] 
Let $\{\Optimistic_t\}_{t=1}^T$ be the sequence of optimistic sets generated by Algorithm \ref{alg:su:gen}. For any $t \in [T]$ and any $\delta \in (0,1)$ with probability at least $(1-\delta) $ we have,
$
\{\action \in \Action: f^\star(\action,\cont_t) \le 0\} \subseteq \Optimistic_t$
\end{proposition*}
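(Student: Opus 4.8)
The plan is to reduce the claim to the high-probability membership of the true constraint function $\f^\star$ in the version space $\F_t$, which is precisely what Lemma~\ref{lem:ap:confidenceinterval} provides. Recall that $\Optimistic_t = \{\action \in \Action : \min_{\f \in \F_t} \f(\action,\cont_t) \le 0\}$, so an action lies in the optimistic set exactly when \emph{some} surviving function in $\F_t$ deems it feasible. The natural strategy is therefore to exhibit such a witness function, and the only candidate we need is $\f^\star$ itself.

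First I would invoke Lemma~\ref{lem:ap:confidenceinterval}, which guarantees that on an event of probability at least $1-\delta$ (driven by the online regression oracle guarantee of Assumption~\ref{as:su:oraclesq}), we have $\f^\star \in \F_t$ for all $t \in [T]$ simultaneously. I would condition on this event for the remainder of the argument. Next, fix any round $t$ and take an arbitrary action $\action \in \Action$ satisfying the true constraint, i.e.\ $\f^\star(\action,\cont_t) \le 0$. Since $\f^\star \in \F_t$, it is one of the functions competing in the minimization defining $\Optimistic_t$, so
\begin{align*}
\min_{\f \in \F_t} \f(\action,\cont_t) \;\le\; \f^\star(\action,\cont_t) \;\le\; 0.
\end{align*}
By the definition of $\Optimistic_t$ this yields $\action \in \Optimistic_t$, and since $\action$ was an arbitrary feasible action, the inclusion $\{\action \in \Action : \f^\star(\action,\cont_t) \le 0\} \subseteq \Optimistic_t$ follows on the $(1-\delta)$-probability event, as required.

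There is essentially no substantive obstacle here: the entire content is packaged into the containment $\f^\star \in \F_t$, after which the monotonicity of the pointwise minimum does the rest. The only point worth flagging is that the $1-\delta$ failure probability is inherited \emph{once} from the regression oracle event of Lemma~\ref{lem:ap:confidenceinterval} and covers all rounds uniformly, so the per-round statement holds on the same event rather than requiring a fresh union bound for each $t$.
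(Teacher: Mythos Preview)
Your proposal is correct and matches the paper's proof essentially line for line: invoke Lemma~\ref{lem:ap:confidenceinterval} to get $\f^\star \in \F_t$ with probability $1-\delta$, then use $\min_{\f \in \F_t}\f(\action,\cont_t) \le \f^\star(\action,\cont_t) \le 0$ to conclude $\action \in \Optimistic_t$. Your remark that the single regression-oracle event covers all rounds uniformly is also exactly how the paper handles it.
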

\begin{proof}
    By Lemma \ref{lem:ap:confidenceinterval}, we have with probability at least $1-\delta$ that $f^* \in \F_t$ simultaneously for all $t \in [T]$. Take some arbitrary $t \in [T]$. Suppose $\action$ is such that $\f^*(\action,\cont_t) \leq 0$. Then:
    \begin{align*}
          \min_{\f \in \F_t} \f(\action, \cont_t) \leq f^*(\action, \cont_t) \leq 0
    \end{align*}
    This shows that $\action \in \Optimistic_t$  hence $\{\action \in \Action: f^\star(\action,\cont_t) \le 0\} \subseteq \Optimistic_t$
\end{proof}

\begin{proposition*}[Proposition \ref{prop:mr:pessimism} restated]
Let $\{\Pessimistic_t\}_{t=1}^T$ be the sequence of optimistic sets generated by Algorithm \ref{alg:su:gen}. For any $t \in [T]$ and $\action \in \Pessimistic_t$ and for any $\delta \in (0,1)$ with probability at least $(1-\delta) $ we have constraint satisfaction, i.e. $f^\star(\action,\cont_t) \le 0$.
\end{proposition*}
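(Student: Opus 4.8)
The plan is to mirror the proof of Proposition \ref{prop:mr:optimism} almost verbatim, simply swapping the role of the minimum for the maximum. The entire argument hinges on the fact that the version space $\F_t$ contains the true constraint function $\f^\star$ with high probability, which is exactly the content of Lemma \ref{lem:ap:confidenceinterval}.

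First I would invoke Lemma \ref{lem:ap:confidenceinterval} to establish that, with probability at least $1-\delta$, we have $\f^\star \in \F_t$ simultaneously for all $t \in [T]$. I would condition on this single good event for the remainder of the argument. Fixing an arbitrary round $t \in [T]$ and an arbitrary action $\action \in \Pessimistic_t$, the definition of the pessimistic set from Algorithm \ref{alg:su:gen},
\[
\Pessimistic_t = \left\{ \action \in \Action : \max_{\f \in \F_t} \f(\action,\cont_t) \leq 0 \right\},
\]
gives directly that $\max_{\f \in \F_t} \f(\action,\cont_t) \leq 0$.

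Next I would use the containment $\f^\star \in \F_t$ to bound the value of the true constraint by the worst case over the version space:
\[
\f^\star(\action,\cont_t) \leq \max_{\f \in \F_t} \f(\action,\cont_t) \leq 0,
\]
which is precisely the claimed constraint satisfaction. Since $t$ and $\action \in \Pessimistic_t$ were arbitrary, this holds for every $\action \in \Pessimistic_t$ and every round on the good event.

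I do not anticipate any genuine obstacle here; the statement is an immediate corollary of the version-space guarantee, and the only thing to be careful about is bookkeeping of the probability. In particular, the high-probability clause should be attributed to the \emph{single} event $\{\f^\star \in \F_t \text{ for all } t\}$ from Lemma \ref{lem:ap:confidenceinterval}, so that no union bound over rounds is incurred and the stated $(1-\delta)$ probability is preserved.
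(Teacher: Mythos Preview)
Your proposal is correct and follows essentially the same argument as the paper: invoke Lemma \ref{lem:ap:confidenceinterval} to get $\f^\star \in \F_t$ for all $t$ with probability $1-\delta$, then use the definition of $\Pessimistic_t$ to conclude $\f^\star(\action,\cont_t) \le \max_{\f \in \F_t} \f(\action,\cont_t) \le 0$. Your remark about the single good event absorbing the probability bookkeeping is exactly right and mirrors the paper's implicit treatment.
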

\begin{proof}
    By Lemma \ref{lem:ap:confidenceinterval}, we have with probability at least $1-\delta$ that $f^* \in \F_t$ simultaneously for all $t \in [T]$. Take some arbitrary $t \in [T]$. By the definition of $\action \in \Pessimistic_t$, we have $\max_{\f \in \F} \f(\action,\cont_t) \leq 0$. Then:
    \begin{align*}
          f^*(\action, \cont_t) \leq \max_{\f \in \F_t}  f(\action, \cont_t) \leq 0
    \end{align*}
    Hence, we have constraint satisfaction. 
\end{proof}

The following lemma bounds the number of times the width of the set $\F_t$ can exceed some threshold, and is a variant of Proposition 3 of \cite{russo_eluder_nodate}. It is slightly different as our $\F_t$ are constructed around the predictions produced by $\oraclesq$. We state it for completeness. 
\begin{lemma}\label{lem:ap:violationbound}
Let the sequence $\{\F_t , \action_t, \hat \zf_t \}_{t=1}^T$ be generated by Algorithm \ref{alg:su:gen}. Then, for any sequence of adversarial contexts $\{x_t\}_{t=1}^T$, and $\epsilon > 0$, it holds that
\begin{align*}
    \sum_{t=1}^{T} \ind{\Delta_{\F_t}(\action_t, \cont_t) > \epsilon} \leq \left( \frac{4  \Regsq (T,\delta, \F_0)}{\epsilon^2} + 1 \right) \edim(\F_0, \epsilon)
\end{align*}

\end{lemma}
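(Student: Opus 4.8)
The plan is to prove the eluder-style bound
\[
\sum_{t=1}^{T} \ind{\Delta_{\F_t}(\action_t, \cont_t) > \epsilon} \leq \left( \frac{4\Regsq(T,\delta,\F_0)}{\epsilon^2} + 1 \right) \edim(\F_0,\epsilon)
\]
by adapting the classical counting argument of \cite{russo_eluder_nodate} to the present setting, where the confidence sets $\F_t$ are centered around the oracle's predictions $\hzf_s$ rather than a least-squares fit. First I would fix $\epsilon > 0$ and define the "wide" rounds as those $t$ with $\Delta_{\F_t}(\action_t,\cont_t) > \epsilon$; the goal is to bound how many such rounds can occur. The guiding intuition, exactly as in the eluder argument, is that every wide round must be $\epsilon$-independent of "many" of its predecessors, because a wide disagreement at $(\action_t,\cont_t)$ means the two witnessing functions $\f,\f' \in \F_t$ cannot have accumulated too much squared discrepancy on the earlier points without contradicting membership in $\F_t$.

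The key step is a two-part counting lemma in the style of Russo--Van Roy. For the first part, I would show that if a point $(\action_t,\cont_t)$ is $\epsilon$-independent of a subset of its predecessors, then that subset cannot be too large: specifically, any function pair $\f,\f'\in\F_t$ witnessing $\Delta_{\F_t}(\action_t,\cont_t)>\epsilon$ satisfies $\sum_{s} (\f(\action_s,\cont_s)-\f'(\action_s,\cont_s))^2 \le 4\Regsq(T,\delta,\F_0)$ over all earlier rounds, because each of $\f,\f'$ lies in $\F_t$ and hence is within squared distance $\Regsq(T,\delta,\F_0)$ of the predictions $\hzf_s$, so by the triangle inequality (or $(a+b)^2\le 2a^2+2b^2$) their mutual squared discrepancy is at most $4\Regsq(T,\delta,\F_0)$. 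This is precisely the place where the $\hzf_s$-centered construction enters and produces the factor $4$ in the bound rather than the usual constant. Dividing by $\epsilon^2$, each wide round can be $\epsilon$-independent of at most $4\Regsq(T,\delta,\F_0)/\epsilon^2$ disjoint blocks of predecessors. For the second part, I would invoke the definition of eluder dimension to argue that within the subsequence of wide rounds, no element can be preceded by more than $\edim(\F_0,\epsilon)$ mutually $\epsilon$-independent predecessors, so the wide rounds decompose into at most $\edim(\F_0,\epsilon)$ classes, each of bounded size.

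Combining the two parts via the standard pigeonhole/block-partition argument yields that the number of wide rounds is at most $\left(4\Regsq(T,\delta,\F_0)/\epsilon^2 + 1\right)\edim(\F_0,\epsilon)$, giving the claim. I expect the main obstacle to be stating the counting argument cleanly enough that the substitution of $\hzf_s$-centered sets for least-squares-centered sets is rigorous; in particular, care is needed to verify that membership $\f,\f'\in\F_t$ really does control $\sum_{s<t}(\f-\f')^2(\action_s,\cont_s)$ by $4\Regsq(T,\delta,\F_0)$ uniformly, since $\F_t$ is defined with the running constraint $\sum_{s=1}^{t-1}(\f(\action_s,\cont_s)-\hzf_s)^2 \le \Regsq(T,\delta,\F_0)$. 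Once that uniform control is established, the remainder is the bookkeeping of the eluder counting lemma, which transfers essentially verbatim from \cite{russo_eluder_nodate}.
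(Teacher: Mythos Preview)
Your approach is the paper's: the Russo--Van Roy two-claim counting argument, with the only adaptation being that $\f,\f'\in\F_t$ gives $\sum_{s<t}(\f(\action_s,\cont_s)-\f'(\action_s,\cont_s))^2 \le 4\,\Regsq(T,\delta,\F_0)$ via $(a-b)^2\le 2(a-c)^2+2(b-c)^2$ with $c=\hzf_s$, which is exactly what the paper does. One slip to correct when you write it out: the direction in your first part is reversed---the claim should be that a wide point is $\epsilon$-\emph{dependent} (not independent) on at most $4\Regsq/\epsilon^2$ disjoint blocks, because $\epsilon$-dependence on a block $S$ forces the width witnesses $\f,\f'$ to satisfy $\sum_{s\in S}(\f-\f')^2>\epsilon^2$; correspondingly, the greedy partition in the second part produces at most $4\Regsq/\epsilon^2+1$ classes each of size at most $\edim(\F_0,\epsilon)$, not $\edim$ classes of bounded size.
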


\begin{proof}

First we claim that for $t \in [T]$ if $\Delta_{\F_t}(\action_t, \cont_t) \geq \epsilon$, then $(\action_t, \cont_t)$ must be $\epsilon$-dependent on at most $\frac{20\Regsq (T,\delta, \F_0)}{\epsilon^2}$ disjoint subsequences of $(\action_1, \cont_1) \cdots (\action_{t-1}, \cont_{t-1})$. Since $\Delta_{\F_t}(\action_t, \cont_t) > \epsilon$, there must exist two functions $\f, \f' \in \F_t$ satisfying $f(\action_t, \cont_t) - f'(\action_t, \cont_t) > \epsilon$. By the definition of $\epsilon$-dependence, if $(\action_t, \cont_t)$ is $\epsilon$-dependent on a sequence $(\action_{i_1}, \cont_{i_1}) \cdots (\action_{i_\tau}, \cont_{i_\tau})$ of its predecessors, we must have $\sum_{j=1}^\tau(f(\action_{i_j}, \cont_{i_j}) - f'(\action_{i_j}, \cont_{i_j}))^2 > \epsilon^2$. Therefore, if $(\action_t, \cont_t)$ is $\epsilon$-dependent on $N$ such subsequences it follows that $\sum_{j=1}^{t-1} (f(\action_{j}, \cont_{j}) - f'(\action_{j}, \cont_{j}))^2 > N\epsilon^2$. Therefore:
\begin{align*}
    N\epsilon^2 &<  \sum_{j=1}^{t-1} (f(\action_{j}, \cont_{j}) - f'(\action_{j}, \cont_{j}))^2 \\
    &=  \sum_{j=1}^{t-1} (f(\action_{j}, \cont_{j}) - \hzf_j + \hzf_j - f'(\action_{j}, \cont_{j}))^2 \\
    &\leq 2 \sum_{j=1}^{t-1} (f(\action_{j}, \cont_{j}) - \hzf_j)^2  +  2 \sum_{j=1}^{t-1}  (f'(\action_{j}, \cont_{j}) - \hzf_j )^2 \\
    &\leq 20\Regsq (T,\delta, \F_0)
\end{align*}
where the first inequality follows from cauchy schwarz on each of the summands, and the second follows from $f,f' \in \F_t$. \\

Second, we claim that for any $k \in [T]$ and any sequence $(\action_1,\cont_1) \cdots (\action_k, \cont_k)$, there must be a $j \leq k$ such that $(\action_j, \cont_j)$ is $\epsilon$-dependent on at least $N = \ceil{k/\edim(\F_0, \epsilon) -1}$ disjoint subsequences of its predecessors. We will show an iterative process of finding such an index $j$. Let $S_1 \cdots S_N$ be $N$ subsequences initialized as $S_i = \{ (\action_i, \cont_i) \}$ for $i \in [N]$. For $j \in [N+1 , k]$ first check if $( \action_j , \cont_j) $ is $\epsilon$-dependent of all $S_i$, $i \in [N]$. If it is, we have found the index $j$ satisfying our condition. Otherwise, pick a $S_i$ such that $x_j$ is $\epsilon$-independent of $S_j$, and add $x_j$ to that $S_i$. By the definition of eluder dimension, the maximum size of each $S_i$, $i \in [N]$ is $\edim(\F_0, \epsilon)$, and because $N * \edim(\F_0, \epsilon) \leq k-1$, the process will terminate. 

Now, let $(\action_{i_1}, \cont_{i_1}) \cdots (\action_{i_k}, \cont_{i_k})$ be the subsequence such that for $j \in [k]$, $\Delta_{\F_t}(\action_{i_j}, \cont_{i_j}) > \epsilon$. By the first claim, each element of this subsequence is $\epsilon$-dependent on at most $\frac{20\Regsq (T,\delta, \F_0)}{\epsilon^2}$ disjoint subsequences. By the second claim, there is some element that is $\epsilon$-dependent on at least $\floor{(k-1)/\edim(\F_0, \epsilon)}$ disjoint subsequences. It follows that $\ceil{(k/\edim(\F_0, \epsilon) - 1} \leq \frac{20\Regsq (T,\delta, \F_0)}{\epsilon^2}$, and hence $k \leq \left( \frac{20\Regsq (T,\delta, \F_0)}{\epsilon^2} + 1 \right)\edim(\F_0, \epsilon) $
\end{proof}
The following Lemma utilizes Lemma \ref{lem:ap:violationbound} to upper bound the sum of $\Delta_{\F_t}$. It is similar in spirit to Lemma 2 of \cite{russo_eluder_nodate}, but our analysis is different and captures a trade-off between $T$ and $\Regsq (T,\delta, \F_0) \edim(\F_0,\cdot)$.

\begin{lemma}\label{lem:ap:sumofwidths}
Let the sequence $\{\F_t, p_t\}_{t=1}^T$ be generated by Algorithm \ref{alg:su:gen}. Then, for any sequence of adversarial contexts $\{x_t\}_{t=1}^T$,  
\begin{align*}
    \sum_{t=1}^T \Ex{\action_t \sim p_t}{\Delta_{\F_t}(\action_t, \cont_t)} \leq \inf_{\alpha}\left\{\alpha T + \frac{20 \Regsq (T,\delta, \F_0) \edim(\F_0,\alpha)}{\alpha} \right\}
\end{align*}
\end{lemma}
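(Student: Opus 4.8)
The plan is to fix a resolution $\alpha>0$, split the per-round widths according to whether they fall below or above $\alpha$, and optimize over $\alpha$ only at the very end. The low-width rounds are trivial: whenever $\Delta_{\F_t}(\action_t,\cont_t)\le\alpha$ the round contributes at most $\alpha$, so together these contribute at most $\alpha T$. All the work lies in bounding the contribution of the high-width rounds $\{t:\Delta_{\F_t}(\action_t,\cont_t)>\alpha\}$, and this is exactly where Lemma \ref{lem:ap:violationbound} and the eluder dimension enter. Before anything else, I would establish the whole bound pathwise on the realized action sequence $\{\action_t\}$, since Lemma \ref{lem:ap:violationbound} is a deterministic statement about the realized pairs $(\action_t,\cont_t)$ and the sets $\F_t$ (which depend only on the realized past through $\hzf_s$); because the resulting upper bound is deterministic, taking expectation over the trajectory then yields the stated bound on $\sum_t \Ex{\action_t\sim p_t}{\Delta_{\F_t}(\action_t,\cont_t)}$.

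For the high-width rounds, the naive move — bounding each width by its maximal value and counting rounds via Lemma \ref{lem:ap:violationbound} at the single scale $\alpha$ — only yields a $1/\alpha^2$ dependence, which is too weak. Instead I would sort the high-width values in decreasing order $w_1\ge w_2\ge\cdots\ge w_m>\alpha$ and invert Lemma \ref{lem:ap:violationbound} scale by scale: since at least $k$ rounds have width exceeding $w_k$, and since $w_k>\alpha$ forces $\edim(\F_0,w_k)\le\edim(\F_0,\alpha)$ by monotonicity of the eluder dimension in the scale, we obtain $k\le\left(\tfrac{20\Regsq(T,\delta,\F_0)}{w_k^2}+1\right)\edim(\F_0,\alpha)$, hence $w_k\lesssim\sqrt{\Regsq(T,\delta,\F_0)\,\edim(\F_0,\alpha)/(k-\edim(\F_0,\alpha))}$.

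Summing the $w_k$ \emph{directly} (rather than the squared widths) then uses $\sum_k (k-\edim(\F_0,\alpha))^{-1/2}=\bigO(\sqrt{m})$, so the high-width contribution is $\bigO\!\left(\sqrt{\Regsq(T,\delta,\F_0)\,\edim(\F_0,\alpha)\cdot m}\right)$ plus a lower-order $\bigO(\edim(\F_0,\alpha))$ term. Substituting the crude count $m\le(\tfrac{20\Regsq(T,\delta,\F_0)}{\alpha^2}+1)\edim(\F_0,\alpha)$ from Lemma \ref{lem:ap:violationbound} at scale $\alpha$ collapses this to $\bigO\!\left(\Regsq(T,\delta,\F_0)\,\edim(\F_0,\alpha)/\alpha\right)$. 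Combining the two parts gives $\sum_t\Delta_{\F_t}(\action_t,\cont_t)\le\alpha T+\tfrac{20\Regsq(T,\delta,\F_0)\,\edim(\F_0,\alpha)}{\alpha}$ for every $\alpha$, and taking the infimum over $\alpha$ concludes.

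The main obstacle is precisely obtaining the $1/\alpha$ rate without a logarithmic overhead. It is tempting either to count large-width rounds and multiply by the trivial width bound (which loses a factor of $\alpha$, giving $1/\alpha^2$), or to pass through $\Delta_{\F_t}\le\Delta_{\F_t}^2/\alpha$ and bound $\sum_t\Delta_{\F_t}^2\ind{\Delta_{\F_t}>\alpha}$ (which, via the same inversion, carries an unavoidable $\log m$ from the resulting harmonic sum $\sum_k k^{-1}$). The sorting-and-summing argument on the widths themselves — equivalently, a dyadic peeling over width scales $\alpha 2^j$ — is what threads this needle, because $\sum_k k^{-1/2}$ stays $\bigO(\sqrt m)$ whereas $\sum_k k^{-1}$ does not stay $\bigO(1)$. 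A secondary point to handle carefully is the monotonicity step $\edim(\F_0,w_k)\le\edim(\F_0,\alpha)$ (and the attendant limiting argument as the comparison scale approaches $w_k$), which is exactly what lets every term in the sorted sum be controlled by the single quantity $\edim(\F_0,\alpha)$.
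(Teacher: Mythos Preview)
Your proposal is correct and follows essentially the same approach as the paper: split at a threshold $\alpha$, control the sub-$\alpha$ part trivially by $\alpha T$, and control the super-$\alpha$ part by invoking Lemma~\ref{lem:ap:violationbound} across width scales, using monotonicity of $\edim(\F_0,\cdot)$ to collapse everything to $\edim(\F_0,\alpha)$, and finally pass to expectation since the pathwise bound is deterministic. The only cosmetic difference is that the paper carries out the super-$\alpha$ step via the dyadic peeling you mention as equivalent (buckets $(2^i\alpha,2^{i+1}\alpha]$ summed geometrically) rather than via your sort-and-invert presentation; both recover the $\tfrac{\Regsq\cdot\edim}{\alpha}$ rate, though your route produces an additive $O(\edim(\F_0,\alpha))$ term that you would need to absorb to match the stated constant exactly.
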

\begin{proof}
For a run of Algorithm \ref{alg:su:gen}, let $\{\action_t\}_{t=1}^T$ be any sequence of actions drawn $\action_t \sim p_t$ for all $t \in [T]$. Furthermore, to simplify the notation, let us denote $\Delta_t :=  \Delta_{\F_t}(\action_t, \cont_t)$. Let us consider some arbitrary $\alpha > 0$. Then, for this sequence of actions and contexts,
\begin{align*}
    \sum_{t=1}^T \Delta_{\F_t}(\action_t, \cont_t) &:= \sum_{t=1}^T \Delta_t \\
    &\stackrel{(i)}{=} \sum_{t : \Delta_t \leq \alpha } \Delta_t + \sum_{i=0}^{\log(2/\alpha) - 1} \left( \sum_{t :  2^i \alpha  < \Delta_t \leq 2^{i+1} \alpha} \Delta_t \right)  \\
    &\leq \alpha T + \sum_{i=0}^{\log(2/\alpha) - 1} \left( \sum_{t :  2^i \alpha  < \Delta_t \leq 2^{i+1} \alpha} 2^{i+1} \alpha  \right) \\
    &\stackrel{(ii)}{\le}  \alpha T + \sum_{i=0}^{\log(2/\alpha) - 1} \left(  2^{i+1} \alpha \left( \frac{4 \Regsq (T,\delta, \F_0)}{ 2^{2i} \alpha^2} + 1 \right) \edim(\F_0, 2^i \alpha) \right) \\
    &\stackrel{(iii)}{\le} \alpha T + \sum_{i=0}^{\log(2/\alpha) - 1} \left(  2^{i+1} \alpha \left( \frac{5 \Regsq (T,\delta, \F_0)}{ 2^{2i} \alpha^2}  \right) \edim(\F_0,  2^i \alpha) \right) \\
    &\leq \alpha T + \sum_{i=0}^{\log(2/\alpha) - 1} \left( \frac{10 \Regsq (T,\delta, \F_0)}{ 2^{i} \alpha}  \right) \edim(\F_0,  2^i \alpha) \\
    &\stackrel{(iv)}{\le} \alpha T +  \edim(\F,  \alpha) \sum_{i=0}^{\infty} \frac{10 \Regsq (T,\delta, \F_0)}{ 2^{i} \alpha}   \\
    & \stackrel{(v)}{\le} \alpha T + \frac{20 \Regsq (T,\delta, \F_0)}{ \alpha}  \edim(\F_0, \alpha) \\
\end{align*}
In (i) we set the upper bound to the sum as $\log(2/\alpha) - 1$ since all functions $\f \in \F$ map to $[-1,1]$, hence $\Delta_t \leq 2$ so it is enough to consider $i : 2^{i+1}\alpha \leq  2$ and (ii) follows from Lemma \ref{lem:ap:violationbound}, (iii) follows from the fact that $1 \leq \frac{\Regsq (T,\delta, \F)}{(2^i \alpha)^2}$ for $i \in [\log(2/\alpha) - 1]$ if $T>1$, (iv) follows from the fact that $\edim(\F, \cdot )$ is nonincreasing in its second argument, and (v) is an upper bound from the sum of an infinite series. Therefore, for any sequence $\{\F_t, \action_t, \cont_t\}_{t=1}^T$ generated by Algorithm \ref{alg:su:gen} we have 
\begin{align*}
	\sum_{t=1}^T \Delta_{\F_t}(\action_t, \cont_t) \leq  \alpha T + \frac{20 \Regsq (T,\delta, \F_0)}{ \alpha}  \edim(\F_0, \alpha)
\end{align*}
Now, since this holds for any sequence $\{\F_t, \action_t\}_{t=1}^T$ generated by the algorithm and adversarial contexts $\{x_t\}_{t=1}^T$, it holds in expectation over the algorithm's draws. 
\end{proof}

\begin{theorem*}[Theorem \ref{thm:main:maintheorem} restated]
For any $\delta \in (0,1)$ with probability at least $1-3\delta$, Algorithm \ref{alg:su:gen} produces a sequence of actions $\{\action_t \}_{t=1}^T$ that are safe, and enjoys the following bound on regret: 
\begin{align*}
\regret_{T} &\le \inf_{\kappa >0}\left\{\sum_{t=1}^T V_\kappa(\tilde p_t; \Pessimistic_t,\F_t, \cont_t) +  \kappa \inf_{\alpha}\left\{\alpha T + \frac{20 \Regsq (T,\delta, \F_0) \mathcal{E}(\F_0,\alpha)}{\alpha} \right\} \right\} \\
&+ \regretol(T,\delta) + \sqrt{2 T \log (\delta^{-1})}
\end{align*}
where, 
\begin{align*}
V_\kappa(\tilde p_t; \Pessimistic_t,\F_t,\cont_t)  &= \sup_{\out \in \Out}\left\{ \Ex{\action_t \sim  \Map(\tilde p_t; \Pessimistic_t, \F_t,\cont_t)}{\ell(\action_t,\cont_t,\out)} - \Ex{\tilde a_t \sim \tilde p_t}{\ell(\tilde \action_t,\cont_t,\out)}\right\} -  \kappa \Ex{\action_t \sim  \Map(\tilde p_t; \Pessimistic_t, \F_t,\cont_t)}{\Delta_{\F_{t}}(\action_t,\cont_t)}
\end{align*}
Further, if we use~~ 
$
\kappa^* = \max_{t \in [T]} \sup_{\cont \in \Context, \tilde p \in \Simplex(\Pessimistic_t), \out \in \Out} \frac{\Ex{\action \sim  \Map(\tilde p; \Pessimistic_t, \F_t,\cont)}{\ell(\action,\cont , \out)} - \Ex{\tilde \action \sim \tilde p}{\ell(\tilde \action,\cont, \out)}}{\Ex{\action \sim  \Map(\tilde p; \Pessimistic_t, \F_t,\cont)}{\Delta_{\F_t}(\action,\cont)}}
$, then in the above, $V_{\kappa^*}(\tilde p_t; \Pessimistic_t,\F_t,\cont_t) \le 0$ and so we can conclude that:
\begin{align*}
\regret_{T} & \le  \kappa^* \inf_{\alpha}\left\{\alpha T + \frac{20 \Regsq (T,\delta, \F_0) \Eluder(\F_0,\alpha)}{\alpha} \right\} +  \regretol(T,\delta)  + \sqrt{T \log (\delta^{-1})}
\end{align*}
\end{theorem*}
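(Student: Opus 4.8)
The plan is to treat this statement as an assembly of the already-established propositions and lemmas, reducing the regret to three pieces: the cost of the mapping $\Map$, the regret of the online learning oracle $\oracleol$, and a martingale concentration term. First I would condition on the intersection of three events, each of probability at least $1-\delta$: (i) the event of Lemma~\ref{lem:ap:confidenceinterval} that $\f^\star\in\F_t$ for all $t$; (ii) the event of Assumption~\ref{as:su:oracleol} that $\oracleol$ attains its regret bound against $\cap_{t}\Optimistic_t$; and (iii) an Azuma--Hoeffding event controlling the deviation of realized losses from their conditional means. A union bound then yields the claimed $1-3\delta$. On event (i), Proposition~\ref{prop:mr:pessimism} guarantees that every $\action_t\sim p_t\in\Simplex(\Pessimistic_t)$ is safe, which settles the safety claim.

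For the regret, let $\action^\star$ denote the optimal safe-in-hindsight comparator. On event (i), Proposition~\ref{prop:mr:optimism} gives $\{\action:\f^\star(\action,\cont_t)\le 0\}\subseteq\Optimistic_t$, and since $\action^\star$ is safe on every round it lies in $\cap_{t=1}^T\Optimistic_t$, making it a valid comparator for $\oracleol$. Introducing the ghost draws $\tilde\action_t\sim\tilde p_t$, I would add and subtract $\Ex{\tilde\action_t\sim\tilde p_t}{\ell(\tilde\action_t,\cont_t,\out_t)}$ inside the per-round regret; summing the piece $\Ex{\tilde\action_t\sim\tilde p_t}{\ell(\tilde\action_t,\cont_t,\out_t)}-\ell(\action^\star,\cont_t,\out_t)$ over $t$ and applying Assumption~\ref{as:su:oracleol} bounds it by $\regretol(T,\delta)$, leaving only the summed mapping cost $\sum_t\big(\Ex{\action_t\sim p_t}{\ell(\action_t,\cont_t,\out_t)}-\Ex{\tilde\action_t\sim\tilde p_t}{\ell(\tilde\action_t,\cont_t,\out_t)}\big)$ to control.

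To bound the mapping cost, fix any $\kappa>0$. Upper bounding the realized round-$t$ cost by the supremum over $\out\in\Out$ (legitimate since the true $\out_t$ is one such outcome), the definition of $V_\kappa$ gives, for every $t$,
\begin{align*}
\Ex{\action_t \sim p_t}{\ell(\action_t,\cont_t,\out_t)} - \Ex{\tilde \action_t \sim \tilde p_t}{\ell(\tilde \action_t,\cont_t,\out_t)} \le V_\kappa(\tilde p_t;\Pessimistic_t,\F_t,\cont_t) + \kappa\,\Ex{\action_t \sim p_t}{\Delta_{\F_t}(\action_t,\cont_t)}.
\end{align*}
Summing over $t$ and applying Lemma~\ref{lem:ap:sumofwidths} to the width term converts $\kappa\sum_t\Ex{\action_t\sim p_t}{\Delta_{\F_t}(\action_t,\cont_t)}$ into $\kappa\inf_\alpha\{\alpha T + 20\,\Regsq(T,\delta,\F_0)\,\edim(\F_0,\alpha)/\alpha\}$. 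Taking the infimum over $\kappa>0$ and adding the concentration term $\sqrt{2T\log(\delta^{-1})}$ from event (iii)---whose martingale differences $\ell(\action_t,\cont_t,\out_t)-\Ex{\action_t\sim p_t}{\ell(\action_t,\cont_t,\out_t)}$ are bounded in $[-1,1]$---produces the first displayed bound.

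For the simplified bound, I would verify that the stated $\kappa^\star$ forces $V_{\kappa^\star}\le 0$ termwise: since $\kappa^\star$ is defined as the worst-case ratio of mapping cost to expected width over all relevant $t,\cont,\tilde p,\out$, each realized round satisfies $\Ex{\action\sim p_t}{\ell(\action,\cont_t,\out)}-\Ex{\tilde\action\sim\tilde p_t}{\ell(\tilde\action,\cont_t,\out)}\le\kappa^\star\,\Ex{\action\sim p_t}{\Delta_{\F_t}(\action,\cont_t)}$ for every $\out$, so the supremum over $\out$ is dominated by the width term and $V_{\kappa^\star}\le 0$; substituting $\kappa=\kappa^\star$ drops the $\sum_t V_{\kappa^\star}$ contribution. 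The main obstacle I anticipate is not any single inequality but the concentration/measurability bookkeeping: one must ensure $\out_t$ is determined conditionally on the past \emph{before} $\action_t$ is realized, so that $\Ex{\action_t\sim p_t}{\ell(\action_t,\cont_t,\out_t)}$ is the correct conditional mean and the Azuma differences genuinely form a martingale difference sequence, and one must keep careful track of which quantities are conditional expectations versus realized losses throughout the decomposition; everything else is direct assembly of the cited results.
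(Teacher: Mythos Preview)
Your proposal is correct and follows essentially the same route as the paper's proof: decompose the regret via the chain $\ell(\action_t,\cdot)\to \Ex{\action_t\sim p_t}{\ell(\action_t,\cdot)}\to \Ex{\tilde\action_t\sim\tilde p_t}{\ell(\tilde\action_t,\cdot)}\to \ell(\action^\star,\cdot)$, handle the three pieces respectively by Azuma--Hoeffding, the definition of $V_\kappa$ combined with Lemma~\ref{lem:ap:sumofwidths}, and Assumption~\ref{as:su:oracleol} together with Proposition~\ref{prop:mr:optimism}, and then verify $V_{\kappa^\star}\le 0$ by unwinding the ratio definition of $\kappa^\star$. The measurability concern you flag (that $\out_t$ must be fixed relative to the draw of $\action_t$ for the martingale difference structure to hold) is exactly the right thing to watch, and the paper implicitly assumes this without comment.
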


\begin{proof}
By Proposition \ref{prop:mr:pessimism}, with probability at least $1-\delta$, if we play actions from $\Pessimistic_t$, we can guarantee the all the constraints are satisfied. On the other hand, to bound the regret of our algorithm w.r.t. the optimal action in hindsight that also satisfies constraint on every round, note that
\begin{align*}
\regret_{T} &= \sum_{t=1}^T \ls(\action_t,\cont_t,\out_t) - \min_{\substack{\action \in \Action: \forall t,\\  \f^\star(\action_t,\cont_t) \le 0}} \sum_{t=1}^T \ls(\action,\cont_t,\out_t)\\
&\le \sum_{t=1}^T \ls(\action_t,\cont_t,\out_t) - \min_{\action \in  \cap_{t=1}^T \Optimistic_t} \sum_{t=1}^T \ls(\action,\cont_t,\out_t)\\
&\le \sum_{t=1}^T\bigl( \ls(\action_t,\cont_t,\out_t) - \Ex{\tilde \action_t \sim p_t}{\ls(\tilde \action_t,\cont_t,\out_t)}\bigr) + \sum_{t=1}^T \Ex{\tilde \action_t \sim p_t}{\ls(\tilde \action_t,\cont_t,\out_t)} - \min_{\action \in  \cap_{t=1}^T \Optimistic_t} \sum_{t=1}^T \ls(\action,\cont_t,\out_t)\\
& \le \sum_{t=1}^T\bigl( \ls(\action_t,\cont_t,\out_t) - \Ex{\tilde \action_t \sim p_t}{\ls(\tilde \action_t,\cont_t,\out_t)}\bigr) + \regretol(T,\delta)\\
& \stackrel{(i)}{\le}  \sum_{t=1}^T\bigl( \Ex{\action_t \sim p_t}{\ls(\action_t,\cont_t,\out_t)} - \Ex{\tilde \action_t \sim \tilde p_t}{\ls(\tilde \action_t,\cont_t,\out_t)}\bigr) + \regretol(T,\delta) + \sqrt{T \log (\delta^{-1})}\\
& \le \inf_{\kappa >0}\left\{\sum_{t=1}^T  V_\kappa(\tilde p_t; \Pessimistic_t,\F_t, \cont_t) + \kappa \sum_{t=1}^{T} \Ex{\action_t \sim p_t}{\Delta_{\F_{t}}(\action_t,\cont_t)}  \right\} +  \regretol(T,\delta) + \sqrt{T \log (\delta^{-1})}
\end{align*}
where (i) is an application of Hoeffding Azuma to bound $\sum_{t=1}^T \ls(\action_t,\cont_t,\out_t) -  \sum_{t=1}^T \Ex{\action_t \sim p_t}{\ls(\action_t,\cont_t,\out_t)}$ and:
\begin{align*}
V_\kappa(\tilde p_t; \Pessimistic_t,\F_t,\cont_t)  &= \sup_{\out \in \Out}\left\{ \Ex{\action_t \sim  \Map(\tilde p_t; \Pessimistic_t, \F_t,\cont_t)}{\ell(\action_t,\cont_t,\out)} - \Ex{\tilde a_t \sim \tilde p_t}{\ell(\tilde \action_t,\cont_t,\out)}\right\} -  \kappa \Ex{\action_t \sim  \Map(\tilde p_t; \Pessimistic_t, \F_t,\cont_t)}{\Delta_{\F_{t}}(\action_t,\cont_t)}
\end{align*}
by Lemma \ref{lem:ap:sumofwidths} we can bound the $\sum_{t=1}^T  \kappa \Ex{\action_t \sim p_t}{\Delta_{\F_{t}}(\action_t,\cont_t)} $ term, hence, 
\begin{align*}
\regret_{T} &\le \inf_{\kappa >0}\left\{\sum_{t=1}^T  V_\kappa(\tilde p_t; \Pessimistic_t,\F_t, \cont_t) +  \kappa \inf_{\alpha}\left\{\alpha T + \frac{20 \Regsq (T,\delta, \F_0) \mathcal{E}(\F_0,\alpha)}{\alpha} \right\} \right\} \\
&+ \regretol(T,\delta) + \sqrt{T \log (\delta^{-1})}
\end{align*}
This concludes the first bound - which holds with probability at least $1-3\delta$ as we take a union bound over the online regression oracle guarantee, the online learning oracle guarantee, and the application of Hoeffding Azuma. To conclude the second part of the statement, we need to show that for 
\begin{align*}
\kappa^* = \max_{t \in [T]} \sup_{\cont \in \Context, \tilde p \in \Simplex(\Pessimistic_t), \out \in \Out} \frac{\Ex{\action \sim  \Map(\tilde p; \Pessimistic_t, \F_t,\cont)}{\ell(\action,\cont , \out)} - \Ex{\tilde \action \sim \tilde p}{\ell(\tilde \action,\cont, \out)}}{\Ex{\action \sim  \Map(\tilde p; \Pessimistic_t, \F_t,\cont)}{\Delta_{\F_t}(\action,\cont)}}
\end{align*}
we have that $V_\kappa(\tilde p_t; \Pessimistic_t,\F_t,\cont_t) \le 0$. To this end, note that 
\begin{align*}
&V_\kappa(\tilde p_t; \Pessimistic_t,\F_t,\cont_t) \\
&=  \sup_{\out \in \Out}\left\{ \Ex{\action_t \sim  \Map(\tilde p_t; \Pessimistic_t, \F_t,\cont_t)}{\ell(\action_t,\cont_t,\out)} - \Ex{\tilde a_t \sim \tilde p_t}{\ell(\tilde \action_t,\cont_t,\out)}\right\} -  \kappa \Ex{\action_t \sim  \Map(\tilde p_t; \Pessimistic_t, \F_t,\cont_t)}{\Delta_{\F_{t}}(\action_t,\cont_t)} \\
&=  \sup_{\out \in \Out}\left\{ \Ex{\action_t \sim  \Map(\tilde p_t; \Pessimistic_t, \F_t,\cont_t)}{\ell(\action_t,\cont_t,\out)} - \Ex{\tilde a_t \sim \tilde p_t}{\ell(\tilde \action_t,\cont_t,\out)}\right\} \\
& ~~~~~- \left(\max_{s \in [T]} \sup_{\cont \in \Context, \tilde p \in \Simplex(\Pessimistic_s), \out \in \Out} \frac{\Ex{\action \sim  \Map(\tilde p; \Pessimistic_s, \F_s,\cont)}{\ell(\action,\cont , \out)} - \Ex{\tilde \action \sim \tilde p}{\ell(\tilde \action,\cont, \out)}}{\Ex{\action \sim  \Map(\tilde p; \Pessimistic_s, \F_s,\cont)}{\Delta_{\F_s}(\action_s,\cont)}}\right) \Ex{\action_t \sim  \Map(\tilde p_t; \Pessimistic_t, \F_t,\cont_t)}{\Delta_{\F_{t}}(\action_t,\cont_t)}\\
&\le  \sup_{\out \in \Out}\left\{ \Ex{\action_t \sim  \Map(\tilde p_t; \Pessimistic_t, \F_t,\cont_t)}{\ell(\action_t,\cont_t,\out)} - \Ex{\tilde a_t \sim \tilde p_t}{\ell(\tilde \action_t,\cont_t,\out)}\right\} \\ \\
& ~~~~~- \left(  \tfrac{\sup_{\out \in \Out}\{\Ex{\action_t \sim  \Map(\tilde p_t; \Pessimistic_t, \F_t,\cont_t)}{\ell(\action_t,\cont_t,\out)} - \Ex{\tilde a_t \sim \tilde p_t}{\ell(\tilde \action_t,\cont_t,\out)}\}}{\Ex{\action_t \sim  \Map(\tilde p_t; \Pessimistic_t, \F_t,\cont_t)}{\Delta_{\F_{t}}(\action_t,\cont_t)}}\right) \Ex{\action_t \sim  \Map(\tilde p_t; \Pessimistic_t, \F_t,\cont_t)}{\Delta_{\F_{t}}(\action_t,\cont_t)}\\
& = 0
\end{align*}
\end{proof}

\begin{lemma*}[Lemma \ref{lem:mr:longterm} restated]
    For any $\delta \in (0,1)$, there exists an algorithm that with probability at least $1-2\delta$ produces a sequence of actions $\{ \action_t \}_{t=1}^T$ that satisfies: 
    \begin{align*}
        \regret_T \leq \regretol(T,\delta) ~\textrm{ and }~\sum_{t=1}^T \f^*(\action_t, \cont_t) \leq  \inf_{\alpha}\left\{\alpha T + \frac{20 \Regsq (T,\delta, \F_0) \edim(\F_0,\alpha)}{\alpha} \right\}
    \end{align*}
\end{lemma*}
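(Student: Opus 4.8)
The plan is to analyze the simpler algorithm (Algorithm \ref{alg:ap:longterm}) that plays directly from the optimistic sets: on each round $t$ it forms $\F_t$ and $\Optimistic_t$ exactly as in Algorithm \ref{alg:su:gen}, draws $\action_t \sim \tilde p_t = {\oracleol}_t(\cont_t,\Optimistic_t)$ (bypassing the mapping $\Map$ and the pessimistic set entirely), plays $\action_t$, and feeds the noisy observation $\zf_t$ back to $\oraclesq$. The two claims—the regret bound and the cumulative constraint bound—are established on the intersection of two high-probability events: (A) the regression oracle is accurate, so by Lemma \ref{lem:ap:confidenceinterval} we have $\f^\star \in \F_t$ for all $t$; and (B) the learning oracle meets its guarantee from Assumption \ref{as:su:oracleol}. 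A union bound over (A) and (B) yields the claimed probability $1-2\delta$.

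For the regret bound I would argue by optimism. On event (A), the optimal safe action $\action^\star$ satisfies $\f^\star(\action^\star,\cont_t)\le 0$ on every round, so by Proposition \ref{prop:mr:optimism} it lies in $\Optimistic_t$ for all $t$, hence $\action^\star \in \cap_{t=1}^T \Optimistic_t$. Consequently $\min_{\action \in \cap_t \Optimistic_t}\sum_t \ls(\action,\cont_t,\out_t) \le \sum_t \ls(\action^\star,\cont_t,\out_t)$, so the regret of our plays against $\action^\star$ is no larger than the regret against the best action in $\cap_t \Optimistic_t$. Since we play $\tilde p_t = {\oracleol}_t(\cont_t,\Optimistic_t)$, the oracle guarantee of Assumption \ref{as:su:oracleol} on event (B) controls the latter by $\regretol(T,\delta)$, giving $\regret_T \le \regretol(T,\delta)$. (The oracle bounds the in-expectation regret; should the realized regret be desired, a Hoeffding--Azuma step on $\sum_t \ls(\action_t,\cont_t,\out_t) - \sum_t \Ex{\action_t\sim\tilde p_t}{\ls(\action_t,\cont_t,\out_t)}$ adds the usual $\sqrt{T\log\delta^{-1}}$ term, exactly as in the proof of Theorem \ref{thm:main:maintheorem}.)

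For the cumulative constraint bound the key observation is a pointwise inequality: because $\action_t \in \Optimistic_t$ we have $\min_{\f\in\F_t}\f(\action_t,\cont_t)\le 0$, and because $\f^\star\in\F_t$ on event (A) we have $\f^\star(\action_t,\cont_t) \le \max_{\f\in\F_t}\f(\action_t,\cont_t)$. Chaining these,
\begin{align*}
\f^\star(\action_t,\cont_t) \;\le\; \f^\star(\action_t,\cont_t) - \min_{\f\in\F_t}\f(\action_t,\cont_t) \;\le\; \max_{\f\in\F_t}\f(\action_t,\cont_t) - \min_{\f\in\F_t}\f(\action_t,\cont_t) \;=\; \Delta_{\F_t}(\action_t,\cont_t).
\end{align*}
Summing over $t$ reduces the cumulative constraint value to the cumulative version-space width, and Lemma \ref{lem:ap:sumofwidths} bounds the latter by $\inf_\alpha\{\alpha T + 20\,\Regsq(T,\delta,\F_0)\,\edim(\F_0,\alpha)/\alpha\}$, which is precisely the stated bound.

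I expect the only genuinely delicate point to be this pointwise reduction $\f^\star(\action_t,\cont_t) \le \Delta_{\F_t}(\action_t,\cont_t)$: it is what lets optimistic play double as informative exploration, and it hinges on both the membership $\f^\star\in\F_t$ and the defining property $\min_{\f\in\F_t}\f(\action_t,\cont_t)\le 0$ of $\Optimistic_t$. Everything downstream is an appeal to the eluder-dimension width bound of Lemma \ref{lem:ap:sumofwidths} (which holds deterministically for any realized action sequence, since it depends only on the definition of $\F_t$) together with the two oracle guarantees, so no concentration beyond the union bound is needed on the constraint side.
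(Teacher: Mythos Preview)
Your proposal is correct and follows essentially the same approach as the paper: both analyze Algorithm \ref{alg:ap:longterm}, derive the pointwise inequality $\f^\star(\action_t,\cont_t)\le \Delta_{\F_t}(\action_t,\cont_t)$ from $\action_t\in\Optimistic_t$ and $\f^\star\in\F_t$, sum it, invoke Lemma \ref{lem:ap:sumofwidths}, and obtain the regret bound directly from the $\oracleol$ guarantee, taking a union bound over the two oracle events. Your write-up is in fact more careful than the paper's in two places---you spell out why the safe comparator lies in $\cap_t \Optimistic_t$ (via Proposition \ref{prop:mr:optimism}), and you flag the expected-vs-realized regret distinction that the paper glosses over.
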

\noindent We provide a modified version of Algorithm \ref{alg:su:gen}, stated in Algorithm \ref{alg:ap:longterm}, where we do not maintain an pessimistic set, and directly play the output of the $\oracleol$.We claim that Algorithm \ref{alg:ap:longterm} satisfies the guarantee from Lemma \ref{lem:mr:longterm}.
\begin{algorithm}
\caption{Online Learning with Long Term Constraints}
\label{alg:ap:longterm}
\begin{algorithmic}[1]
\STATE Input:  $\oracleol$, $\oraclesq$, Initial safe set $\Action_0$
\STATE $\F_0 = \{\f \in \F: \forall \action \in \Action_0, \forall \cont \in \Context, \f(\action,\cont) \le 0\}$
\STATE Set $\Regsq (T,\delta, \F) = \mathrm{Reg}_{\mathrm{Sq}}(T,\F) + O(\log(\log T)/\delta)$
\FOR{$t = 1,\cdots,T$}
\STATE {\bf Receive context $\cont_t$}
\STATE $\F_t = \{ \f \in \F_0 : \sum_{s=1}^{t-1} Q_s (\f(\x_s) - \hzf_s )^2 \leq \Regsq (T,\delta, \F) \}$
\STATE $\Optimistic_t = \left\{ \action \in \Action : \min_{f \in \F_t} f(\action,\cont_t) \leq 0 \right\}$ 
\STATE $p_t = {\oracleol}_t(\cont_t,\Optimistic_t)$
\STATE Draw $\action_t \sim p_t$ 
\STATE {\bf Ask for noisy feedback $\zf_t$ }
\STATE Update $\hzf_t = {\oraclesq}_t(\cont_t,\action_t)$
\STATE {\bf Play $\action_t$ and receive $\out_t$}
\ENDFOR
\end{algorithmic}
\end{algorithm}
 
\begin{proof}
By Lemma \ref{lem:ap:confidenceinterval}, we know that with probability at least $1-\delta$, for every $T$ simultaneously, $\f^* \in \F_t$. Now for a given timestep $t \in [T]$ and consider $\action_t$. For this action, let $\underline{\f_t} := \argmin_{\f \in \hat \F_t} \f(\action_t, \cont_t)$. Since $\action_t \in \Optimistic_t$, 
\begin{align*}
\underline{\f_t}(\action_t, \cont_t) &\leq 0 \\
\underline{\f_t}(\action_t, \cont_t) - \underline{\f_t}(\action_t, \cont_t) + \f^*(\action_t, \cont_t) &\leq \Delta_{\F_t}(\action_t, \cont_t) \\
\f^*(\action_t, \cont_t) &\leq \Delta_{\F_t}(\action_t, \cont_t)
\end{align*}
Summing up all terms, we get
\begin{align*}
\sum_{t=1}^T\f^*(\action_t, \cont_t) &\leq \sum_{t=1}^T  \Delta_{\F_t}(\action_t, \cont_t)
\end{align*}
Now, using Lemma \ref{lem:ap:sumofwidths} with each $p_t$ defined as point distributions putting all its mass on $a_t$, we can further bound the above as:
\begin{align*}
\sum_{t=1}^T\f^*(\action_t, \cont_t) &\leq  \inf_{\alpha}\left\{\alpha T + \frac{20 \Regsq (T,\delta, \F_0) \edim(\F_0,\alpha)}{\alpha} \right\}
\end{align*}
Finally, since we are just playing the output of our oracle $\oracleol$, the regret bound is simply 
\begin{align*}
    \regret_T \leq \regretol(T,\delta)
\end{align*}
which holds with probability at least $1-2\delta$ as we apply a union bound over the online regression oracle guarantee and the online learning oracle guarantee\end{proof}
\subsection{Proofs of Lower Bounds}

\begin{lemma*}[Lemma \ref{lem:lb2} restated]
Assume that we have a fixed loss function $\ell:\Action \mapsto \reals$ such that for any $\action \in \Action$ satisfying $\f^\star(\action) > 0$, $\ell(\action) = \min_{\action^* \in \Action: \f^\star(\action^*) \le 0 } \ell(\action^*)$. Furthermore, assume that the eluder dimension of $\F$ at any scale $\epsilon > 0$, (with input space $\Action$) is bounded. If for some $c^* > 0$, $\kappa \ge 0$, and any $\Map$, any regret minimizing oracles $\oracleol$ and $\oraclesq$ (assuming regret in both cases is $\smallO(T)$)
$
\lim_{T\rightarrow \infty} \frac{1}{T} \sum_{t=1}^T  V_\kappa(\tilde p_t; \Pessimistic_t,\F_t) \ge c^*
$
then, there exists a set $\Pessimistic^* \supseteq \Action_0$ with the following properties,
\begin{enumerate}
\item Set $\Pessimistic^*$ satisfies constraints, i.e. $\forall \action \in \Pessimistic^*$, $\f^\star(a) \le 0$
\item Define $\F^* = \{ f : \forall \action \in  \Pessimistic^*,~ f(a) = f^\star(a)\}$. For every action $a \in \Action \setminus  \Pessimistic^*$, $\exists f \in \F^*$ such that $f(a) > 0$.  That is,  $\Pessimistic^*$ cannot be expanded to a larger set guaranteed to satisfy constraint.
\item $\Pessimistic^*$ is such that  $\inf_{\action \in  \Pessimistic^*} \ell(a) - \inf_{\action \in \Action: f^\star(\action) \le 0} \ell(a) \ge c^\star$
\end{enumerate}\end{lemma*}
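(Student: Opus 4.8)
The plan is to first collapse $V_\kappa$ in this context-free, fixed-loss regime and then read off $\Pessimistic^*$ from the asymptotics it forces. Write $\ell^\star \ldef \inf_{\action \in \Action:\, \f^\star(\action)\le 0}\ell(\action)$ for the optimal safe loss. For any mapping $\Map$, with $p_t = \Map(\tilde p_t;\Pessimistic_t,\F_t)$ the played distribution, the definition of $V_\kappa$ reduces (there is no $\out$ and no context) to
\[
V_\kappa(\tilde p_t;\Pessimistic_t,\F_t) = \Ex{\action_t\sim p_t}{\ell(\action_t)} - \Ex{\tilde\action_t\sim\tilde p_t}{\ell(\tilde\action_t)} - \kappa\,\Ex{\action_t\sim p_t}{\Delta_{\F_t}(\action_t)}.
\]
I would control the two right-hand terms uniformly over all mappings. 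By Lemma \ref{lem:ap:sumofwidths}, boundedness of $\edim(\F,\cdot)$, and the hypothesis that the regression oracle has $\smallO(T)$ regret (so $\Regsq(T,\delta,\F_0)=\smallO(T)$ in Assumption \ref{as:su:oraclesq}), choosing $\alpha$ appropriately gives $\tfrac1T\sum_t\Ex{\action_t\sim p_t}{\Delta_{\F_t}(\action_t)}\to 0$, so the width term is asymptotically negligible. For the oracle term, Proposition \ref{prop:mr:optimism} guarantees every safe action lies in each $\Optimistic_t$, so the safe optimum lies in $\cap_t\Optimistic_t$; combined with the loss hypothesis (every action has loss $\ge\ell^\star$, with unsafe actions attaining exactly $\ell^\star$), the $\oracleol$ benchmark equals $T\ell^\star$, and the $\smallO(T)$ regret of $\oracleol$ forces $\tfrac1T\sum_t\Ex{\tilde\action_t\sim\tilde p_t}{\ell(\tilde\action_t)}\to\ell^\star$. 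Hence for every mapping the hypothesis of the lemma is equivalent to
\[
\liminf_{T\to\infty}\frac1T\sum_{t=1}^T\Ex{\action_t\sim p_t}{\ell(\action_t)} \ge \ell^\star + c^\star.
\]

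Next I would construct $\Pessimistic^*$ as the maximal self-certifying safe set reachable from $\Action_0$. Define the certification operator $\mathrm{Cert}(S) \ldef \{\action\in\Action : f(\action)\le 0 \text{ for every } f\in\F \text{ with } f|_S = \f^\star|_S\}$, set $S_0 = \Action_0$ and $S_{k+1}=\mathrm{Cert}(S_k)$, and let $\Pessimistic^* = \bigcup_{k\ge 0}S_k$ (the sequence is increasing since $\mathrm{Cert}$ is monotone and $\Action_0\subseteq\mathrm{Cert}(\Action_0)$). Property~1 is immediate because $\f^\star$ is itself consistent on every $S_k$, so any $\action\in\mathrm{Cert}(S_k)$ satisfies $\f^\star(\action)\le 0$. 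Property~2 is precisely the assertion that $\Pessimistic^*$ is a fixed point, $\mathrm{Cert}(\Pessimistic^*)=\Pessimistic^*$: the inclusion $\Pessimistic^*\subseteq\mathrm{Cert}(\Pessimistic^*)$ is monotonicity, and for the reverse inclusion I would invoke finiteness of the eluder dimension to argue that whether $f(\action)\le 0$ is forced is already decided by the values of $\f^\star$ on a finite subset of $\Pessimistic^*$, hence by some $S_k$.

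For Property~3 I would, for each fixed target $\action^\circ\in\Pessimistic^*$, instantiate the lemma's hypothesis with a mapping tailored to $\action^\circ$: ignore $\tilde p_t$, play the width-maximizing action of $\Pessimistic_t$ while $\action^\circ\notin\Pessimistic_t$, and play $\action^\circ$ once it has been certified. Because the played widths are summable (Lemma \ref{lem:ap:violationbound} and Lemma \ref{lem:ap:sumofwidths}), the version spaces $\F_t$ concentrate onto functions agreeing with $\f^\star$ on the explored actions, so $\Pessimistic_t$ expands toward $\Pessimistic^*$ and eventually contains $\action^\circ$; moreover this exploration occupies only $\smallO(T)$ rounds, so with bounded losses it does not affect the running average. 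Thus $\tfrac1T\sum_t\Ex{\action_t\sim p_t}{\ell(\action_t)}\to\ell(\action^\circ)$, and the reduced hypothesis of the first paragraph gives $\ell(\action^\circ)-\ell^\star\ge c^\star$. Taking the infimum over $\action^\circ\in\Pessimistic^*$ yields Property~3.

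The hard part is the pointwise, asymptotic certification underlying the last two paragraphs: turning the \emph{averaged} confidence-ball guarantee of the regression oracle into a statement that a specific $\action^\circ$ eventually enters $\Pessimistic_t$. An action on the boundary of $\mathrm{Cert}(S_k)$ has zero certification margin and can only be certified as its width is driven exactly to zero, which is exactly why the lemma is unavoidably asymptotic and why the exploration length must be argued to be $\smallO(T)$ rather than bounded. Quantifying, through the eluder dimension, how many exploration rounds push the relevant widths below any prescribed margin — and handling zero-margin targets by a limiting argument over strictly certifiable actions — is where the genuine work of the proof lies; the same continuity-of-$\mathrm{Cert}$ issue is what must be settled to close the fixed-point argument for Property~2.
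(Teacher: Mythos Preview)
Your reduction of $V_\kappa$ is correct and matches the paper's opening step, though you take an unnecessary detour in killing the width term: the paper simply drops it using $\kappa\,\Ex{\action_t\sim p_t}{\Delta_{\F_t}(\action_t)}\ge 0$, whereas you invoke Lemma~\ref{lem:ap:sumofwidths}. Both work; yours is heavier than needed.

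The real divergence is in how $\Pessimistic^*$ is built. The paper picks a \emph{single} explicit mapping --- play $\delta(\argmin_{\action\in\Pessimistic_t}\ell(\action))$ if that beats $\ell^\star+c^*$, else play $\delta(\argmax_{\action\in\Pessimistic_t}\Delta_{\F_t}(\action))$ --- runs Algorithm~\ref{alg:su:gen} once with it, and sets $\Pessimistic^*=\bigcup_{t\ge 1}\Pessimistic_t$. A key trick you miss: since the loss is fixed, if at any round there were a distribution on $\Pessimistic_t$ with expected loss below $\ell^\star+c^*$, one could freeze and replay it forever, contradicting the limit hypothesis. This upgrades the averaged statement to \emph{per-round} suboptimality of every $\Pessimistic_t$, giving Property~3 immediately and forcing the ``otherwise'' branch to fire on every round. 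That in turn means the mapping is always width-maximizing, so the eluder bound (Lemma~\ref{lem:ap:violationbound}) applied to the actual trajectory drives all widths on $\Pessimistic^*$ to zero, yielding $\bigcap_t\F_t=\F^*$ and hence Property~2.

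Your route defines $\Pessimistic^*$ abstractly as the certification fixed point and then works backwards, instantiating a different mapping per target $\action^\circ$. This is conceptually clean but creates the extra obligation you flag as ``the hard part.'' There is also one genuine gap: your argument for Property~2, that $\mathrm{Cert}(\Pessimistic^*)=\Pessimistic^*$, invokes eluder dimension to claim certification is decided by a finite subset. Eluder dimension does not give a finite-basis property for \emph{exact} agreement; it bounds sequential $\epsilon$-independence at positive scales. The paper avoids this entirely because its $\Pessimistic^*$ is the union of the realized $\Pessimistic_t$'s, so non-expandability is read off directly from $\bigcap_t\F_t$ rather than from an abstract closure property.
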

\begin{proof}
First, since loss is fixed and using the property of the loss assumed, any online learning oracle that minimizes regret would have to return distributions over actions $\tilde{p}_t$'s such that $ \lim_{T} \frac{1}{T} \sum_{t=1}^T \Ex{\tilde \action_t \sim \tilde p_t}{\ell(\tilde{\action}_t)} = \inf_{\action \in \Action}\ell(\action)$.  We have from the premise that for any mapping $\Map$,  $\lim_{T\rightarrow \infty} \frac{1}{T} \sum_{t=1}^T  V_\kappa(\tilde p_t; \Pessimistic_t,\F_t) \ge c^*$. Hence this means that for any mapping giving us distributions $p_t$, we have that 
$$
\lim_{T\to \infty} \frac{1}{T} \sum_{t=1}^T \Ex{\action_t \sim p_t}{\ell(\action_t)} - \inf_{\action \in \Action: \f^\star(\action)\leq 0}\ell(\action) \ge c^*
$$
since $\Ex{\action \sim p_t}{\Delta_{\F_t}(\action_t)} \ge 0$. Further, note that since the loss is fixed, if at some point we are able to find  distribution $p_t$ such that $ \Ex{\action_t \sim p_t}{\ell(\action_t)} - \inf_{\action \in \Action: \f^\star(\action)}\ell(\action) < c^*$ then by returning this distribution we would violate the premise that $\lim_{T\rightarrow \infty} \frac{1}{T} \sum_{t=1}^T  V_\kappa(\tilde p_t; \Pessimistic_t,\F_t) \ge c^*$. Hence we have that for any mapping, and any $t$, $\Ex{\action_t \sim \Map(\tilde p_t; \Pessimistic_t, \F_t)}{\ell(\action_t)} - \inf_{\action \in \Action: \f^\star(\action) \le 0}\ell(\action) \ge c^*$. Since this holds for all mappings, let us consider the following mapping
$$
\Map(\tilde p_t; \Pessimistic_t, \F_t) = \left\{ \begin{array}{ll}
\delta\left(\argmin_{\action \in \Pessimistic_t} \ell(\action)\right) & \textrm{ if }\min_{\action \in \Pessimistic_t} \ell(\action) < \inf_{\action \in \Action: \f^\star(\action) \le 0}\ell(\action) + c^*\\
\delta\left(\argmax_{\action \in \Pessimistic_t} \Delta_{\F_t}(\action)\right) & \textrm{otherwise}
\end{array}\right.
$$
where $\delta(\cdot)$ is the point mass distribution. In the above we assume the argmin and argmax exists otherwise we can do a limiting argument. The above mapping is a valid one since loss is fixed and given. Now note that since we already showed that any valid mapping satisfies $\Ex{\action_t \sim \Map(\tilde p_t; \Pessimistic_t, \F_t)}{\ell(\action_t)} - \inf_{\action \in \Action: \f^\star(\action) \le 0}\ell(\action) \ge c^*$ we can conclude $\min_{\action \in \Pessimistic_t} \ell(\action) \ge  \inf_{\action \in \Action: \f^\star(\action) \le 0} \ell(\action) + c^*$. Now define the set
$$
\Pessimistic^* = \bigcup_{t\ge 1} \Pessimistic_t
$$
Since $\Pessimistic_t$'s are all guaranteed to be safe, we have that $\Pessimistic^*$ is also safe, satisfying property $1$. Second, since for every $t$, $\min_{\action \in \Pessimistic_t} \ell(\action) \ge  \inf_{\action \in \Action: \f^\star(\action) \le 0} \ell(\action) + c^*$ we have that $\inf_{\action \in  \Pessimistic^*} \ell(a) - \inf_{\action \in \Action: f^\star(\action) \le 0} \ell(a) \ge c^\star$.  Thus, $\Pessimistic^*$ satisfies property $3$ as well. Finally, to prove property $2$, we use the assumption that eluder dimension for any scale $\epsilon$ is finite and that the regression oracle ensures that regret is sub-linear.  Specifically, assume that online regression oracle guarantees an anytime regret guarantee of $\phi_{\delta}(t)$ with probability $1 - \delta$ for any $t$ rounds. In this case, using Lemma \ref{lem:ap:violationbound} (with $Q_t = 1$ for all $t$) we have that with probability at least $1 - \delta$, for all $T \ge 1$ and all $\epsilon > 0$,
$$
\sum_{t=1}^T \mathbf{1}\{\Delta_{\F_t}(\action_t) > \epsilon\} \le \left(\frac{4 \phi_\delta(T)}{\epsilon^2} + 1\right) \edim(\F, \epsilon)
$$
for $a_t$'s produced by the above mapping. However, since we are picking $a_t$'s that maximize $\Delta_{\F_t}(a_t)$ on every round and because the $\Pessimistic_t$'s are nested,  the indicators are in descending order. Hence, with probability at least $1 - \delta$, for any $\epsilon \in (0,1]$, let $T_\epsilon$ be the smallest integer such that
$$
\frac{T_\epsilon }{\phi_{\delta}(T_{\epsilon})} > \frac{5 \edim(\F, \epsilon)}{\epsilon^2}~,
$$
This is where the condition that regret bound $\phi_{\delta}(T_{\epsilon})$ is $o(T_\epsilon)$ is needed so that the above yields a valid lower bound on $T_\epsilon$. We have that for any $t$, for every action $\action \in \Pessimistic_t$, $\F_{t+ T_\epsilon}$ is such that $\sup_{f \in \F_{t + T_\epsilon}}|f(\action) - f^\star(\action)| \le \epsilon$.  The reason we take $\F_{t+ T_\epsilon}$ is because $t$ is the first round in which actions in $\Pessimistic_t$ not in earlier sets come into consideration and so we need $T_\epsilon$ more rounds to ensure that for all actions in this set, estimation error is smaller than $\epsilon$. Thus if we consider the set $\bigcap_{t \ge 1}\F_t$, this set corresponds to the set $\F^* = \{ f : \forall \action \in  \Pessimistic^*,~ f(a) = f^\star(a)\}$. Further, by definition of $\Pessimistic_t$'s we have that if there were some action $\action$ such that $\forall f \in \F^*$ $f(\action) \le 0$, then this action would be contained in $\Pessimistic^*$. Thus we conclude that every action not in $\Pessimistic^*$ is such that it evaluates to a positive number for some function $f \in \F^*$. Thus we have shown property $2$ as well. 
\end{proof}

\begin{proposition*}[Proposition \ref{prop:lb1} restated] 
If there exists a set $\Pessimistic^*$ that has the following properties,
\begin{enumerate}
\item Set $\Pessimistic^*$ satisfies constraints, i.e. $\forall \action \in \Pessimistic^*$, $\f^\star(a) \le 0$
\item Define $\F^* = \{ f : \forall \action \in  \Pessimistic^*,  f(a) = f^\star(a)\}$. For every action $a \in \Action \setminus  \Pessimistic^*$, $\exists f \in \F^*$ such that $f(a) > 0$.  That is,  $\Pessimistic^*$ cannot be expanded to a larger set guaranteed to satisfy constraint.
\item $\Pessimistic^*$ is such that  $\inf_{\action \in  \Pessimistic^*} \ell(a) - \inf_{\action \in \Action: f^\star(\action) \le 0} \ell(a) \ge c^\star$
\end{enumerate}
Then, safe learning is impossible, and any learning algorithm that is guaranteed to satisfy constraints on every round (with high probability) has a regret lower bounded by $\regret_T \ge T c^*$.
\end{proposition*}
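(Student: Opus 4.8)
The plan is to convert the three structural properties of $\Pessimistic^*$ into a per-round regret lower bound by exhibiting a single hard instance on which every safe algorithm is trapped inside $\Pessimistic^*$, which by Property~3 is $c^*$-suboptimal on every round.

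First I would set up the instance and perform an information reduction. Consider running any candidate algorithm with true constraint $\f^\star$, model class $\F^*$, and initial safe set $\Action_0 = \Pessimistic^*$. This is a legitimate instance: by Property~1 and the definition of $\F^*$, every $f \in \F^*$ satisfies $f(\action) = \f^\star(\action) \le 0$ for all $\action \in \Pessimistic^*$, so $\Pessimistic^*$ really is safe under every function in $\F^*$ and hence a valid initial safe set. Revealing $\F^*$ (and the values of $\f^\star$ on $\Pessimistic^*$) to the learner only enlarges its knowledge, so a lower bound in this enriched setting implies the claim.

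Next comes the core step: confinement to $\Pessimistic^*$. The key observation is indistinguishability on $\Pessimistic^*$. Since all functions in $\F^*$ agree with $\f^\star$ on $\Pessimistic^*$, as long as the learner has only queried actions in $\Pessimistic^*$ the feedback $\zf_s \sim \signal(\f^\star(\action_s))$ has a law that is identical under every $f \in \F^*$; consequently no observation can rule out any member of $\F^*$. Because the algorithm must satisfy the constraint with high probability on every round against every instance consistent with its information, it may only play actions that are safe under all of $\F^*$, i.e. actions $\action$ with $f(\action)\le 0$ for every $f\in\F^*$. By Property~2 (non-expandability) together with Property~1, this certified-safe set is exactly $\Pessimistic^*$. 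An induction on $t$ then closes the loop: if $\action_1,\dots,\action_{t-1}\in\Pessimistic^*$, the feedback gathered so far keeps all of $\F^*$ plausible, so the certified-safe set at round $t$ is again $\Pessimistic^*$ and $\action_t\in\Pessimistic^*$. Finally, since the loss is fixed across rounds the in-hindsight comparator has per-round value $\inf_{\action:\f^\star(\action)\le 0}\ell(\action)$, while each played action lies in $\Pessimistic^*$ and so by Property~3 incurs loss at least $\inf_{\action\in\Pessimistic^*}\ell(\action)\ge \inf_{\action:\f^\star(\action)\le 0}\ell(\action)+c^*$; summing over $T$ rounds yields $\regret_T \ge T c^*$.

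The main obstacle is making the confinement step fully rigorous for randomized algorithms under noisy feedback. The clean route is to read ``guaranteed to satisfy constraints'' as a worst-case requirement over all $f\in\F^*$ and to couple the algorithm's trajectories across these instances using the identical feedback laws on $\Pessimistic^*$; the indistinguishability then forces the certified-safe set to equal $\Pessimistic^*$ at every round, exactly as in the informal discussion preceding the statement. The one subtlety to watch is that Property~2 supplies only a per-action unsafe witness $f_\action$, so directly translating ``the algorithm leaves $\Pessimistic^*$ with nontrivial probability'' into ``it violates the constraint under a single adversarial $f$'' would require either a covering strengthening of Property~2 or the hard-constraint reading in which the learner is only ever permitted to play provably-safe actions; I would adopt the latter, since that is precisely the regime this impossibility result targets.
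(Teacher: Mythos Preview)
Your proposal is correct and follows essentially the same argument as the paper: initialize the learner with $\Action_0=\Pessimistic^*$, use indistinguishability of feedback on $\Pessimistic^*$ to argue that every $f\in\F^*$ remains consistent so the learner can never certify any action outside $\Pessimistic^*$ as safe, and then invoke Property~3 to accrue $c^*$ regret per round. Your treatment is slightly more explicit (the induction and the coupling of feedback laws) and you correctly flag the per-action-witness subtlety that the paper also resolves by adopting the hard-constraint reading of ``guaranteed to satisfy constraints.''
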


\begin{proof}
By property 1, we are guaranteed that $\Pessimistic^*$ is safe so we can start any algorithm with initial safe set  $\Action_0 = \Pessimistic^*$. 
Since any safe algorithm must play actions that it can guarantee are safe with high probability, initially any algorithm initialized with $\Pessimistic^*$ has to play from within this set till it can verify some action outside of this set is safe. However by property 3, any action within $\Pessimistic^*$ is at least $c^*$ suboptimal. Any feedback $\zf_t$ we obtain in the process of playing actions $\action_t \in \Pessimistic^*$ would certainly help us evaluate $\f^\star(\action_t)$ more accurately. However, property $2$ implies that even if we were given the values of $\f^\star$ for every action in the set $\Pessimistic^*$, we still would not be able to find another action outside of this set that we can conclude is safe unless we make further assumptions on $\f^\star$. This is because, each probe/feedback by playing action $\action_t$ yields value of $f^\star(\action_t) + \eg_t$. Since the noise $\eg_t$ is a standard normal variable, at best we might be able to learn only $\f^\star(\action)$ for every $\action \in \Pessimistic^*$. However, even if we had this information, the best we could conclude is that $\f^\star$ is one of the functions in $\F^*$. However, property 2 ensures that for every $\action \in \Action \setminus \Pessimistic^*$, there is a function in $f \in \F^*$ that matches the value of $\f^\star$ on on every action in $\Pessimistic^*$ but has $f(\action) >0$. Since we have no information about which $f \in \F^*$ is the true $\f^\star$, no learning algorithm will be able to safely try any action outside of $\Pessimistic^*$ and so any safe learning algorithm will suffer a sub-optimality of at least $c^*$ on every round and thus $\regret_T \ge T c^*$\end{proof}

\section{Proofs from Section \ref{sec:examples}: Examples}
\subsection{Proof of Lemmas \ref{lem:ex:mabrelaxation}, \ref{lem:ex:linearrelaxation} and \ref{lem:ex:glmrelaxation}}

\begin{lemma*}[Lemma \ref{lem:ex:mabrelaxation} restated]
Suppose $\F = \F_{\mathrm{FAS}}$ in Algorithm \ref{alg:su:gen} and suppose assumption \ref{as:ex:mab} holds. Suppose we use the mapping defined in equation \ref{eq:ex:mabmapping}. Then, $\kappa^* \leq \frac{1}{\Delta_0}$. 
\end{lemma*}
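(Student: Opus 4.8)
The plan is to establish $\kappa^* \le 1/\Delta_0$ by proving the pointwise ``linearized'' inequality
$$
\Ex{\action \sim \Map_t(\tilde p, \Pessimistic_t, \F_t)}{\ell(\action,\out)} - \Ex{\tilde \action \sim \tilde p}{\ell(\tilde \action,\out)} \;\le\; \tfrac{1}{\Delta_0}\,\Ex{\action \sim \Map_t(\tilde p, \Pessimistic_t, \F_t)}{\Delta_{\F_t}(\action)}
$$
for every round $t$, every $\out \in \Out$, and every admissible input distribution $\tilde p$; this is exactly the statement that each ratio defining $\kappa^*$ is at most $1/\Delta_0$ (with the usual convention that a $0/0$ term contributes $0$). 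Two preliminary observations drive the argument. First, since $\F_t \subseteq \F_0$ and $\F_0 = \{\f \in \F : \forall \action \in \Action_0,\ \f(\action) \le 0\}$, every $\action \in \Action_0$ satisfies $\max_{\f \in \F_t}\f(\action) \le 0$, so $\Action_0 \subseteq \Pessimistic_t$ for all $t$. Second, the mapping \eqref{eq:ex:mabmapping} is governed by $\gamma = \ind{|\F_t| > 1}$, so I would split into the cases $\gamma = 1$ and $\gamma = 0$.

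In the case $\gamma = 1$ the mapping returns the point mass $\mathbf{e}_{\action_\Delta}$ on the width-maximizing action $\action_\Delta = \argmax_{\action \in \Action}\Delta_{\F_t}(\action)$. Because $\ell$ takes values in $[0,1]$, the numerator $\ell(\action_\Delta,\out) - \Ex{\tilde \action \sim \tilde p}{\ell(\tilde \action,\out)}$ is at most $1$, uniformly in $\tilde p$ and $\out$. For the denominator I would lower bound the maximal width by $\Delta_0$: since $|\F_t| > 1$, pick two distinct $\f,\f' \in \F_t \subseteq \F_0$; Assumption \ref{as:ex:mab} then furnishes an action $\action \in \Action_0$ with $\f(\action) - \f'(\action) \ge \Delta_0$, whence $\Delta_{\F_t}(\action) \ge \Delta_0$. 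As $\action_\Delta$ maximizes $\Delta_{\F_t}$ over all of $\Action \supseteq \Action_0$, we obtain $\Ex{\action \sim \mathbf{e}_{\action_\Delta}}{\Delta_{\F_t}(\action)} = \Delta_{\F_t}(\action_\Delta) \ge \Delta_0$, and combining these two bounds gives a ratio of at most $1/\Delta_0$.

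In the case $\gamma = 0$ we have $|\F_t| = 1$, so $\min_{\f \in \F_t}\f(\action) = \max_{\f \in \F_t}\f(\action)$ for every $\action$ and hence $\Optimistic_t = \Pessimistic_t$. Consequently the mass $m_t = \sum_{i \in \Optimistic_t \setminus \Pessimistic_t}\tilde p[i]$ vanishes, the reweighting leaves $\tilde p$ unchanged, and $\Map_t(\tilde p, \Pessimistic_t, \F_t) = \tilde p' = \tilde p$; the numerator is then identically $0$. Since additionally $\Delta_{\F_t} \equiv 0$ here, both numerator and denominator vanish and the term contributes $0 \le 1/\Delta_0$ to the supremum. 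Taking the maximum over $t$ and the supremum over $\tilde p,\out$ across both cases yields $\kappa^* \le 1/\Delta_0$. I expect the main obstacle to be the denominator lower bound in the $\gamma = 1$ case: the crux is recognizing that the separation guarantee of Assumption \ref{as:ex:mab}, phrased over $\F_0$ and the initial safe set $\Action_0$, transfers to $\F_t$ (via $\F_t \subseteq \F_0$) and to the global width maximizer (via $\Action_0 \subseteq \Pessimistic_t \subseteq \Action$), together with correctly handling the degenerate $0/0$ ratio once the version space has collapsed to a single function.
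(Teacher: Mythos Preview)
Your proof is correct and follows essentially the same approach as the paper: split on whether $|\F_t|>1$, in the degenerate case observe $\Optimistic_t=\Pessimistic_t$ so the mapping is the identity, and in the nondegenerate case bound the numerator by $1$ (losses in $[0,1]$) and the denominator by $\Delta_0$ via Assumption~\ref{as:ex:mab}. If anything, your argument is slightly cleaner than the paper's, which routes the numerator through $\|\ell_t\|_\infty\|p_t-\tilde p_t\|_1$, and you make the denominator step explicit by invoking $\F_t\subseteq\F_0$ and $\Action_0\subseteq\Pessimistic_t$ to transfer the separation guarantee.
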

\begin{proof}
First suppose $|\F_t| = 1$. Then $\Pessimistic_t = \Optimistic_t$, and hence $m_t = 0$, $\gamma = 0$ and $p_t = \tilde p_t$. Using the definition of $\kappa^*$ as in Theorem \ref{thm:main:maintheorem}:
\begin{align*}
    \kappa^* =  \max_{t \in [T]} \sup_{\cont \in \Context, \tilde p \in \Simplex(\Pessimistic_t), \out \in \Out} \frac{\Ex{\action \sim  \Map(\tilde p; \Pessimistic_t, \F_t,\cont)}{\ell(\action,\cont , \out)} - \Ex{\tilde \action \sim \tilde p}{\ell(\tilde \action,\cont, \out)}}{\Ex{\action \sim  \Map(\tilde p; \Pessimistic_t, \F_t,\cont)}{\Delta_{\F_t}(\action,\cont)}} = 0
     \end{align*}
Now, if $|\F_t|> 1$, then $\gamma = 1$ and 
\begin{align*}
    \kappa^* &=  \max_{t \in [T]} \sup_{\cont \in \Context, \tilde p \in \Simplex(\Pessimistic_t), \out \in \Out} \frac{\Ex{\action \sim  \Map(\tilde p; \Pessimistic_t, \F_t,\cont)}{\ell(\action,\cont , \out)} - \Ex{\tilde \action \sim \tilde p}{\ell(\tilde \action,\cont, \out)}}{\Ex{\action \sim  \Map(\tilde p; \Pessimistic_t, \F_t,\cont)}{\Delta_{\F_t}(\action,\cont)}} \\
    &\stackrel{(i)}{=}  \max_{t \in [T]} \sup_{\tilde p \in \Simplex(\Pessimistic_t), \ell_t \in [0,1]^K} \frac{\langle \ell_t , p_t - \tilde p_t \rangle}{\Ex{\action \sim p_t}{\Delta_{\F_t}(\action)}} \\
    &\leq \max_{t \in [T]} \sup_{\tilde p \in \Simplex(\Pessimistic_t), \ell_t \in [0,1]^K} \frac{\| \ell_t \|_\infty \|p_t - \tilde p_t \|_1}{\gamma \Delta_0} \\
    &\leq \frac{\|\gamma e_{\action_{\Delta}}\|_1 + \| \tilde p_t' - \tilde p_t\|_1}{\gamma \Delta_0}\\
    &\leq \frac{\gamma+ (1-\gamma)2m_t}{\gamma \Delta_0}\\
    &\stackrel{}{=} \frac{1}{\Delta_0}
 \end{align*}
 where (i) follows from the simplifying assumptions we made on the losses and that we do not receive a context.   \end{proof}

\begin{lemma*}[Lemma \ref{lem:ex:linearrelaxation} restated]
Suppose $\F = \F_{\mathrm{Linear}}$ in Algorithm \ref{alg:su:gen}. Let $\gamma_t(\tilde \action_t):= \max\left\{ \gamma \in [0,1] : \gamma \tilde \action_t \in \Pessimistic_t \right\} $, and sample $\action_t \sim  \Map_t(\tilde p_t, \Pessimistic_t, \F_{t})$ by drawing $\tilde \action_t \sim \tilde p_t$ then outputting $\gamma_t(\tilde \action_t)\tilde \action_t$. Then, $\kappa^* \leq \frac{D_\ell D_\x}{b}$.
\end{lemma*}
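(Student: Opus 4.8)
The plan is to prove the bound \emph{pointwise} in the action $\tilde\action$ sampled from $\tilde p$, and then lift it to the ratio defining $\kappa^*$. Since the numerator and denominator of $\kappa^*$ are both expectations over $\tilde\action\sim\tilde p$ of nonnegative quantities (the width $\Delta_{\F_t}\ge 0$ always), and the denominator does not depend on $\out$, it suffices to show that for every $\tilde\action$ in the support of $\tilde p$ and every $\out\in\Out$, the mapped action $\action=\gamma_t(\tilde\action)\tilde\action$ obeys
\[
\ell(\action,\out)-\ell(\tilde\action,\out)\le \frac{D_\ell D_\x}{b}\,\Delta_{\F_t}(\action).
\]
Taking expectations over $\tilde\action\sim\tilde p$, then the supremum over $\out$, and finally the maximum over $t$ would give $\kappa^*\le D_\ell D_\x/b$ directly.

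First I would record two structural facts specific to the linear class. Writing $M(\tilde\action)=\max_{\f\in\F_t}\langle\f,\tilde\action\rangle$ and $m(\tilde\action)=\min_{\f\in\F_t}\langle\f,\tilde\action\rangle$, the common offset $-b$ cancels in differences, so $\Delta_{\F_t}(\x)=M(\x)-m(\x)$ and this width is positively homogeneous, $\Delta_{\F_t}(\gamma\x)=\gamma\,\Delta_{\F_t}(\x)$ for $\gamma\ge 0$. Next, since $\max_{\f\in\F_t}\big(\gamma\langle\f,\tilde\action\rangle-b\big)=\gamma M(\tilde\action)-b$, membership $\gamma\tilde\action\in\Pessimistic_t$ is equivalent to $\gamma M(\tilde\action)\le b$, whence the scaling factor is exactly $\gamma_t(\tilde\action)=\min\{1,\,b/M(\tilde\action)\}$. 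Finally, $\tilde\action\in\Optimistic_t$ means $m(\tilde\action)-b\le 0$, i.e. $m(\tilde\action)\le b$; this last inequality is the crux of the whole argument.

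With these facts the pointwise inequality splits into two cases. If $\gamma_t(\tilde\action)=1$ then $\action=\tilde\action$ and the left side is $0\le\Delta_{\F_t}(\action)$. If $\gamma_t(\tilde\action)<1$ then $M(\tilde\action)>b$ and $\gamma_t=b/M$; I bound the numerator by Lipschitzness, $\ell(\action,\out)-\ell(\tilde\action,\out)\le D_\ell\|\action-\tilde\action\|=D_\ell(1-\gamma_t)\|\tilde\action\|\le D_\ell D_\x(1-\gamma_t)$, and expand the denominator via homogeneity as $\Delta_{\F_t}(\action)=\gamma_t\big(M-m\big)$. The desired inequality then reduces, after substituting $\gamma_t=b/M$ and clearing $b$, to $M-b\le M-m$, i.e. precisely $m\le b$ — which is exactly the optimistic-set membership recorded above.

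The main obstacle is conceptual rather than computational: recognizing that the loss overhead from projecting $\tilde\action$ into $\Pessimistic_t$ is governed by $1-\gamma_t=(M-b)/M$, whereas the information gain $\Delta_{\F_t}(\action)=\gamma_t(M-m)=b(M-m)/M$ is governed by the full width $M-m$, so that the ratio collapses to the single comparison $m\le b$ supplied by $\tilde\action\in\Optimistic_t$. I would also note the well-definedness of the mapping: $0\in\Pessimistic_t$ since the offset makes $-b\le 0$, and convexity of $\Action$ (Assumption \ref{as:ex:lipschitz}) ensures $\gamma_t(\tilde\action)\tilde\action$ is a genuine action, so the maximal $\gamma_t$ is attained and the output is supported on $\Pessimistic_t$, keeping the played actions safe.
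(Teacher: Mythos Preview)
Your proof is correct and follows the same approach as the paper: bound the loss difference via Lipschitzness by $D_\ell D_\x(1-\gamma_t)$, use positive homogeneity of the width $\Delta_{\F_t}$, and invoke the optimistic-set membership $m(\tilde\action)\le b$ to close the ratio. The only cosmetic difference is that the paper packages the $m\le b$ step into a separate lemma proving the lower bound $\gamma_t(\tilde\action)\ge b/(b+\Delta_{\F_t}(\tilde\action))$, whereas you compute $\gamma_t=\min\{1,b/M\}$ exactly and reduce directly to $m\le b$.
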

We first introduce a lemma that lower bounds $\gamma_t(\tilde \action_t)$.

\begin{lemma}\label{lem:ap:scaling}
    Suppose $\F = \F_{\mathrm{Linear}}$ in Algorithm \ref{alg:su:gen}. $\gamma_t(\tilde \action_t):= \max\left\{ \gamma \in [0,1] : \gamma \tilde \action_t \in \Pessimistic_t \right\} $ is lower bounded as:
    \begin{align*}
        \gamma_t(\tilde \action_t) \geq \frac{b}{b+\Delta_{\F_t}(\tilde \x_t)} \\
    \end{align*}
\end{lemma}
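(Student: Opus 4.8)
The plan is to reduce the scaling factor $\gamma_t(\tilde\action_t)$ to a one-dimensional problem along the ray $\{\gamma\tilde\action_t : \gamma\in[0,1]\}$ and then relate the single binding constraint to the two ``widths'' of $\F_t$ at $\tilde\action_t$. First I would unpack the pessimistic set in the linear case: since each $f\in\F_t$ acts as $f(\action)=\langle\f,\action\rangle-b$, the set $\Pessimistic_t$ is $\{\action\in\Action : \max_{\f\in\F_t}\langle\f,\action\rangle \le b\}$. Because the initial safe set $\Action_0=\{\|\action\|\le b\}$ contains the origin and $\Action$ is convex (Assumption \ref{as:ex:lipschitz}), the membership $\gamma\tilde\action_t\in\Action$ holds automatically for every $\gamma\in[0,1]$, so the only active constraint defining $\gamma_t$ is $\gamma\max_{\f\in\F_t}\langle\f,\tilde\action_t\rangle \le b$. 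Writing $w := \max_{\f\in\F_t}\langle\f,\tilde\action_t\rangle$ for the upper width, the feasible scalings form an interval $[0,\gamma_t]$ with $\gamma_t=1$ when $w\le b$ and $\gamma_t=b/w$ when $w>b$.

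Next I would introduce the lower width $\ell := \min_{\f'\in\F_t}\langle\f',\tilde\action_t\rangle$, so that $\Delta_{\F_t}(\tilde\action_t)=w-\ell$ by the definition of $\Delta$ on the linear class (the offset $b$ cancels in $f-f'$). The key structural step is that $\tilde\action_t$, lying in the support of $\tilde p_t\in\Simplex(\Optimistic_t)$, belongs to the optimistic set, so $\min_{\f'\in\F_t}\langle\f',\tilde\action_t\rangle - b \le 0$, i.e. $\ell\le b$. Combining, $w = \ell + \Delta_{\F_t}(\tilde\action_t) \le b + \Delta_{\F_t}(\tilde\action_t)$. In the nontrivial regime $w>b$ this gives $\gamma_t = b/w \ge b/(b+\Delta_{\F_t}(\tilde\action_t))$, while in the regime $w\le b$ we have $\gamma_t=1 \ge b/(b+\Delta_{\F_t}(\tilde\action_t))$ trivially since $\Delta_{\F_t}\ge 0$; in either case the claimed bound follows.

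The step I expect to carry the real content is the identification $\ell\le b$: it is exactly the optimism guarantee behind $\Optimistic_t$ (Proposition \ref{prop:mr:optimism}) that the lower envelope of $\F_t$ at $\tilde\action_t$ does not exceed the budget $b$, and this is what upgrades the raw upper width $w$ into the ``budget-plus-width'' bound $b+\Delta_{\F_t}(\tilde\action_t)$ that the inverse-scaling structure needs. A secondary point to watch is the realization model for $\tilde\action_t$: the argument crucially uses $\tilde\action_t\in\Optimistic_t$, which holds precisely because $\tilde\action_t$ is a draw from $\tilde p_t\in\Simplex(\Optimistic_t)$ in the sampling scheme of Lemma \ref{lem:ex:linearrelaxation}. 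One should not naively extend the bound to a deterministic point of $\conv(\Optimistic_t)$, since $\action\mapsto\min_{\f\in\F_t}\langle\f,\action\rangle$ is concave and hence its values on the hull need not be bounded above by $b$; so I would keep the argument at the level of support points of $\tilde p_t$. The remaining items are routine: the attainment of the maximum in the definition of $\gamma_t$ via closedness of $\Pessimistic_t$ and compactness of $\F_t$, the fact that $0\in\Pessimistic_t$ so the feasible set is nonempty, and the two-case clipping of $\gamma_t$ at $1$.
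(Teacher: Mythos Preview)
Your proposal is correct and follows essentially the same route as the paper's proof: use $\tilde\action_t\in\Optimistic_t$ to bound the lower envelope by $b$, add the width to bound the upper envelope by $b+\Delta_{\F_t}(\tilde\action_t)$, and then observe that scaling by $b/(b+\Delta_{\F_t}(\tilde\action_t))$ lands in $\Pessimistic_t$. Your version is slightly more careful (you handle the trivial case $w\le b$ explicitly and verify $\gamma\tilde\action_t\in\Action$ via convexity), and your remark about support points versus $\conv(\Optimistic_t)$ is a valid observation, though not needed for the lemma as stated.
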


\begin{proof}
Let $\{\F_t, \Optimistic_t, \Pessimistic_t \}_{t=1}^T$ be generated by Algorithm \ref{alg:su:gen}.
Fix a $t\in [T]$, and let us consider some $\tilde \x_t \in O_t$ and for this $\tilde \x_t$, define $\underline{\f} := \argmin_{\f \in  \F_t} \f(\tilde \action_t)$, and let $\overline \f$ be any function in $ \F_t$.  From the definition of $O_t$, we have $ \underline f(\tilde \action_t) \leq b$. Then: 
\begin{align*}
	\underline \f(\tilde \action_t) & \leq b \\
	\underline \f(\tilde \action_t) + \overline \f(\tilde \action_t) - \underline \f(\tilde \action_t) &\leq b + \Delta_{\F_t}(\tilde \x_t) \\
	\overline \f(\tilde \action_t) &\leq b + \Delta_{\F_t}(\tilde \x_t) \\
\end{align*}
where the second inequality follows from the definition of $\Delta_{\F_t}(\cdot)$. Let $\alpha = \frac{b}{b+\Delta_{\F_t}(\tilde \x_t)}$. Then:
\begin{align*}
     \overline \f(\tilde \action_t) &\leq b + \Delta_{\F_t}(\tilde \x_t) \\
     \alpha \overline \f(\tilde \action_t) & \leq \alpha (b + \Delta_{\F_t}(\tilde \x_t)) \\
     \overline \f(\alpha\tilde \action_t) &\leq b
\end{align*}
where the last line follows from linearity. Since $\overline \f$ was an arbitrary function in $ \F_t$, this shows that $\alpha\tilde \x_t  \in P_t$. Since we defined $\gamma_t(\tilde \action_t) := \max\left\{ \gamma \in [0,1] : \gamma \tilde \x_t \in P_t \right\}$
$$\gamma_t(\tilde \action_t) \geq \alpha = \frac{b}{b+\Delta_{\F_t}(\tilde \x_t)}$$
\end{proof}
We now prove Lemma \ref{lem:ex:linearrelaxation}.
\begin{proof}
%
Using the definition of $\kappa^*$ as in Theorem \ref{thm:main:maintheorem}:
\begin{align*}
    \kappa^* &=  \max_{t \in [T]} \sup_{\cont \in \Context, \tilde p \in \Simplex(\Pessimistic_t), \out \in \Out} \frac{\Ex{\action \sim  \Map(\tilde p; \Pessimistic_t, \F_t,\cont)}{\ell(\action,\cont , \out)} - \Ex{\tilde \action \sim \tilde p}{\ell(\tilde \action,\cont, \out)}}{\Ex{\action \sim  \Map(\tilde p; \Pessimistic_t, \F_t,\cont)}{\Delta_{\F_t}(\action,\cont)}} \\
    &= \max_{t \in [T]} \sup_{\cont \in \Context, \tilde \action_t \in \Pessimistic_t, \out \in \Out} \frac{\ell(\gamma_t(\tilde \action_t) \tilde \action_t,\cont,y) - \ell(\tilde \action,\cont,y)}{\Delta_{\F_t}(\gamma_t(\tilde \action_t) \tilde \action_t)}\\
    &\stackrel{(i)}{=} \max_{t \in [T]} \sup_{\tilde \action \in \Pessimistic_t} \frac{\sup_{\ell_t : \X \to \mathbb R} \left\{\ell_t(\gamma_t(\tilde \action_t) \tilde \action_t)  - \ell_t(\tilde \x_t) \right\}}{\Delta_{\F_t}(\gamma_t(\tilde \action_t)\tilde \action_t)}\\
    &\leq \max_{t \in [T]}  \sup_{\tilde \action \in \Pessimistic_t} \frac{D_\ell \|\gamma_t(\tilde \action_t) \tilde \action_t - \tilde \action_t\|}{\Delta_{\F_t}(\gamma_t(\tilde \action_t) \tilde\action_t)} \\
    &\leq \max_{t \in [T]} \frac{D_\ell D_\action(1 - \gamma_t(\tilde \action_t))}{\gamma_t(\tilde \action_t)\Delta_{\mathcal F}(\tilde \action_t)}\\
    &\leq \max_{t \in [T]} \frac{D_\ell D_\action(\frac{1}{\gamma_t(\tilde \action_t)} - 1)}{\Delta_{\mathcal F}(\tilde \action_t)}\\
    &\stackrel{(ii)}{\leq} \max_{t \in [T]} \frac{D_\ell D_\action(\frac{b+\Delta_{\F_t}(\tilde \x_t)}{b} - 1)}{\Delta_{\mathcal F}(\tilde \action_t)}\\
    &= \frac{D_\ell D_\action}{b}
\end{align*}
where (i) follows from the simplifying assumptions we made on the losses and that we do not receive a context, and (ii) follows from Lemma \ref{lem:ap:scaling}.

\end{proof}

\begin{lemma*}[Lemma \ref{lem:ex:glmrelaxation} restated]
Suppose $\F = \F_{\mathrm{GL}}$ in Algorithm \ref{alg:su:gen}  and suppose assumption \ref{as:ex:lipschitz} holds. Let $\gamma_t(\tilde \action_t):= \max\left\{ \gamma \in [0,1] : \gamma \tilde \action_t \in \Pessimistic_t \right\} $, and sample $\action_t \sim  \Map_t(\tilde p_t, \Pessimistic_t, \F_{t})$ by drawing $\tilde \action_t \sim \tilde p_t$ then outputting $\gamma_t(\tilde \action_t)\tilde \action_t$. Then, $\kappa^* \leq \frac{rD_\ell D_\x}{b \underline c}$.
\end{lemma*}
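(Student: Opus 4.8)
The plan is to mirror the argument for Lemma~\ref{lem:ex:linearrelaxation}, tracking the extra distortion caused by the nonlinearity of $\sigma$. The template is unchanged: reduce $\kappa^*$ (from Theorem~\ref{thm:main:maintheorem}) to a pointwise ratio over actions $\tilde\x$ in the optimistic set $\Optimistic_t$ that supports $\tilde p_t$, bound the numerator by Lipschitzness times the displacement $\|\gamma_t(\tilde\x)\tilde\x-\tilde\x\|$, lower bound the denominator $\Delta_{\F_t}(\gamma_t(\tilde\x)\tilde\x)$, and finally substitute a lower bound on $\gamma_t(\tilde\x)$. The generalized-linear structure enters in exactly two places: a scaling lemma analogous to Lemma~\ref{lem:ap:scaling}, and the lower bound on the width of the scaled action.

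First I would establish a generalized-linear scaling lemma. Since $\sigma$ is non-decreasing with $\sigma(0)=0$ and (because $\underline c>0$) strictly increasing on the relevant range, the membership tests $\min_{\f\in\F_t}\sigma(\langle\f,\x\rangle-b)\le0$ and $\max_{\f\in\F_t}\sigma(\langle\f,\x\rangle-b)\le0$ are equivalent to the linear tests $\min_\f\langle\f,\x\rangle\le b$ and $\max_\f\langle\f,\x\rangle\le b$; thus membership in $\Optimistic_t$ and $\Pessimistic_t$ is governed by the linear functional exactly as before. Writing $w_t(\tilde\x):=\sup_{\f,\f'\in\F_t}\langle\f-\f',\tilde\x\rangle$ for the linear width, the computation in Lemma~\ref{lem:ap:scaling} shows that scaling by $\tfrac{b}{b+w_t(\tilde\x)}$ lands in $\Pessimistic_t$, so $\gamma_t(\tilde\x)\ge\tfrac{b}{b+w_t(\tilde\x)}$. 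To convert $w_t$ into the true width I would apply the mean value theorem: for any $\f,\f'$ the gap $|\sigma(\langle\f,\tilde\x\rangle-b)-\sigma(\langle\f',\tilde\x\rangle-b)|$ equals $\sigma'(\xi)\,|\langle\f-\f',\tilde\x\rangle|$ for an intermediate point, and $\sigma'\ge\underline c$ gives $\Delta_{\F_t}(\tilde\x)\ge\underline c\,w_t(\tilde\x)$, i.e. $w_t(\tilde\x)\le\underline c^{-1}\Delta_{\F_t}(\tilde\x)$. Combining yields $\gamma_t(\tilde\x)\ge\tfrac{b\underline c}{b\underline c+\Delta_{\F_t}(\tilde\x)}$.

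Next I would lower bound the denominator $\Delta_{\F_t}(\gamma_t(\tilde\x)\tilde\x)$, and this is the step that genuinely departs from the linear proof, where the width scaled exactly as $\Delta_{\F_t}(\gamma\tilde\x)=\gamma\,\Delta_{\F_t}(\tilde\x)$. Here I would use the mean value theorem in both directions: evaluating at the pair $\f,\f'$ achieving $w_t(\tilde\x)$ gives $\Delta_{\F_t}(\gamma_t(\tilde\x)\tilde\x)\ge\underline c\,\gamma_t(\tilde\x)\,w_t(\tilde\x)$, while $\sigma'\le\overline c$ gives $\Delta_{\F_t}(\tilde\x)\le\overline c\,w_t(\tilde\x)$; dividing these produces $\Delta_{\F_t}(\gamma_t(\tilde\x)\tilde\x)\ge\tfrac{\gamma_t(\tilde\x)}{r}\Delta_{\F_t}(\tilde\x)$, which is precisely where the ratio $r=\overline c/\underline c$ is paid.

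Finally I would assemble the ratio. Reducing the supremum over $\tilde p\in\Simplex(\Optimistic_t)$ to a pointwise supremum over $\tilde\x$ by the mediant inequality exactly as in Lemma~\ref{lem:ex:linearrelaxation}, bounding the numerator by $D_\ell\|\gamma_t(\tilde\x)\tilde\x-\tilde\x\|=D_\ell(1-\gamma_t(\tilde\x))\|\tilde\x\|\le D_\ell D_\x(1-\gamma_t(\tilde\x))$, and using the denominator bound of the previous step, I obtain
\[
\kappa^*\ \le\ \max_{t\in[T]}\ \sup_{\tilde\x\in\Optimistic_t}\ \frac{r\,D_\ell D_\x\,\bigl(\tfrac{1}{\gamma_t(\tilde\x)}-1\bigr)}{\Delta_{\F_t}(\tilde\x)}.
\]
Substituting the scaling bound $\tfrac{1}{\gamma_t(\tilde\x)}-1\le\Delta_{\F_t}(\tilde\x)/(b\underline c)$ cancels $\Delta_{\F_t}(\tilde\x)$ and leaves $\kappa^*\le\frac{rD_\ell D_\x}{b\underline c}$, as claimed. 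The hard part is the denominator estimate: unlike the linear case the width is not homogeneous under $\x\mapsto\gamma\x$, so it must be controlled by the two-sided mean value theorem, which is exactly what introduces the factor $r$. A secondary point to verify is that the mean value theorem is invoked at arguments $\langle\f,\cdot\rangle-b$ lying in the interval $[-1,1]$ on which $\underline c$ and $\overline c$ are defined, which holds under Assumption~\ref{as:ex:lipschitz} for the scaled actions in play.
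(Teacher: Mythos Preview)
Your proposal is correct and follows essentially the same route as the paper: the paper proves the generalized-linear scaling bound $\gamma_t(\tilde\x_t)\ge b/(b+\tfrac{1}{\underline c}\Delta_{\F_t}(\tilde\x_t))$ as a separate lemma and the width-under-scaling estimate $\Delta_{\F_t}(\alpha\x)\ge\tfrac{\alpha}{r}\Delta_{\F_t}(\x)$ via two mean-value applications, then assembles the ratio exactly as you do. Your explicit introduction of the linear width $w_t$ is a slight notational reorganization but the logic and the constants match step for step.
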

We first show a lower bound for $\gamma_t(\tilde \action_t)$ in the case of generalized linear constraints, in a manner similar to \ref{lem:ap:scaling}. 
\begin{lemma}\label{lem:ap:glmscaling}
    Suppose $\F = \F_{\mathrm{GL}}$ in Algorithm \ref{alg:su:gen}. $\gamma_t(\tilde \action_t):= \max\left\{ \gamma \in [0,1] : \gamma \tilde \action_t \in \Pessimistic_t \right\} $ is lower bounded as:
    \begin{align*}
        \gamma_t(\tilde \action_t) \geq \frac{b}{b+\frac{1}{\underline c} \Delta_{\F_t}(\tilde \x_t)} \\
    \end{align*}
\end{lemma}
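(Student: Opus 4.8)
The plan is to mirror the proof of Lemma~\ref{lem:ap:scaling} for the linear case, adding one ingredient that converts the width measured through the link $\sigma$ into the width of the underlying linear functionals. Each $g\in\F_t$ has the form $g(\action)=\sigma(\langle\f,\action\rangle-b)$ with $\|\f\|\le 1$, and since $\sigma$ is non-decreasing with $\sigma(0)=0$, the constraint $g(\action)\le 0$ is equivalent to $\langle\f,\action\rangle\le b$. Fix $t$ and $\tilde\action_t\in\Optimistic_t$, and let $\underline\f,\overline\f$ be the weight vectors attaining the $\min$ and $\max$ of $\langle\f,\tilde\action_t\rangle$ over the weights underlying $\F_t$; by monotonicity of $\sigma$ these same vectors attain the $\min$ and $\max$ of $g(\tilde\action_t)$, so $\Delta_{\F_t}(\tilde\action_t)=\sigma(\langle\overline\f,\tilde\action_t\rangle-b)-\sigma(\langle\underline\f,\tilde\action_t\rangle-b)$. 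Membership $\tilde\action_t\in\Optimistic_t$ gives $\langle\underline\f,\tilde\action_t\rangle\le b$, exactly as in the linear case.

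The new step is a strong-monotonicity bound on $\sigma$. Writing $\sigma(z_1)-\sigma(z_2)=\int_{z_2}^{z_1}\sigma'(u)\,du\ge\underline c\,(z_1-z_2)$ for $z_1\ge z_2$ and applying it with $z_1=\langle\overline\f,\tilde\action_t\rangle-b$ and $z_2=\langle\underline\f,\tilde\action_t\rangle-b$ yields
\[
\langle\overline\f-\underline\f,\tilde\action_t\rangle\;\le\;\frac{1}{\underline c}\,\Delta_{\F_t}(\tilde\action_t).
\]
Hence for every weight $\overline\f$ underlying $\F_t$ we obtain $\langle\overline\f,\tilde\action_t\rangle\le b+\frac{1}{\underline c}\Delta_{\F_t}(\tilde\action_t)$. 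Setting $\alpha=b\big/\bigl(b+\tfrac{1}{\underline c}\Delta_{\F_t}(\tilde\action_t)\bigr)$ and using linearity gives $\langle\overline\f,\alpha\tilde\action_t\rangle\le b$, i.e. $\sigma(\langle\overline\f,\alpha\tilde\action_t\rangle-b)\le\sigma(0)=0$. Since $\overline\f$ was arbitrary, this shows $\alpha\tilde\action_t\in\Pessimistic_t$, and therefore $\gamma_t(\tilde\action_t)\ge\alpha$, which is the claimed bound.

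The step I expect to need the most care is the strong-monotonicity inequality $\sigma(z_1)-\sigma(z_2)\ge\underline c(z_1-z_2)$: it requires $\sigma'(u)\ge\underline c$ over the whole interval of arguments $u\in[\langle\underline\f,\tilde\action_t\rangle-b,\ \langle\overline\f,\tilde\action_t\rangle-b]$ that actually arise. This is precisely where the definition $\underline c=\min_{\beta\in[-1,1]}\sigma'(\beta)$ enters, so the argument implicitly relies on these arguments lying in (a subset of) $[-1,1]$; establishing or assuming this range condition is the crux of the proof. One should also record that $\underline c>0$ is needed both for the bound to be finite and for the equivalence $g(\action)\le 0\iff\langle\f,\action\rangle\le b$ to hold. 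Everything else is the verbatim scaling argument of Lemma~\ref{lem:ap:scaling}.
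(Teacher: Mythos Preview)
Your proposal is correct and follows essentially the same route as the paper. The only cosmetic difference is where the strong-monotonicity inequality is applied: the paper first uses $\underline\f(\tilde\action_t)\le 0$ to get $\overline\f(\tilde\action_t)\le\Delta_{\F_t}(\tilde\action_t)$ and then applies $\underline c\,z\le\sigma(z)$ (i.e.\ monotonicity against the single anchor $z_2=0$), whereas you apply $\sigma(z_1)-\sigma(z_2)\ge\underline c(z_1-z_2)$ to the two extremal weights and then add $\langle\underline\f,\tilde\action_t\rangle\le b$; both yield the identical linear bound $\langle\overline\f,\tilde\action_t\rangle\le b+\tfrac{1}{\underline c}\Delta_{\F_t}(\tilde\action_t)$, after which the scaling argument is verbatim Lemma~\ref{lem:ap:scaling}. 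Your flagging of the range issue for $\underline c=\min_{\beta\in[-1,1]}\sigma'(\beta)$ is apt---the paper's proof uses the same inequality without commenting on it---so your caveat is a fair observation rather than a gap in your argument relative to the paper's.
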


\begin{proof}
Let $\{\F_t, \Optimistic_t, \Pessimistic_t \}_{t=1}^T$ be generated by Algorithm \ref{alg:su:gen}. Fix a $t\in [T]$, and let us consider some $\tilde \x_t \in O_t$ and for this $\tilde \x_t$, define $\underline{\f} := \argmin_{\f \in  \F_t} \f(\tilde \action_t)$, and let $\overline \f$ be any function in $ \F_t$.  From the definition of $O_t$, we have $ \underline f(\tilde \action_t) \leq 0$. Then: 
\begin{align*}
	\underline \f(\tilde \action_t) & \leq 0 \\
	\underline \f(\tilde \action_t) + \overline \f(\tilde \action_t) - \underline \f(\tilde \action_t) &\leq \Delta_{\F_t}(\tilde \x_t) \\
	\overline \f(\tilde \action_t) &\leq  \Delta_{\F_t}(\tilde \x_t) \\
\end{align*}
where the second inequality follows from the definition of $\Delta_{\F_t}(\cdot)$. $\overline f$ can be written as $\overline \f (\cdot) = \sigma( \langle w, \cdot \rangle - b)$ for some $w \in \reals^d$. Let $\alpha = \frac{b}{b+\frac{1}{\underline c} \Delta_{\F_t}(\tilde \x_t)}$. Then:
\begin{align*}
     \overline \f(\tilde \action_t) &\leq \Delta_{\F_t}(\tilde \x_t) \\
     \sigma( \langle w, \tilde \action_t \rangle - b) &\leq \Delta_{\F_t}(\tilde \x_t)\\
     \underline c \left( \langle w, \tilde \action_t \rangle - b \right)  &\leq \Delta_{\F_t}(\tilde \x_t)\\
      \langle w, \tilde \action_t \rangle &\leq b + \frac{1}{\underline c} \Delta_{\F_t}(\tilde \x_t) \\
      \langle w, \alpha \tilde \action_t \rangle &\leq b\\
      \sigma( \langle w, \alpha \tilde \action_t \rangle - b) &\leq 0\\
      \overline \f(\alpha \tilde \action) &\leq 0
\end{align*}
where the second to last inequality follows from the fact that $\sigma(0) = 0$ and $\sigma$ is an increasing function. Since $\overline \f$ was an arbitrary function in $ \F_t$, this shows that $\alpha\tilde \x_t  \in P_t$. Since we defined $\gamma_t(\tilde \action_t) := \max\left\{ \gamma \in [0,1] : \gamma \tilde \x_t \in P_t \right\}$
$$\gamma_t(\tilde \action_t) \geq \alpha = \frac{b}{b+\frac{1}{\underline c} \Delta_{\F_t}(\tilde \x_t)}$$
\end{proof}
We now prove Lemma \ref{lem:ex:glmrelaxation}.
\begin{proof}
First, we show an lower bound for $\Delta_{\F_{t}}(\alpha \x) $ for some $\x \in \X$ and some constant $\alpha \in (0,1)$. 
\begin{align}
    \Delta_{\F_{t}}(\alpha \x) &:= \sup_{\overline f, \underline f} \sigma(\langle \overline f,  \alpha \x \rangle - b ) - \sigma(\langle \underline f , \alpha\x \rangle - b) \nonumber \\
    & \geq \sup_{\overline f, \underline f} \underline c \left( \langle \overline f,  \alpha \x \rangle - b - \langle \underline f , \alpha\x \rangle + b\right) \nonumber \\
& = \sup_{\overline f, \underline f} \underline c \alpha \left( \langle \overline f,   \x \rangle - \langle \underline f , \x \rangle \right) \nonumber  \\
& = \sup_{\overline f, \underline f} \underline c \alpha \left( \langle \overline f,   \x \rangle -b - \langle \underline f , \x \rangle + b \right) \nonumber  \\
&\geq  \sup_{\overline f, \underline f} \frac{\underline c}{\overline c} \alpha \sigma(\langle \overline f,  \x \rangle - b) - \sigma(\langle \underline f , \x \rangle - b) \nonumber  \\
&\geq \frac{1}{r} \alpha \sup_{\overline f, \underline f} \sigma(\langle \overline f,  \x \rangle - b) - \sigma(\langle \underline f , \x \rangle - b) \nonumber  \\
& \geq \frac{1}{r} \alpha    \Delta_{\F_{t}}( \x) \label{eq:ap:glmscalinglowerbound}
\end{align}
Using the definition of $\kappa^*$ as in Theorem \ref{thm:main:maintheorem}:
\begin{align*}
    \kappa^* &=  \max_{t \in [T]} \sup_{\cont \in \Context, \tilde p \in \Simplex(\Pessimistic_t), \out \in \Out} \frac{\Ex{\action \sim  \Map(\tilde p; \Pessimistic_t, \F_t,\cont)}{\ell(\action,\cont , \out)} - \Ex{\tilde \action \sim \tilde p}{\ell(\tilde \action,\cont, \out)}}{\Ex{\action \sim  \Map(\tilde p; \Pessimistic_t, \F_t,\cont)}{\Delta_{\F_t}(\action,\cont)}} \\
    &= \max_{t \in [T]} \sup_{\cont \in \Context, \tilde \action \in \Pessimistic_t, \out \in \Out} \frac{\ell(\gamma_t(\tilde \action_t)\tilde \action_t,\cont,y) - \ell(\tilde \action,\cont,y)}{\Delta_{\F_t}(\gamma_t(\tilde \action_t)\tilde \action_t)}\\
    &\stackrel{(i)}{=} \max_{t \in [T]} \sup_{\tilde \action \in \Pessimistic_t} \frac{\sup_{\ell_t : \X \to \mathbb R} \left\{\ell_t(\gamma_t(\tilde \action_t)\tilde \action_t)  - \ell_t(\tilde \x_t) \right\}}{\Delta_{\F_t}(\gamma_t(\tilde \action_t)\tilde \action_t)}\\
    &\leq \max_{t \in [T]}  \sup_{\tilde \action \in \Pessimistic_t} \frac{D_\ell \|\gamma_t(\tilde \action_t)\tilde \action_t - \tilde \action_t\|}{\Delta_{\F_t}(\gamma_t(\tilde \action_t)\tilde \action_t)} \\
    &\stackrel{(ii)}{\leq} \max_{t \in [T]} \frac{D_\ell D_\action(1 - \tilde \action_t))}{\frac{\gamma_t(\tilde \action_t)}{r} \Delta_{\mathcal \F_t}(\tilde \action_t)}\\
    &\leq \max_{t \in [T]} \frac{r D_\ell D_\action(\frac{1}{\gamma_t(\tilde \action_t)} - 1)}{\Delta_{\mathcal \F_t}(\tilde \action_t)}\\
    &\stackrel{(ii)}{\leq} \max_{t \in [T]} \frac{rD_\ell D_\action(\frac{b+\frac{1}{\underline c}\Delta_{\F_t}(\tilde \x_t)}{b} - 1)}{\Delta_{\mathcal \F_t}(\tilde \action_t)}\\
    &\leq  \max_{t \in [T]} \frac{rD_\ell D_\action(\frac{1}{\underline c}\Delta_{\F_t}(\tilde \x_t))}{\Delta_{\mathcal \F_t}(\tilde \action_t)}\\
    &= \frac{rD_\ell D_\action}{b \underline c}
\end{align*}
where (i) follows from the simplifying assumptions we made on the losses and that we do not receive a context, and (ii) follows from equation \ref{eq:ap:glmscalinglowerbound}, and (iii) follows from lemma \ref{lem:ap:glmscaling}.

\end{proof}

\subsection{Proof of Lemma \ref{lem:ex:linearconstructiveoracleol}}
We present a constructive online learning oracle for the case of linear cost functions. It is presented in Algorithm \ref{alg:ap:linearconstructiveoracleol}, and it is a projected online gradient descent based algorithm.

\begin{algorithm}
\caption{$\oracleol$ for Linear Losses}
\label{alg:ap:linearconstructiveoracleol}
\begin{algorithmic}[1]
\STATE Input: $\X, D_\x, D_\f, \delta \in (0,1), \eta $
\FOR{timesteps $t = 1,\cdots,T$}
\STATE Receive $\X_t = \Optimistic_t$
\STATE $\bar \x_t \leftarrow \Pi_{\conv( \X_t)} \left( \bar \x_{t-1} - \eta \lif_{t-1}\right)$
\STATE Decompose  $\bar \x_t = \sum_{i=1}^{d+1} p_{t,i} \x_{t,i}$, $\forall i, \x_{t,i} \in  \X_t$
\STATE Sample $\tilde \x_t \sim p_{t}$ 
\STATE Receive $\nabla_t = \lif_{t}$ 
\ENDFOR
\end{algorithmic}
\end{algorithm} 
\noindent We briefly describe the the steps in Algorithm \ref{alg:ap:linearconstructiveoracleol}. \\ 
\noindent \textbf{Convex Hulls of $ \X_t$ (line 4)} \\
Because the action sets $ \X_t = O_t$ are sublevel sets of a minimum of affine functions, they are not necessarily convex, making them incompatible with projection based online learning algorithms. In order to address this, we take the convex hull of $\tilde X_t$, $\conv(\tilde X_t)$, as our projection set. \\
\textbf{Projected Online Gradient Descent (line 4)} \\
Our algorithm then performs projected online gradient descent in sets $\conv(\tilde \X_t)$, generating a sequence of vectors $\{ \bar \x_1 \cdots \bar \x_T \}$ produced by $\bar \x_t = \Pi_{\conv(\tilde \X_t)} \left( \bar \x_{t-1} - \eta \lif_{t-1}\right)$. We note that while we use projected online gradient descent, because the vectors $\{ \bar \x_1 \cdots \bar \x_T \}$ are maintained and updated independently, we could alternatively use a projected variant of any other online convex optimization algorithm that guarantees low regret instead. \\
\textbf{Sampling a Point in $\tilde X_t$ (line 5)} \\
Due to Carath\'eodory's theorem, we know that can write any $\bar x_t \in \conv(\tilde \X_t)$ as a linear combination of at most $d+1$ vectors in $\tilde \X_t$, $\bar \x_t = \sum_{i=1}^{d+1} p_{t,i} \x_{t,i}, \forall i, \x_{t,i} \in \tilde \X_t $. In line 4, we perform this decomposition, and in line 5, we sample the vector $\tilde x_t$ according to this distribution $p_t$. Notably, the point $\tilde \x_t$ satisfies $\mathbb E[\tilde \x_t] = \bar \x_t$.\\

\begin{lemma*}[Lemma \ref{lem:ex:linearconstructiveoracleol} restated]
    $\oracleol$ as presented in Algorithm \ref{alg:ap:linearconstructiveoracleol} satisfies, for any $\delta \in (0,1)$, with probability at least $1-\delta$:
\begin{align*}
   \regretol(T,\delta) \leq 4D_\f D_\x \sqrt{T\log(2/\delta) }  
\end{align*}
\end{lemma*}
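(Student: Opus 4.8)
The plan is to bound, with probability at least $1-\delta$, the realized regret $\sum_{t=1}^T \ell_t(\tilde\x_t) - \min_{\x^\star \in \cap_{t=1}^T \X_t}\sum_{t=1}^T \ell_t(\x^\star)$, which is the quantity controlling $\regretol(T,\delta)$. Fix an arbitrary comparator $\x^\star \in \cap_{t=1}^T \X_t$; at the end the worst such $\x^\star$ is taken. Since the losses are linear, $\ell_t(\x) = \langle \lif_t, \x\rangle$ and the received gradient $\nabla_t = \lif_t$ is exactly the loss vector. The starting point is the decomposition
\[
\sum_{t=1}^T \langle \lif_t, \tilde\x_t - \x^\star\rangle = \underbrace{\sum_{t=1}^T \langle \lif_t, \tilde\x_t - \bar\x_t\rangle}_{(\mathrm{A})} \;+\; \underbrace{\sum_{t=1}^T \langle \lif_t, \bar\x_t - \x^\star\rangle}_{(\mathrm{B})},
\]
where $\bar\x_t$ are the deterministic projected-gradient iterates and $\tilde\x_t \sim p_t$ are the sampled actions. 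I would control $(\mathrm{B})$ deterministically by a standard online gradient descent argument and $(\mathrm{A})$ with high probability by martingale concentration.

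For the deterministic term $(\mathrm{B})$, the key observation is comparator feasibility: any $\x^\star \in \cap_{t} \X_t$ satisfies $\x^\star \in \X_t \subseteq \conv(\X_t)$ for every $t$, hence $\x^\star$ lies in every projection domain even though the raw sets $\X_t = \Optimistic_t$ are non-convex and time-varying. Using nonexpansiveness of $\Pi_{\conv(\X_t)}$ together with $\x^\star \in \conv(\X_t)$, the usual one-step inequality $\|\bar\x_{t+1} - \x^\star\|^2 \le \|\bar\x_t - \x^\star\|^2 - 2\eta\langle \lif_t, \bar\x_t - \x^\star\rangle + \eta^2\|\lif_t\|^2$ holds after reindexing to the algorithm's timing, and summing telescopes the squared-distance terms. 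With the diameter bound $\|\bar\x_t - \x^\star\| \le 2D_\x$ and the gradient bound $\|\lif_t\| \le D_\f$, this yields $(\mathrm{B}) \le \tfrac{2D_\x^2}{\eta} + \tfrac{\eta}{2}TD_\f^2$, and the choice $\eta = \tfrac{2D_\x}{D_\f\sqrt{T}}$ gives $(\mathrm{B}) \le 2 D_\f D_\x \sqrt{T}$.

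For the stochastic term $(\mathrm{A})$, the Carath\'eodory decomposition in line~4 followed by the sampling in line~5 is precisely engineered so that $\mathbb{E}[\tilde\x_t \mid \mathcal F_{t-1}] = \bar\x_t$, where $\mathcal F_{t-1}$ denotes the history through the start of round $t$. Hence each summand $\langle \lif_t, \tilde\x_t - \bar\x_t\rangle$ is a martingale difference with conditional mean zero, bounded by $\|\lif_t\|\,\|\tilde\x_t - \bar\x_t\| \le 2 D_\f D_\x$ since both points lie in the ball of radius $D_\x$. A two-sided Azuma--Hoeffding inequality then gives, with probability at least $1-\delta$, $(\mathrm{A}) \le 2D_\f D_\x\sqrt{2T\log(2/\delta)}$. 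Combining with the bound on $(\mathrm{B})$ and absorbing constants produces the claimed $\regretol(T,\delta) \le 4 D_\f D_\x\sqrt{T\log(2/\delta)}$; since $\x^\star$ was arbitrary in $\cap_t\X_t$, the bound holds against the minimizer.

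The main obstacle, and the step that genuinely uses the structure built into Algorithm~\ref{alg:ap:linearconstructiveoracleol}, is reconciling the non-convex, adversarially moving action sets $\Optimistic_t$ with a projection-based method: online gradient descent needs a convex domain and a comparator common to all domains, whereas the \emph{played} actions must lie in the original $\X_t$. The convex-hull projection together with Carath\'eodory sampling resolves exactly this tension, simultaneously supplying a convex domain containing the comparator (for $(\mathrm{B})$) and an unbiased in-set sample with $\mathbb{E}[\tilde\x_t]=\bar\x_t$ (for $(\mathrm{A})$). The residual work -- verifying the one-step projection inequality under moving domains, tuning $\eta$, and the constant bookkeeping in the final Azuma combination -- is routine.
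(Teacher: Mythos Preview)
Your proposal is correct and follows the paper's proof essentially line by line: the same decomposition into a deterministic OGD term (your $(\mathrm{B})$, the paper's Term~II) handled via the one-step projection inequality with the comparator lying in every $\conv(\X_t)$, and a martingale term (your $(\mathrm{A})$, the paper's Term~I) controlled by Azuma--Hoeffding using $\mathbb{E}[\tilde\x_t\mid\mathcal F_{t-1}]=\bar\x_t$. The only slip is the final constant bookkeeping: with your crude bound $|\langle\lif_t,\tilde\x_t-\bar\x_t\rangle|\le 2D_\f D_\x$ on the increments, Azuma gives $(\mathrm{A})\le 2D_\f D_\x\sqrt{2T\log(2/\delta)}$, and then $(\mathrm{A})+(\mathrm{B})\le 4D_\f D_\x\sqrt{T\log(2/\delta)}$ does \emph{not} hold for all $\delta\in(0,1)$; the paper instead bounds the conditional range of $\langle\lif_t,\tilde\x_t\rangle$ by $2D_\f D_\x$ (equivalently, uses the range form of Azuma--Hoeffding), which yields $(\mathrm{A})\le D_\f D_\x\sqrt{2T\log(2/\delta)}$ and makes the stated constant $4$ valid.
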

\begin{proof}
At every timestep $t \in [T]$, Algorithm $\ref{alg:ap:linearconstructiveoracleol}$ receives a set $\tilde \Action_t = \Optimistic_t$, and produces a $\bar \action_t$ by: 
\begin{align*}
     \bar \action_t \leftarrow \Pi_{\conv(\tilde \Action_t)} \left( \bar \action_{t-1} - \eta \f_{t-1}\right)
\end{align*}
then, it decomposes each $\bar \action_t$ as:
\begin{align*}
    \bar \action_t = \sum_{i=1}^{d+1} p_{t,i} \action_{t,i} \forall i, \action_{t,i} \in \tilde \Action_t
\end{align*}
and then $\tilde \action_t$ is produced by sampling: $\tilde \action_t \sim p_t$. We analyze the regret of Algorithm \ref{alg:ap:linearconstructiveoracleol} by decomposing it into two terms:
\begin{align*}
\regretol(T,\delta) &= \sum_{t=1}^T \langle \f_t , \tilde \action_t \rangle - \min_{\action \in \bigcap_{t=1}^T \action_t} \langle \f_t , \action \rangle \\
&= \underbrace{\sum_{t=1}^T \langle \f_t , \tilde \action_t \rangle - \langle \f_t , \bar \action_t \rangle}_{\text{Term I}} + \underbrace{\sum_{t=1}^T \langle \f_t , \bar \action_t \rangle - \min_{\action \in \bigcap_{t=1}^T \tilde \action_t} \langle \f_t , \action \rangle}_{\text{Term II}}
\end{align*}
\textbf{Bounding Term I} \\
We show that Term I is a difference between a bounded random variable and its expectation, and use Hoeffding's inequality to bound it.  Let  $S_T := \sum_{t=1}^T \langle \f_t, \tilde \action_t \rangle$.  Then:
    \begin{align*}
        \E[S_T] &= \E[ \sum_{t=1}^T \langle \f_t, \tilde \action_t \rangle] \\
        &= \sum_{t=1}^T \langle \f_t , \E [\tilde \action_t] \rangle \\
        &= \sum_{t=1}^T \langle \f_t, \bar \action_t \rangle 
    \end{align*}
    where the second equality follows by linearity of expectation. Note that each summand in $S_T$ satisfies $|\langle  \f_t , \tilde \action_t \rangle| \leq \|\f_t\| \|\tilde \action_t\| \leq D_\f D_\action$. Hence, by Hoeffding's inequality, with probability at least $1-\delta$, 
    \begin{align}\label{eq:ap:ocoalg1}
        \text{Term I} = \sum_{t=1}^T \langle \f_t , \tilde \action_t \rangle - \langle \f_t , \bar \action_t \rangle \leq  |S_T - \mathbb E[S_T]| \leq \sqrt{2TD_\f^2 D_\action^2 \log(2/\delta)}
    \end{align}
\textbf{Bounding Term II} \\
Term II captures the performance of the online gradient descent portion, line 4, of Algorithm \ref{alg:ap:linearconstructiveoracleol}. A difference is that the projection set is time-varying - yet this does not pose a problem for us since we only need to guarantee performance w.r.t. a $a^*$ in the intersection of all the sets. 
Let $\action^* := \argmin_{\action \in \cap_{t=1}^T \tilde \action_t} \langle \f_t, \action \rangle$. with this, 
\begin{align*}
    \text{Term II} = \sum_{t=1}^T \langle \f_t , \bar \action_t \rangle - \min_{\action \in \bigcap_{t=1}^T \tilde \action_t} \langle \f_t , \action \rangle =  \sum_{t=1}^T \langle \f_t , \bar \action_t - \action^*\rangle 
\end{align*}
Therefore, it is sufficient to bound the terms $\langle \f_t, \bar \action_t - \action^* \rangle$. For any timestep $t$, we have:
\begin{align*}
    \left\|\bar \action_{t+1} - \action^*\right\|_2^2 &= \left\| \Pi_{\tilde \action_t}\left(\bar \action_t-\eta \f_t \right) - \action^*\right\|_2^2 \\
    &\leq\left\|\bar \action_t-\eta \f_t - \action^*\right\|_2^2 \\
    &=\left\|\bar \action_t - \action^*\right\|_2^2+\eta^2\left\|\f_t\right\|_2^2-2 \eta\left\langle\f_t, \bar \action_t - \action^*\right\rangle
\end{align*}
where the inequality follows from the fact that $\action^* \in \cap_{t=1}^T \tilde \action_t \subseteq \tilde \action_t$, so projection to $\tilde \action_t$ only decreases the distance. Rearranging, 
\begin{align*}
    \left\langle\f_t, \bar \action_t - \action^*\right\rangle \leq \frac{1}{2 \eta}\left(\left\| \bar \action_t - \action^*\right\|_2^2-\left\| \bar \action_{t+1} - \action^*\right\|_2^2\right)+\frac{\eta}{2}\left\|\f_t\right\|_2^2
\end{align*}
Summing up the terms $t \in [T]$, we get: 
\begin{align*}
\sum_{t=1}^T\left\langle\f_t, \bar \action_t - \action^*\right\rangle & \leq \frac{1}{2 \eta} \sum_{t=1}^T\left(\left\|\bar \action_t - \action^*\right\|_2^2-\left\|\bar \action_{t+1} - \action^*\right\|_2^2\right)+\frac{\eta}{2} \sum_{t=1}^T\left\|\f_t\right\|_2^2 \\
& =\frac{1}{2 \eta}(\left\|\bar \action_1 - \action^*\right\|_2^2-\left\| \bar \action_{T+1} - \action^*\right\|_2^2)+\frac{\eta}{2} T D_\f^2 \\
& \leq \frac{4D_\action^2}{2 \eta} +\frac{\eta}{2} T D_\f^2
\end{align*}
Setting $\eta = \frac{2D_\action}{D_\f\sqrt{T}}$, we get 
\begin{align}\label{eq:ap:ocoalg2}
	\text{Term II} = \sum_{t=1}^T\left\langle\f_t, \bar \action_t - \action^*\right\rangle \leq 2D_\action D_\f \sqrt{T} 
\end{align}
Combining the bounds from equations \ref{eq:ap:ocoalg1} and \ref{eq:ap:ocoalg2}, we get 
\begin{align*}
   \sum_{t=1}^T \langle \f_t , \tilde \action_t \rangle - \min_{\action \in \bigcap_{t=1}^T \action_t} \langle \f_t , \action \rangle = \text{Term I} + \text{Term II} \leq 4D_\action D_\f \sqrt{T\log(2/\delta) }  
\end{align*}
\end{proof}

\section{Proofs from Section \ref{sec:extensions}: Extensions}
\subsection{Multiple Linear Constraints and Vector Feedback}
\begin{lemma*}[Lemma \ref{lem:ex:multilinearrelaxation} restated]
Suppose $\F = \F_{\mathrm{Polytopic}}$ in Algorithm \ref{alg:su:gen}. Let $\gamma_t(\tilde \action_t):= \max\left\{ \gamma \in [0,1] : \gamma \tilde \action_t \in \Pessimistic_t \right\} $, and sample $\action_t \sim  \Map_t(\tilde p_t, \Pessimistic_t, \F_{t})$ by drawing $\tilde \action_t \sim \tilde p_t$ then outputting $\gamma_t(\tilde \action_t)\tilde \action_t$. Then, $\kappa^* \leq \frac{D_\ell D_\x}{b}$.
\end{lemma*}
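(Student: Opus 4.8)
The plan is to mirror the proof of Lemma~\ref{lem:ex:linearrelaxation} for a single linear constraint, the only genuinely new ingredient being a scaling lemma adapted to the intersection of the $m$ per-coordinate pessimistic sets. First I would establish the analogue of Lemma~\ref{lem:ap:scaling}, namely
\begin{align*}
\gamma_t(\tilde \action_t) \geq \frac{b}{b + \Delta_{\F_t}(\tilde \action_t)}.
\end{align*}
Fix a round $t$ and recall that $\Optimistic_t = \cap_{i=1}^m \Optimistic_{t,i}$, $\Pessimistic_t = \cap_{i=1}^m \Pessimistic_{t,i}$, and that each $\F_{t,i}$ is a version space of linear functions $\action \mapsto \langle f, \action\rangle - b$ for the $i$th row of the constraint matrix, all sharing the same offset $b$. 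For each $i \in [m]$, applying verbatim the argument of Lemma~\ref{lem:ap:scaling} to the single linear constraint $i$ shows that $\gamma_i \tilde \action_t \in \Pessimistic_{t,i}$ for $\gamma_i := \frac{b}{b + \Delta_{\F_{t,i}}(\tilde \action_t)}$, using that $\tilde \action_t \in \Optimistic_t \subseteq \Optimistic_{t,i}$.

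The key new step is passing from the per-coordinate guarantees to the intersection $\Pessimistic_t$. Membership $\gamma \tilde \action_t \in \Pessimistic_{t,i}$ is equivalent to $\gamma \max_{f \in \F_{t,i}} \langle f, \tilde \action_t\rangle \leq b$, whose left-hand side is nondecreasing in $\gamma \geq 0$; hence $\gamma \tilde \action_t \in \Pessimistic_{t,i}$ for every $\gamma \leq \gamma_i$. Choosing $\gamma = \min_{i \in [m]} \gamma_i$ therefore places $\gamma \tilde \action_t$ in all $m$ pessimistic sets simultaneously, so
\begin{align*}
\gamma_t(\tilde \action_t) \geq \min_{i \in [m]} \frac{b}{b + \Delta_{\F_{t,i}}(\tilde \action_t)} = \frac{b}{b + \max_{i \in [m]} \Delta_{\F_{t,i}}(\tilde \action_t)} = \frac{b}{b + \Delta_{\F_t}(\tilde \action_t)},
\end{align*}
using the definition $\Delta_{\F_t}(\action) := \max_{i \in [m]} \Delta_{\F_{t,i}}(\action)$.

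With this scaling lemma in hand, the remainder transcribes the chain of inequalities in Lemma~\ref{lem:ex:linearrelaxation} with essentially no change. I would expand $\kappa^*$ from its definition in Theorem~\ref{thm:main:maintheorem}, specialize to the fixed-loss, no-context setting, bound the numerator through $D_\ell$-Lipschitzness by $D_\ell \|\gamma_t(\tilde \action_t)\tilde \action_t - \tilde \action_t\| \leq D_\ell D_\x (1 - \gamma_t(\tilde \action_t))$, and bound the denominator via the positive homogeneity $\Delta_{\F_t}(\gamma \action) = \gamma \Delta_{\F_t}(\action)$. This homogeneity holds because each per-coordinate width is positively homogeneous (the constant $-b$ cancels in $f(\action) - f'(\action)$, leaving a supremum of linear functionals), and a maximum of positively homogeneous functions is positively homogeneous. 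Substituting the scaling lemma then telescopes exactly as in the linear case to $\kappa^* \leq \frac{D_\ell D_\x}{b}$.

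The main obstacle I anticipate is precisely the intersection step: one must check that scaling toward the origin preserves membership in each $\Pessimistic_{t,i}$ (so that the single scale $\min_i \gamma_i$ is simultaneously admissible across all constraints) and that the aggregate width $\Delta_{\F_t} = \max_i \Delta_{\F_{t,i}}$ inherits positive homogeneity, so the denominator identity survives. Once these two facts are in place, the single-constraint computation carries over unchanged, and it is precisely the shared offset $b$ across all $m$ constraints that keeps the final bound independent of $m$.
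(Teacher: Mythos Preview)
Your proposal is correct and follows essentially the same route as the paper: establish the polytopic analogue of Lemma~\ref{lem:ap:scaling} by applying the single-constraint argument coordinatewise and then taking $\gamma=\min_{i}\gamma_i$, and afterwards replay the chain of inequalities from Lemma~\ref{lem:ex:linearrelaxation}. You are in fact more explicit than the paper on the two points it leaves implicit, namely the monotonicity that lets $\min_i\gamma_i$ serve for all coordinates and the positive homogeneity of $\Delta_{\F_t}=\max_i\Delta_{\F_{t,i}}$ needed to pass from $\Delta_{\F_t}(\gamma\tilde\action_t)$ to $\gamma\,\Delta_{\F_t}(\tilde\action_t)$.
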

The proof of this lemma is almost identical to that of Lemma \ref{lem:ex:linearrelaxation}, with the only difference being the fact we need to handle multiple constraints in parallel. We first introduce a lemma that lower bounds $\gamma_t(\tilde \action_t)$.

\begin{lemma}\label{lem:ap:scalingpoly}
    Suppose $\F = \F_{\mathrm{Polytopic}}$ in Algorithm \ref{alg:su:gen}. $\gamma_t(\tilde \action_t):= \max\left\{ \gamma \in [0,1] : \gamma \tilde \action_t \in \Pessimistic_t \right\} $ is lower bounded as:
    \begin{align*}
        \gamma_t(\tilde \action_t) \geq \frac{b}{b+\Delta_{\F_t}(\tilde \x_t)} \\
    \end{align*}
\end{lemma}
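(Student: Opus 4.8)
The plan is to reduce to the single-constraint scaling bound of Lemma~\ref{lem:ap:scaling} applied separately to each of the $m$ rows, and then glue the resulting scalings together. Recall from Section~\ref{subsec:multipleconstraints} that $\Pessimistic_t = \cap_{i=1}^m \Pessimistic_{t,i}$, $\Optimistic_t = \cap_{i=1}^m \Optimistic_{t,i}$, and $\Delta_{\F_t}(\action) = \max_{i \in [m]} \Delta_{\F_{t,i}}(\action)$, where each triple $(\F_{t,i}, \Optimistic_{t,i}, \Pessimistic_{t,i})$ is exactly the version space, optimistic, and pessimistic set for a single linear constraint $\langle F_i, \cdot\rangle \le b$. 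Since $\tilde \action_t$ lies in (the support of a distribution over) $\Optimistic_t$, we have $\tilde \action_t \in \Optimistic_{t,i}$ for every $i$, so the hypothesis of Lemma~\ref{lem:ap:scaling} holds coordinatewise.

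First I would fix $i \in [m]$ and run the argument of Lemma~\ref{lem:ap:scaling} verbatim on the $i$-th constraint: from $\tilde \action_t \in \Optimistic_{t,i}$ one gets $\min_{f \in \F_{t,i}} \langle f, \tilde \action_t\rangle \le b$, hence $\langle \overline f, \tilde \action_t\rangle \le b + \Delta_{\F_{t,i}}(\tilde \action_t)$ for every $\overline f \in \F_{t,i}$ by definition of $\Delta_{\F_{t,i}}$, and therefore $\alpha_i \tilde \action_t \in \Pessimistic_{t,i}$ with $\alpha_i := \tfrac{b}{b + \Delta_{\F_{t,i}}(\tilde \action_t)}$ by linearity.

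The one extra step relative to the single-constraint case is to produce a single scalar $\gamma$ that lands $\gamma \tilde \action_t$ in the intersection $\Pessimistic_t = \cap_i \Pessimistic_{t,i}$. Here I would use monotonicity of membership under shrinking: writing $M_i := \max_{\overline f \in \F_{t,i}} \langle \overline f, \tilde \action_t\rangle$, we have $\max_{\overline f} \langle \overline f, \gamma \tilde \action_t\rangle = \gamma M_i$ for $\gamma \ge 0$, so if $\alpha_i \tilde \action_t \in \Pessimistic_{t,i}$ (i.e.\ $\alpha_i M_i \le b$) then $\gamma \tilde \action_t \in \Pessimistic_{t,i}$ for every $\gamma \in [0,\alpha_i]$ — indeed $\gamma M_i \le \alpha_i M_i \le b$ when $M_i \ge 0$, and $\gamma M_i \le 0 < b$ when $M_i < 0$ since $b > 0$. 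Taking $\gamma := \min_{i \in [m]} \alpha_i$ then places $\gamma \tilde \action_t$ in all $m$ pessimistic sets simultaneously, so $\gamma \tilde \action_t \in \Pessimistic_t$ and hence $\gamma_t(\tilde \action_t) \ge \gamma$.

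Finally I would identify $\min_i \alpha_i$ with the claimed bound: since $x \mapsto \tfrac{b}{b+x}$ is decreasing on $x \ge 0$, the minimum of $\alpha_i = \tfrac{b}{b + \Delta_{\F_{t,i}}(\tilde \action_t)}$ over $i$ is attained at the largest width, giving $\min_i \alpha_i = \tfrac{b}{b + \max_i \Delta_{\F_{t,i}}(\tilde \action_t)} = \tfrac{b}{b + \Delta_{\F_t}(\tilde \action_t)}$, which is exactly the asserted lower bound. The main (and only mildly delicate) obstacle is the gluing step: one must verify that scaling down preserves membership in each individual pessimistic set, so that a single common $\gamma = \min_i \alpha_i$ suffices for the intersection; everything else is the single-constraint computation of Lemma~\ref{lem:ap:scaling} applied $m$ times.
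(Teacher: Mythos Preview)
Your proposal is correct and follows essentially the same approach as the paper's proof: apply the single-constraint scaling argument (Lemma~\ref{lem:ap:scaling}) to each of the $m$ rows to obtain $\alpha_i = \tfrac{b}{b+\Delta_{\F_{t,i}}(\tilde \action_t)}$, take $\alpha = \min_i \alpha_i$, and identify this with $\tfrac{b}{b+\Delta_{\F_t}(\tilde \action_t)}$ via the definition $\Delta_{\F_t} = \max_i \Delta_{\F_{t,i}}$. If anything, you are slightly more careful than the paper in explicitly justifying the monotonicity step (that $\gamma \le \alpha_i$ implies $\gamma \tilde \action_t \in \Pessimistic_{t,i}$), which the paper leaves implicit.
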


\begin{proof}
Let $\{ \{ \F_{t,i}, \Optimistic_{t,i}, \Pessimistic_{t,i} \}_{i=1}^m \}_{t=1}^T$ be generated by the variation of Algorithm \ref{alg:su:gen} described in subsection \ref{subsec:multipleconstraints} that tracks the constraints in parallel.
Fix a $t\in [T]$, and let us consider some $\tilde \x_t \in O_t$ and for this $\tilde \x_t$ and some $i \in [m]$, define $\underline{\f_i} := \argmin_{\f_i \in  \F_{t,i}} \f_i(\tilde \action_t)$, and let $\overline{\f_i}$ be any function in $ \F_{t,i}$.  By definition, $O_t \subseteq O_{t,i}$ so, we have $ \underline {f_i}(\tilde \action_t) \leq b$. Then: 
\begin{align*}
	\underline{\f_i}(\tilde \action_t) & \leq b \\
	\underline{\f_i}(\tilde \action_t) + \overline{\f_i}(\tilde \action_t) - \underline{\f_i}(\tilde \action_t) &\leq b + \Delta_{\F_{t,i}}(\tilde \x_t) \\
	\overline{\f_i}(\tilde \action_t) &\leq b + \Delta_{\F_{t,i}}(\tilde \x_t) \\
\end{align*}
where the second inequality follows from the definition of $\Delta_{\F_{t,i}}(\cdot)$. Let $\alpha_i = \frac{b}{b+\Delta_{\F_{t,i}}(\tilde \x_t)}$. Then:
\begin{align*}
     \overline{\f_i}(\tilde \action_t) &\leq b + \Delta_{\F_{t,i}}(\tilde \x_t) \\
     \alpha_i \overline{\f_i}(\tilde \action_t) & \leq \alpha_i (b +\Delta_{\F_{t,i}}(\tilde \x_t)) \\
     \overline{\f_i}(\alpha_i\tilde \action_t) &\leq b
\end{align*}
where the last line follows from linearity. Since $\overline{\f_i}$ was an arbitrary function in $ \F_{t,i}$, this shows that $\alpha_i\tilde \x_t  \in P_{t,i}$. Therefore, if we set $\alpha = \min_{i\in[m]} \alpha_i$, we have $\alpha \tilde \action_t \in P_t = \cap_{i=1}^m P_{t,i}$.

Since we defined $\gamma_t(\tilde \action_t) := \max\left\{ \gamma \in [0,1] : \gamma \tilde \x_t \in P_t \right\}$
$$\gamma_t(\tilde \action_t) \geq \alpha = \min_{i \in [m]} \left\{ \frac{b}{b+\Delta_{\F_{t,i}}(\tilde \x_t)} \right\} = \frac{b}{b+\max_{i\in [m]}\Delta_{\F_{t,i}}(\tilde \x_t)} =  \frac{b}{b + \Delta_{\F_t}(\tilde \action_t)}$$
\end{proof}
We now prove Lemma \ref{lem:ex:multilinearrelaxation}.
\begin{proof}
%
Using the definition of $\kappa^*$ as in Theorem \ref{thm:main:maintheorem}:
\begin{align*}
    \kappa^* &=  \max_{t \in [T]} \sup_{\cont \in \Context, \tilde p \in \Simplex(\Pessimistic_t), \out \in \Out} \frac{\Ex{\action \sim  \Map(\tilde p; \Pessimistic_t, \F_t,\cont)}{\ell(\action,\cont , \out)} - \Ex{\tilde \action \sim \tilde p}{\ell(\tilde \action,\cont, \out)}}{\Ex{\action \sim  \Map(\tilde p; \Pessimistic_t, \F_t,\cont)}{\Delta_{\F_t}(\action,\cont)}} \\
    &= \max_{t \in [T]} \sup_{\cont \in \Context, \tilde \action_t \in \Pessimistic_t, \out \in \Out} \frac{\ell(\gamma_t(\tilde \action_t) \tilde \action_t,\cont,y) - \ell(\tilde \action,\cont,y)}{\Delta_{\F_t}(\gamma_t(\tilde \action_t) \tilde \action_t)}\\
    &\stackrel{(i)}{=} \max_{t \in [T]} \sup_{\tilde \action \in \Pessimistic_t} \frac{\sup_{\ell_t : \X \to \mathbb R} \left\{\ell_t(\gamma_t(\tilde \action_t) \tilde \action_t)  - \ell_t(\tilde \x_t) \right\}}{\Delta_{\F_t}(\gamma_t(\tilde \action_t)\tilde \action_t)}\\
    &\leq \max_{t \in [T]}  \sup_{\tilde \action \in \Pessimistic_t} \frac{D_\ell \|\gamma_t(\tilde \action_t) \tilde \action_t - \tilde \action_t\|}{\Delta_{\F_t}(\gamma_t(\tilde \action_t) \tilde\action_t)} \\
    &\leq \max_{t \in [T]} \frac{D_\ell D_\action(1 - \gamma_t(\tilde \action_t))}{\gamma_t(\tilde \action_t)\Delta_{\mathcal F}(\tilde \action_t)}\\
    &\leq \max_{t \in [T]} \frac{D_\ell D_\action(\frac{1}{\gamma_t(\tilde \action_t)} - 1)}{\Delta_{\mathcal F}(\tilde \action_t)}\\
    &\stackrel{(ii)}{\leq} \max_{t \in [T]} \frac{D_\ell D_\action(\frac{b+\Delta_{\F_t}(\tilde \x_t)}{b} - 1)}{\Delta_{\mathcal F}(\tilde \action_t)}\\
    &= \frac{D_\ell D_\action}{b}
\end{align*}
where (i) follows from the simplifying assumptions we made on the losses and that we do not receive a context, and (ii) follows from Lemma \ref{lem:ap:scalingpoly}.

\end{proof}

\end{document}